\newtheorem{theorem}{Theorem}[section]
\newtheorem{corollary}[theorem]{Corollary}
\newtheorem{lemma}[theorem]{Lemma}
\newtheorem{proposition}[theorem]{Proposition}
\newtheorem{definition}[theorem]{Definition}
\newtheorem{problem}{Problem}
\newcommand{\eps}{\ensuremath{\epsilon}\xspace}
\renewcommand{\tilde}{\widetilde}
\renewcommand{\hat}{\widehat}
\renewcommand{\bar}{\overline}
\newcommand{\R}{\mathbb{R}}
\DeclareMathOperator{\tr}{tr}
\newcommand{\norm}[1]{\left\|#1\right\|}
\newcommand{\normone}[1]{\norm{#1}_1}
\newcommand{\normtwo}[1]{\norm{#1}_2}
\newcommand{\norminf}[1]{\norm{#1}_\infty}
\newcommand{\fnorm}[1]{{\norm{#1}}_F}
\providecommand{\expect}[2]{\ensuremath{\ifthenelse{\equal{#1}{}}{\mathbb{E}}{\mathbb{E}_{#1}}\!\left[#2\right]}\xspace}
\providecommand{\prob}[2]{\ensuremath{\ifthenelse{\equal{#1}{}}{\Pr}{\Pr_{#1}}\!\left[#2\right]}\xspace}
\newcommand{\mini}[1]{\mbox{minimize} & {#1} &\\}
\newcommand{\maxi}[1]{\mbox{maximize} & {#1 } & \\}
\newcommand{\st}{\mbox{subject to} }
\newcommand{\con}[1]{&#1 & \\}
\newenvironment{lp}{\begin{equation}  \begin{array}{lll}}{\end{array}\end{equation}}
\newenvironment{lp*}{\begin{equation*}  \begin{array}{lll}}{\end{array}\end{equation*}}
\newcommand{\inner}[1]{\langle #1\rangle}
\newcommand{\NN}{\mathcal{N}}
\newcommand{\mus}{\mu^\star}
\newcommand{\abs}[1]{\left|{#1}\right|}
\DeclareMathOperator{\poly}{poly}
\DeclareMathOperator{\cov}{cov}
\newcommand{\OPT}{\mathrm{OPT}\xspace}
\newcommand{\mone}{\mathbf{1}}
\providecommand{\var}[1]{\ensuremath{{\rm Var}[#1]}\xspace}
\begin{document}

\title{Faster Algorithms for High-Dimensional \\ Robust Covariance Estimation}

\author{
Yu Cheng\thanks{Duke University. Email: \texttt{yucheng@cs.duke.edu}}\\
\and
Ilias Diakonikolas\thanks{University of Southern California. Email: \texttt{diakonik@usc.edu}}\\
\and
Rong Ge\thanks{Duke University. Email: \texttt{rongge@cs.duke.edu}}\\
\and
David P. Woodruff\thanks{Carnegie Mellon University. Email: \texttt{dwoodruf@cs.cmu.edu}}
}
\date{}


\maketitle

\makeatletter{}
\begin{abstract}
We study the problem of estimating the covariance matrix of a high-dimensional distribution when a small constant fraction of the samples can be arbitrarily corrupted.
Recent work gave the first polynomial time algorithms for this problem with near-optimal error guarantees for several natural structured distributions. 
Our main contribution is to develop faster algorithms for this problem whose running time nearly matches that of computing the empirical covariance.

Given $N = \tilde{\Omega}(d^2/\epsilon^2)$ samples from a $d$-dimensional Gaussian distribution, an $\epsilon$-fraction of which may be arbitrarily corrupted, our algorithm runs in time $\tilde{O}(d^{3.26})/\poly(\epsilon)$ and approximates the unknown covariance matrix to optimal error up to a logarithmic factor.
Previous robust algorithms with comparable error guarantees all have runtimes $\tilde{\Omega}(d^{2 \omega})$ when $\epsilon = \Omega(1)$, where $\omega$ is the exponent of matrix multiplication.
We also provide evidence that improving the running time of our algorithm may require new algorithmic techniques.
\end{abstract} 

\makeatletter{}
\section{Introduction}
\label{sec:intro}

Estimating the covariance matrix of a high-dimensional distribution (covariance estimation) 
is one of the most fundamental statistical tasks (see, e.g.,~\cite{bickel2008a, bickel2008b} and references therein). 
For a range of well-behaved distribution families, the empirical covariance matrix 
is known to converge to the true covariance matrix at an optimal statistical rate (with respect to various norms). 
For concreteness, suppose we are given $N$ independent samples from a centered 
Gaussian $\mathcal{N}(\mathbf{0}, \Sigma)$ on $\mathbb{R}^d$, with unknown 
covariance $\Sigma$, and we want to estimate $\Sigma$ with respect to the Frobenius norm. 
It is well-known (see, e.g., Section 4 of~\cite{cai2010} for an explicit reference) 
that the empirical covariance matrix has expected Frobenius error at most 
$O(d/\sqrt{N}) \cdot \|\Sigma\|_2$ from $\Sigma$, 
where $\| \cdot \|_2$ denotes the spectral norm; 
and this bound is the best possible, within a constant factor, among all $N$-sample estimators. 
Equivalently, after $N = \Omega(d^2/\eps^2)$ samples, the empirical covariance will have 
Frobenius error at most $\eps \cdot \|\Sigma\|_2$ with high constant probability.
This gives a computationally and statistically efficient covariance estimator for this fundamental setting.
(By Lemma~\ref{lem:fast-mult}, the empirical covariance can be computed in time $O(d^{3.26}/\eps^2)$,
which is the best known bound to date.)

In this paper, we study the outlier robust setting when a small constant fraction of our samples
can be arbitrarily corrupted. We work in the following model of corruptions (see, e.g.,~\cite{DKKLMS16})
that generalizes Huber's contamination model~(\cite{Huber64}):
 
\begin{definition}[$\eps$-Corruption] \label{def:adv}
Given $\eps > 0$ and a distribution family $\mathcal{D}$ on $\R^d$, 
the \emph{adversary} operates as follows: The algorithm specifies some number of samples $N$, 
and $N$ samples $X_1, X_2, \ldots, X_N$ are drawn from some (unknown) $D \in \mathcal{D}$.
The adversary is allowed to inspect the samples, removes $\eps N$ of them, 
and replaces them with arbitrary points. This set of $N$ points is then given to the algorithm.
We say that a set of samples is {\em $\eps$-corrupted}
if it is generated by the above process.
\end{definition}

More concretely, we study the following problem:
Given an $\eps$-corrupted set of $N$ samples from an unknown 
$\mathcal{N}(\mathbf{0}, \Sigma)$ on $\mathbb{R}^d$,
we want to compute an accurate estimate of $\Sigma$ in Frobenius norm 
(or in a stronger, affine invariant version that guarantees small total variation distance).
Note that even a single corrupted point can arbitrarily compromise the behavior 
of the empirical covariance matrix. Classical and more recent work in statistics 
has obtained minimax optimal robust covariance estimators. For example,~\cite{Rous85} 
proposed the minimum volume ellipsoid -- a natural generalization of the interquartile range -- 
and showed that it is provably robust in high-dimensions. More recently,~\cite{chen2018} proposes 
a (similarly robust) generalization of Tukey's median~(\cite{Tukey75}) for the covariance matrix.
We note that the information-theoretically optimal error for robustly estimating the covariance
of $\mathcal{N}(\mathbf{0}, \Sigma)$ in Frobenius norm is $O(\eps+d/\sqrt{N}) \cdot \|\Sigma\|_2$. 
That is, for $N = \Omega(d^2/\eps^2)$, one can estimate the covariance  
to accuracy $\Theta(\eps) \cdot \|\Sigma\|_2$, which is almost as well as in the non-contaminated setting.
Unfortunately, these estimators are hard to compute in general, i.e., their runtime 
scales exponentially with the dimension.

Recent work in TCS~(\cite{DKKLMS16, LaiRV16}) gave the first polynomial time robust estimators for a range of 
high-dimensional statistical tasks, including mean and covariance estimation. Since these initial 
papers~(\cite{DKKLMS16, LaiRV16}), a growing body of subsequent works have obtained polynomial-time 
robust learning algorithms for a variety of unsupervised and supervised high-dimensional models.
(See Section~\ref{ssec:prior} for more related work.) 

It should be noted that the aforementioned robust estimators have already been useful in exploratory data analysis. Specifically,~\cite{DKK+17} evaluated the robust covariance estimators of~\cite{DKKLMS16}~and~\cite{LaiRV16} 
to detect patterns in a well-known genetic dataset~(\cite{novembre2008genes}) in the presence
of corruptions. Perhaps surprisingly, it was found that the robust algorithms developed 
for $\mathcal{N}(\mathbf{0}, \Sigma)$ outperformed all previous approaches on this real dataset, essentially 
matching the setting where there are no corruptions at all.

Once a polynomial-time algorithm for a computational problem has been discovered,
the next step is to focus on designing asymptotically faster algorithms for the problem
-- with linear time as the ultimate goal. We note that the aforementioned robust estimators~(\cite{DKKLMS16, LaiRV16})
are significantly slower than their non-robust counterparts (e.g., computing the empirical mean/covariance),
hence may not be scalable when the dimension is very high. This raises the following natural question:
\vspace{-0.1cm}
\begin{quote}
{\em Can we design robust estimators that are as efficient as their non-robust analogues?} 
\end{quote}
\vspace{-0.1cm}
This direction was initiated in~\cite{ChengDR19} who gave a robust mean estimation algorithm
with runtime $\tilde{O}(N d)/\poly(\eps)$,\footnote{The $\tilde{O}(\cdot)$ notation hides logarithmic factors in its argument.} nearly matching the runtime of computing the empirical mean (when $\eps$ is constant). 

In this work, we continue this line of investigation. At a high-level, our main contribution is the first robust covariance 
estimator whose running time nearly matches that of computing the empirical covariance matrix.
Moreover, we provide evidence that the runtime of our algorithm may not be improvable
with current algorithmic techniques.
In more detail, on input an $\eps$-corrupted set of $N = \tilde{O}(d^2/\eps^2)$ samples from  
$\mathcal{N}(\mathbf{0}, \Sigma)$ on $\mathbb{R}^d$, our algorithm runs in time $\tilde{O}(d^{3.26})/\poly(\eps)$,
and outputs a covariance estimate with near-optimal error guarantee, matching the one in~\cite{DKKLMS16}
(see Theorem~\ref{thm:main}). Our algorithm uses the primal-dual framework of~\cite{ChengDR19} recently developed for robust mean estimation, 
with a number of crucial twists that are required 
for the more challenging task of covariance estimation (see Section~\ref{sec:overview}).

For the sake of direct comparison, we note that the filtering-based robust covariance
estimator of~\cite{DKKLMS16} has runtime $\Omega(N^2 d) = \Omega(d^5)/\poly(\eps)$. On the other hand, 
the recursive dimension-halving estimator of~\cite{LaiRV16} requires $\Omega (\log d)$ SVD computations
of a $d^2 \times d^2$ ``covariance'' matrix, hence has runtime $\Omega(d^{2 \omega})$, where $\omega$ 
is the exponent of matrix multiplication. (Plugging in the best-known value for $\omega$~(\cite{Gall14a}) gives a runtime
of $\Omega(d^{4.74})$.)

We note that the runtime of our algorithm, while being super-linear, 
essentially matches the best-known runtime to compute the empirical covariance matrix (see Section~\ref{sec:hardness-matrix}). 
Moreover, we provide evidence (Section~\ref{sec:hardness-weights}) that this runtime may be a bottleneck even for 
the weaker task of obtaining an implicit representation to the output (by reweighing the input samples).
It should be noted that all known computationally efficient robust estimators fit in this framework.

\subsection{Our Results} \label{ssec:results}

Our first algorithmic result states that we can robustly estimate the covariance matrix 
of a high-dimensional Gaussian within multiplicative, dimension-independent error,
with running time that almost matches that of computing the empirical covariance matrix.

\begin{theorem}[Robust Covariance Estimation (Multiplicative)]
\label{thm:main}
Let $D \sim \mathcal{N}(\mathbf{0}, \Sigma)$ be a zero-mean unknown covariance Gaussian on $\R^d$.
Let $\kappa$ denote the condition number of $\Sigma$.
Let $0 < \eps < \eps_0$ for some universal constant $\eps_0$.
Given as input an $\eps$-corrupted set of $N = \tilde \Omega(d^2 / \eps^2)$ samples drawn from $D$,
  there is an algorithm (Algorithm~\ref{alg:main}) that runs in time $\tilde O(d^{3.26} \log(\kappa))/\poly(\eps)$ and outputs $\hat \Sigma \in \R^{d \times d}$ such that 
with high probability it holds $\| \Sigma^{-1/2} \hat \Sigma \Sigma^{-1/2} - I \|_F \le O(\eps \log(1/\eps))$.
\end{theorem}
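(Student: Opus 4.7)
The plan is to reduce robust covariance estimation to a structured robust mean estimation problem on the $d^2$-dimensional space of symmetric matrices, and then adapt the primal-dual framework of~\cite{ChengDR19}. For each sample $X_i$, form the matrix $Y_i = X_i X_i^T$; for uncorrupted Gaussian samples $\expect{}{Y_i} = \Sigma$, and the Gaussian fourth-moment identity gives $\var{\tr(A(Y_i-\Sigma))} = 2\|\Sigma^{1/2} A \Sigma^{1/2}\|_F^2$, so the lifted problem is mean estimation with a tractable matrix-valued variance proxy. First I would establish the standard stability lemma: with $N = \tilde\Omega(d^2/\eps^2)$ samples, the uncorrupted $Y_i$ admit a bounded-weight reweighting whose empirical mean and second moment are within $O(\eps)\|\Sigma\|_2$ (in the appropriate spectral sense) of the population quantities, even after the adversary removes an $\eps$-fraction.

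\textbf{Primal-dual weighting scheme.} Following~\cite{ChengDR19}, I would cast the search for good weights as a packing SDP over $w \in \Delta_N$ constrained to be near uniform, whose feasibility condition is that the ``residual fourth-moment'' operator $T_w : A \mapsto \sum_i w_i \tr((Y_i-\hat\Sigma) A)\,(Y_i-\hat\Sigma)$ on symmetric matrices has its top eigenvalue close to the Gaussian baseline. The inner loop alternates between (i) approximately computing the top eigendirection $A^\star$ of $T_w$ minus the baseline by a logarithmic number of power-iteration/Lanczos steps, and (ii) if the associated eigenvalue exceeds $\Theta(\eps)$, downweighting the samples whose scores $\tr((Y_i-\hat\Sigma) A^\star)^2$ are the largest outliers. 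An outer loop maintains a geometrically shrinking overestimate of $\|\Sigma\|_2$ (whitening after each phase), which is the source of the $\log\kappa$ factor in the runtime.

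\textbf{Fast linear algebra.} To keep each inner iteration to $\tilde O(d^{3.26})$, I exploit that $T_w$ is a sum of $N = \tilde\Theta(d^2/\eps^2)$ rank-one operators with the same sample structure as the empirical covariance. A matrix-vector product $T_w(A)$ decomposes as: (i) compute the $N$-vector with entries $X_i^T A X_i$ by forming $A X$ and reading the diagonal of $X^T(AX)$, and (ii) form $X \diag(w \cdot s)\, X^T$. Both operations are rectangular products of a $d \times d^2$ matrix against a $d^2 \times d$ matrix, which by Lemma~\ref{lem:fast-mult} cost $\tilde O(d^{3.26})$ in total. Since approximate top-eigenvalue computations need only $\poly\log(d)$ such products and the outer primal-dual loop converges in $\poly(1/\eps,\log d)$ iterations, the overall runtime matches the claim.

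\textbf{Main obstacle.} I expect the hardest part of the analysis to be controlling the inexactness of the spectral step. Classical filter/reweighing analyses assume an \emph{exact} top eigenvector, but we can only afford approximate ones (otherwise we would pay the $\Omega(d^{2\omega})$ SVD cost of~\cite{LaiRV16}). The potential-function argument -- that each filtering step removes strictly more corrupted than clean mass -- has to be recertified when $A^\star$ is only a constant-factor approximate eigendirection, which requires carefully coupling the power-method error to the filter threshold. A secondary but important subtlety is that corrupted samples can make $\|Y_i\|$ arbitrarily large, blowing up the width of the packing SDP; handling this requires interleaving the spectral filter with a preliminary norm-based truncation and verifying that after all truncations the final Frobenius error stays $O(\eps\log(1/\eps))$.
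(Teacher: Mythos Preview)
Your high-level strategy matches the paper's: reduce to robust mean estimation of $Z_i = X_i \otimes X_i$, iteratively refine a matrix upper bound $\Sigma_t \succeq \Sigma$ (whitening by $\Sigma_t^{-1/2}$ at each step), and exploit the rank-one tensor structure so that each iteration costs $\tilde O(d^{3.26})$ via rectangular matrix multiplication. But there is a real gap in how you reach the claimed $O(\eps\log(1/\eps))$ error.

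\textbf{Missing second refinement phase.} After whitening by $\Sigma_t^{-1/2}$, all you know about the uncorrupted $Z_i$ is that their covariance is \emph{bounded} ($\preceq 2I$, by the Gaussian fourth-moment identity). Robust mean estimation under merely bounded covariance delivers at best $O(\sqrt{\eps})$ error, so each refinement step gives $\Sigma_{t+1} \preceq \Sigma + O(\sqrt{\eps})\,\Sigma_t$, and the iteration stalls at $\|\Sigma_t^{-1/2}\Sigma\Sigma_t^{-1/2}-I\|=O(\sqrt{\eps})$. Your single outer loop (the source of the $\log\kappa$) is exactly this first phase and cannot do better. The paper breaks the barrier with a distinct second phase (Lemma~\ref{lem:second-loop}): once $\Sigma_t$ is $O(\sqrt{\eps})$-close to $\Sigma$, the whitened samples are nearly $\NN(0,I)$, so $\cov(Z)$ is \emph{approximately known} rather than just bounded. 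They then prove a new mean-estimation guarantee (Lemma~\ref{lem:mean-apx-cov}) for subexponential distributions with $\|\cov - I\|_2\le\tau$, achieving error $O(\sqrt{\eps\tau}+\eps\log(1/\eps))$; iterating this for $O(\log\log(1/\eps))$ rounds drives $\tau$ down to $O(\eps\log(1/\eps))$. Your threshold ``eigenvalue exceeds $\Theta(\eps)$'' implicitly assumes the Gaussian baseline is known to $O(\eps)$ accuracy, which is precisely what you do not have until this second phase has run.

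\textbf{On your stated obstacle.} The issue you flag---filtering under an approximate top eigenvector---is not where the real work lies; the~\cite{ChengDR19} primal/dual SDP analysis is already robust to constant-factor slack, and the paper simply reuses it (Lemmas~\ref{lem:wrong-mean-primal-nosol} and~\ref{lem:good-dual-better-nu}). The genuine implementation subtlety sits one level lower: the packing-SDP solver needs, in every MWU round, multiplicative estimates of $\exp(\Psi)\bullet Z_iZ_i^\top$ for \emph{all} $i$ simultaneously, where $\Psi\in\R^{d^2\times d^2}$ cannot even be formed. The paper (Lemma~\ref{lem:mat-exp}) handles this by combining a degree-$\tilde O(1/\eps)$ polynomial approximation to $\exp(\Psi/2)$, Johnson--Lindenstrauss projection, and the transposition principle (Lemma~\ref{lem:lr-mult}) to reduce everything to $\poly(\log d,1/\eps)$ matrix-vector products with $\Psi$, each costing $\tilde O(d^{3.26})$. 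Your sketch of one $T_w(A)$ application is correct, but the oracle the SDP solver actually demands is more involved than a single power-method step.
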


We also develop a related robust covariance estimation algorithm (Algorithm~\ref{alg:additive}) 
with additive error guarantee, whose running time does not depend on the condition number of $\Sigma$.

\begin{theorem}[Robust Covariance Estimation (Additive)]
\label{thm:main-additive}
For the same setting as in Theorem~\ref{thm:main},
there is an algorithm (Algorithm~\ref{alg:additive}) that
runs in time $\tilde O(d^{3.26})/\poly(\eps)$ and outputs $\hat \Sigma \in \R^{d \times d}$ such that 
with high probability it holds $\| \hat\Sigma - \Sigma \|_F \le O(\eps \log(1/\eps)) \normtwo{\Sigma}$.
\end{theorem}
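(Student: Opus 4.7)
The plan is to derive Theorem~\ref{thm:main-additive} from Theorem~\ref{thm:main} by artificially regularizing the covariance so that the $\log(\kappa)$ factor in Theorem~\ref{thm:main} disappears into the $\tilde O$ notation. The reduction has four steps.

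\textbf{Step 1: Rough scale estimate.} First I would compute a constant-factor robust estimate $\hat v$ of $\normtwo{\Sigma}$ in time $\tilde O(Nd) = \tilde O(d^3)/\poly(\eps)$, for instance via a robust trimmed-mean routine for $\tr(\Sigma)$ together with a robust power-iteration variant. The key point is that median or sufficiently trimmed-mean statistics of one-dimensional projections tolerate $\eps$-corruption and can be computed in near-linear time, yielding $\hat v \in [c_1 \normtwo{\Sigma}, c_2 \normtwo{\Sigma}]$ with high probability.

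\textbf{Step 2: Inject isotropic noise.} For each input sample $X_i$, draw an independent $Z_i \sim \NN(\mathbf{0}, \hat v\, I)$ and form $Y_i = X_i + Z_i$. Since the adversary had no access to the freshly drawn $Z_i$, the set $\{Y_i\}$ consists of $(1-\eps)N$ i.i.d.\ samples from $\NN(\mathbf{0}, \Sigma')$ together with $\eps N$ arbitrary points, where $\Sigma' \eqdef \Sigma + \hat v\, I$. Hence $\{Y_i\}$ is a valid $\eps$-corrupted sample set from $\NN(\mathbf{0}, \Sigma')$.

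\textbf{Step 3: Apply Theorem~\ref{thm:main}.} Because $\hat v = \Theta(\normtwo{\Sigma})$, the condition number of $\Sigma'$ is $\kappa' = (\normtwo{\Sigma} + \hat v)/\hat v = O(1)$, so $\log \kappa' = O(1)$. Running Algorithm~\ref{alg:main} on $\{Y_i\}$ returns $\hat\Sigma'$ in time $\tilde O(d^{3.26})/\poly(\eps)$ with
\[
\fnorm{(\Sigma')^{-1/2} \hat\Sigma' (\Sigma')^{-1/2} - I} \le O(\eps \log(1/\eps)).
\]

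\textbf{Step 4: Undo the regularization.} Output $\hat\Sigma \eqdef \hat\Sigma' - \hat v\, I$, so that $\hat\Sigma - \Sigma = \hat\Sigma' - \Sigma'$. Setting $A \eqdef (\Sigma')^{-1/2} \hat\Sigma' (\Sigma')^{-1/2} - I$, we have $\hat\Sigma' - \Sigma' = (\Sigma')^{1/2} A (\Sigma')^{1/2}$, and the elementary bound $\fnorm{B M B^\top} \le \normtwo{B}^2 \fnorm{M}$ gives
\[
\fnorm{\hat\Sigma - \Sigma} \le \normtwo{\Sigma'} \cdot \fnorm{A} \le O(\normtwo{\Sigma}) \cdot O(\eps \log(1/\eps)) = O(\eps \log(1/\eps))\,\normtwo{\Sigma},
\]
as claimed.

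\textbf{Main obstacle.} The only non-routine ingredient is Step~1: producing a constant-factor estimate of $\normtwo{\Sigma}$ under adversarial corruption in nearly-linear time. If a dedicated fast routine is inconvenient to invoke, a clean fallback is a doubling search over candidates $\hat v \in \{2^j\}$ across a $\poly(d/\eps)$ range, running Steps 2--4 for each guess and keeping the output that passes a robust consistency check; this costs only an additional $O(\log(d/\eps))$ factor that is absorbed into $\tilde O$. Everything else is a direct black-box reduction to Theorem~\ref{thm:main}.
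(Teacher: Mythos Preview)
Your noise-injection reduction is genuinely different from the paper's argument and, once Step~1 is nailed down, notably simpler. The paper does not regularize $\Sigma$; instead it first runs a crude $O(\sqrt{\eps})$-additive estimator (Lemma~\ref{lem:rough}, essentially the first loop of Algorithm~\ref{alg:main}) to get $M_0$, uses the spectra of $M_0$ and of a second crude estimate $M_1$ to split $\R^d$ into three eigenvalue-scale subspaces $S_1,S_2,S_3$, invokes Theorem~\ref{thm:main} only on the well-conditioned block $S_1\oplus S_2$ (where it proves $\kappa=O(1/\eps)$), and patches the remaining blocks and cross-terms with further rescaled calls to the crude estimator before reassembling a $3\times 3$ block answer. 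Your single call to Theorem~\ref{thm:main} on $\Sigma+\hat v I$ sidesteps all of this subspace bookkeeping.

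The gap is precisely where you flag it. Step~1 is not as routine as the sketch suggests: a trimmed mean of $\normtwo{X_i}^2$ estimates $\tr(\Sigma)$, which can exceed $\normtwo{\Sigma}$ by a factor of $d$, and a near-linear-time ``robust power iteration'' returning a constant-factor estimate of $\normtwo{\Sigma}$ under $\eps$-corruption is not a standard lemma one can cite. The doubling-search fallback is likewise incomplete without both an a~priori range for $\hat v$ (since $\normtwo{\Sigma}$ is unbounded) and a provably correct consistency test. The cleanest fix is to use the paper's own Lemma~\ref{lem:rough} for Step~1, which already yields $\normtwo{\Sigma}$ to within $1\pm O(\sqrt{\eps})$ in time $\tilde O(d^{3.26})/\poly(\eps)$ with no $\kappa$ dependence; alternatively one can iterate your Steps~2--4 themselves, starting from the $\poly(d)$-factor bound of Lemma~\ref{lem:start} and halving geometrically. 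A second, smaller issue: since $\hat v$ is computed from the same $X_i$ that you then perturb, the good $Y_i$ are not i.i.d.\ $\NN(\mathbf 0,\Sigma+\hat v I)$ once you condition on $\hat v$; splitting the sample between Step~1 and Steps~2--4 removes this dependence at no asymptotic cost.
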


We will prove Theorem~\ref{thm:main} in Section~\ref{sec:main} and Theorem~\ref{thm:main-additive} in Appendix~\ref{sec:main-additive}.

In Section~\ref{sec:hardness}, we provide evidence that the runtime of our algorithm may be difficult
to improve. Specifically, we show that the best-known runtime for computing (or even approximating)
the empirical covariance is $\Omega(d^{3.25})$. Moreover, even outputting a set of weights for the samples
(such that the weighted empirical covariance works) seems to require $\Omega(d^{3.25})$ time with current 
methods.

\makeatletter{}
\subsection{Our Approach and Techniques}
\label{sec:overview}

When $X \sim \NN(0, \Sigma)$, we have $\expect{}{X X^\top} = \Sigma$, so at a high level, we want to reduce the robust covariance estimation problem to the problem of robustly estimating the mean of the $d^2$-dimensional random variable $Z = X \otimes X$. However, there are two main difficulties in this reduction.

\paragraph{Faster Positive SDP Solvers for Tensor Input.}
The first difficulty is that the input of the mean estimation problem is now a set of $d^2$-dimensional vectors. 
Even just computing all of these vectors explicitly will take time $\Omega(Nd^2)$. Our algorithm needs to solve the robust mean estimation problem for 
$X\otimes X$ {\em without computing these vectors explicitly}. To achieve that, we 
adapt the approach in \cite{ChengDR19}. Given data points $Z_i = X_i\otimes X_i$, 
the algorithm in \cite{ChengDR19} starts with a guess $\nu \in \R^{d^2}$, 
and approximately solves the following two (dual) SDPs at every iteration:

\begin{lp}
\label{eqn:primal-sdp}
\mini{\lambda_{\max} \left(\sum_{i=1}^N w_i (Z_i - \nu) (Z_i - \nu)^\top\right)} 
\st \con{\sum_{i=1}^N w_i = 1, \quad \forall i, 0\le w_i \le \frac{1}{(1-\eps)N}} \end{lp}
\begin{lp}
\label{eqn:dual-sdp}
\maxi{\text{average of the smallest $(1-\eps)$-fraction of $\left((Z_i - \nu)^\top M (Z_i - \nu)\right)_{i=1}^N$}}
\st \con{M \succeq 0, \tr(M) \le 1} \end{lp}

\cite{ChengDR19} showed the following win-win phenomenon: 
If the primal SDP (\ref{eqn:primal-sdp}) has a good solution, then it gives weights $w \in \R^N$ such that the weighted average $\sum_{i=1}^N w_i Z_i$ is close to the true mean.
Otherwise, if (\ref{eqn:primal-sdp}) does not have a good solution, 
then the dual SDP (\ref{eqn:dual-sdp}) gives a direction of improvement that allows the algorithm to find $\nu'$ that is much closer to the true mean. 
In our setting of covariance estimation, writing out the matrix of $(Z_i)_{i=1}^N$ already takes $O(Nd^2)$ time.
 We circumvent this problem by ``opening up'' the fast positive SDP solver (\cite{PengTZ16}) they used. 
The slowest step of this SDP solver is to compute multiplicative approximations 
of the values $\exp(\Psi)\bullet Z_iZ_i^\top$ for all $i$, where $\Psi = \sum_{i=1}^N w_i Z_iZ_i^\top$.
We exploit the structure of $Z_i = X_i\otimes X_i$ and design faster algorithms for getting such approximations. 
More precisely, let $\bar{Z}$ be a $N\times d$ matrix whose $i$-th column is $\sqrt{w_i} Z_i$, so that 
$\Psi = \bar{Z}\bar{Z}^\top$. We express $\exp(\Psi)$ as a low-degree polynomial over $\bar{Z}$ and $\bar{Z}^\top$, 
then use fast matrix multiplication (Lemma~\ref{lem:fast-mult}) and the transposition principle (Lemma~\ref{lem:lr-mult}) 
to show it is possible to right-multiply a vector with $\bar{Z}$ and $\bar{Z}^\top$ efficiently.

\paragraph{Iterative Refinement.}
The second difficulty is that existing robust mean estimation algorithms 
rely on the assumption that the distribution of the good data either has 
known covariance or unknown bounded covariance. By reducing 
covariance estimation to the mean estimation of $X\otimes X$, we run into
the difficulty that
the covariance of such vectors would correspond to the 
fourth order moments of the original variables $X$. So, directly 
applying the mean estimation algorithms does not give our desired strong guarantees.

We solve this problem using iterative refinement steps. 
Similar iterative refinement steps were discussed in~\cite{Kane18}, which does not seem sufficient for our purposes, 
as it may require a linear number of iterations.
We adapt this approach to obtain faster algorithms.
More precisely,
  given an upper bound $\Sigma_t$ on the true covariance matrix $\Sigma$, we can compute a more accurate upper bound $\Sigma_{t+1} \succeq \Sigma$.
We use two different types of iterative refinement steps (see Lemmas~\ref{lem:first-loop}~and~\ref{lem:second-loop}). Both refinement steps first rotate the input samples $Y_i = \Sigma_t^{-1/2} X_i$ 
and compute the Kronecker products $Z_i = Y_i \otimes Y_i$.
When $X \sim \NN(0, \Sigma)$, the covariance matrix of $Y$ is $\Sigma_t^{-1/2} \Sigma \Sigma_t^{-1/2}$,
  and the mean of $Z$ is $\expect{}{Z} = \left(\Sigma_t^{-1/2} \Sigma \Sigma_t^{-1/2}\right)^\flat$.
Note that if we had $\expect{}{Z}$, we could recover $\Sigma$ from $\Sigma_t$ immediately.
We show that a good estimate of $\expect{}{Z}$ will help us improve $\Sigma_t$ and get a better upper bound $\Sigma_{t+1}$.

We establish two guarantees for the robust estimation of the mean of $Z$. 
In the first phase, we only use the fact that the covariance of $Z$ is bounded. 
Repeating this will give a rough estimation of the covariance of $X$. 
In the second phase, we use the fact that the current estimate $\Sigma_t$ 
is already very close to $\Sigma$, therefore $Y_i = \Sigma_t^{-1/2} X_i$ 
is very close to a standard Gaussian $N(0,I)$ for the good samples. 
In this case, we need to open up the algorithm in \cite{ChengDR19} 
and prove stronger robust estimation guarantees tailored for the 
specific distribution of $Z = Y\otimes Y$. Our main algorithm (Algorithm~\ref{alg:main}) 
combines these two refinement steps to match the best-known robustness guarantees for covariance estimation. 
\paragraph{Evidence of Hardness.}
It is natural to ask whether one can obtain faster running times than
those achieved here. There is a sample complexity lower bound of $N =
\Omega(d^2)$, so we assume we are given $X \in \mathbb{R}^{N \times
d}$ as our sample matrix (here we will focus on constant $\epsilon$).
We note that even in the non-robust setting, it is not known how to
output an approximate covariance matrix $\hat{\Sigma}$ for which
$\|\hat{\Sigma} - \frac{1}{N} X^T X\|_F = O(1)$ in time faster than
$(d, d^2, d)$ matrix multiplication time, which is the time to
multiply a $d \times d^2$ matrix by a $d^2 \times d$ matrix. Since the
non-robust case is a special case of our setting, one cannot improve
our running time without improving the running time of non-robust
covariance estimation.

A natural way of improving the running time of non-robust covariance
estimation is to try to approximate the product $X^T X$ using
oblivious sketching (see, e.g., \cite{Woodruff14} for a survey) which
works roughly as follows. One samples a random $S$ from a certain
family of random matrices, and computes $S \cdot X$ where $S$ has much
fewer than $N$ rows. For structured random families of matrices, like
fast JL matrices, $S \cdot X$ can be computed very quickly. Then one
instead computes $X^TS^TS X$ with the guarantee that $\|X^TS^TSX -
X^TX\|_F^2$ is small. Note that the matrix product $(X^TS^T)
\cdot (SX)$ can be performed more quickly if $S$ has a small number of
rows. Unfortunately, for the guarantees we want, all known
constructions of $S$ require $\Omega(N)$ rows. In fact, we prove an
information-theoretic result that any oblivious sketching matrix $S$
must have $\Omega(N/\log N)$ rows in Lemma~\ref{lem:noJL}, thus ruling out this
approach for achieving faster running time. Our proof uses arguments
from communication complexity, arguing that such a family of sketching
matrices would imply a better protocol for solving multiple copies of
the Gap-Hamming communication problem.

Another way of trying to improve the runtime is to give an
alternative definition of the problem: Instead of outputting a
$d \times d$ matrix that is close to $\Sigma$, the algorithm outputs a set of
nonnegative weights $w$ such that $\|w\|_1 = 1$, $\|w\|_{\infty} \leq
\frac{1}{(1-\epsilon)N}$, and $\|\sum_{i=1}^N w_i X_i X_i^T -
\Sigma\|_F = O(1)$. This bypasses the arguments above, since in the
case of no corruption, we do not have to actually output $X^TX$ and
can just set $w_i = 1/N$ for all $i$. However, even for this relaxed version of the problem, we
show that unless one can solve a certain ``column norm''
distinguishing problem faster than rectangular matrix multiplication,
one cannot solve this problem faster than $(d, d^2, d)$ matrix multiplication time.
This problem can be intuitively stated as follows: the good samples are drawn from $\NN(0,I)$,
and the corrupted samples are drawn from a mean-zero Gaussian distribution with a
very slight and known perturbation to the identity covariance matrix.
One needs to identify a large fraction of these corrupted
samples. Even though the covariance of the perturbed Gaussians is
known, it is so slight that the norms of the corrupted samples are
very similar to the uncorrupted ones. Therefore, one needs to measure
these norms along certain directions, which requires computing a matrix
product of the samples with a worst-case covariance matrix. We show
that outputting the weights described above requires solving this
problem, which we conjecture to be hard.


\subsection{Related and Prior Work} \label{ssec:prior}

Learning in the presence of outliers is an important goal in 
statistics and has been studied in the robust statistics community 
since the 1960s~(\cite{Huber64}). After several decades of work, a number of sample-efficient and 
robust estimators have been discovered. The reader is referred to~(\cite{DKKLMS16, LaiRV16}) 
for a detailed summary of this line of work. Until recently, all known computationally efficient high-dimensional estimators 
could only tolerate a negligible fraction of outliers. Recent work~(\cite{DKKLMS16, LaiRV16}) gave the first efficient robust
estimators for basic high-dimensional unsupervised tasks. Since these works, 
there has been a flurry of research activity on robust learning algorithms in both supervised
and unsupervised settings~(\cite{BDLS17, CSV17, DKK+17, DKS17-sq, DiakonikolasKKLMS18, SteinhardtCV18, DiakonikolasKS18-mixtures, DiakonikolasKS18-nasty, HopkinsL18, KothariSS18, PrasadSBR2018, DiakonikolasKKLSS2018sever, KlivansKM18, DKS19-lr, LSLC18-sparse, ChengDKS18}).

The most relevant prior work is that of~\cite{ChengDR19}, initiating
the direction of obtaining fast algorithms for robust high-dimensional estimation.
For the problem of robust mean estimation,~\cite{ChengDR19} 
proposed a primal-dual approach -- building on the convex programming approach of ~\cite{DKKLMS16} -- yielding an algorithm with runtime $\tilde{O}(N d)/\poly(\eps)$.
This improved on the $\tilde{O}(Nd^2)$ runtime of the iterative filtering method in~\cite{DKKLMS16}.
Our algorithm uses the same primal-dual framework; however, we emphasize that a standard application of their framework would only 
lead to a runtime of $\tilde{O}(Nd^{2})/\poly(\eps)$. To obtain our improved runtime, we need to overcome
a number of technical obstacles, as we explained in Section~\ref{sec:overview}.

\makeatletter{}
\section{Preliminaries}
\label{sec:prelim}

\paragraph{Basic Notations.}
For a positive integer $n$, we write $[n]$ for the set $\{1, \ldots, n\}$.
We use $e_i$ to denote the $i$-th standard basis vector, and $I$ to denote the identity matrix.
For a vector $x$, we use $\normone{x}$, $\normtwo{x}$, and $\norminf{x}$ to denote the $\ell_1$, $\ell_2$, and $\ell_\infty$ norm of $x$ respectively.
For a matrix $A$, we use $\normtwo{A}$ and $\fnorm{A}$ to denote the spectral norm and Frobenius norm of $A$ respectively.

Let $\tr(A)$ be the trace of $A$, and $\kappa(A)$ be the condition number of $A$.
A symmetric matrix $A \in \R^{n \times n}$ is said to be positive semidefinite (PSD) if $x^\top A x \ge 0$ for all $x \in \R^n$.
For two symmetric matrices $A$ and $B$, we write $A \preceq B$ iff the matrix $B - A$ is positive semidefinite.

For two vectors $x$ and $y$ of the same dimensions, let $\inner{x, y} = x^\top y = \sum_i x_i y_i$ be the inner product of $x$ and $y$.
For two matrices $A$ and $B$ of the same dimensions, let $A \bullet B = \inner{A, B} = \tr(A^\top B)$ be the entry-wise inner product of $A$ and $B$.
For a matrix $A \in \R^{n \times n}$, we use $A^\flat \in \R^{n^2}$ to denote its canonical flattening into a vector.

Throughout this paper, we use $D$ to denote the ground-truth distribution. We use $d$ for the dimension of $D$, $N$ for the number of samples, and $\eps$ for the fraction of corrupted samples.
We use $X \sim D$ to denote a sample (i.e., a vector random variable) drawn from $D$.
Given $N$ (possibly corrupted) samples $(X_i)_{i=1}^N$ drawn from $D$, we often abuse notation and again use $X$ to denote the $N \times d$ matrix where the $i$-th column of $X$ is the $i$-th sample $X_i \in \R^d$.

\paragraph{Connections Between the Second and Fourth Moments of a Gaussian.}
Let $A \otimes B$ denote the Kronecker product of $A$ and $B$.
In this paper, we frequently consider the Kronecker product of a sample $X \in \R^{d}$ with itself: $Z = X \otimes X \in \R^{d^2}$.
Our algorithms crucially rely on the following lemma, which characterizes the connections between the second-moment and fourth-moment tensor of a Gaussian.
Lemma~\ref{lem:tensor-cov} is proved in Appendix~\ref{app:prelim}.

\begin{lemma}
\label{lem:tensor-cov}
Let $X \sim \NN(0, \Sigma)$ and $Z = X \otimes X$.  Let $\Sigma_Z \in \R^{d^2 \times d^2}$ be the covariance matrix of $Z$.
We have (i) if $\Sigma \preceq I$, then $\cov(Z) \preceq 2 I$; and (ii) if $0 < \tau < 1$ and $\normtwo{\Sigma - I} \le \tau$, then $\normtwo{\cov(Z) - 2 I} \le 6 \tau$.
\end{lemma}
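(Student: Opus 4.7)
My plan is to compute $\cov(Z)$ entrywise using Isserlis' theorem for Gaussian fourth moments, rewrite the result as a Kronecker-product expression, and then read off the two claims via spectral arguments. For $X \sim \NN(0, \Sigma)$, Isserlis' formula gives $\expect{}{X_i X_j X_k X_l} = \Sigma_{ij}\Sigma_{kl} + \Sigma_{ik}\Sigma_{jl} + \Sigma_{il}\Sigma_{jk}$, and subtracting $\expect{}{Z_{(ij)}}\expect{}{Z_{(kl)}} = \Sigma_{ij}\Sigma_{kl}$ yields $\cov(Z)_{(ij),(kl)} = \Sigma_{ik}\Sigma_{jl} + \Sigma_{il}\Sigma_{jk}$. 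In matrix form this reads $\cov(Z) = (I + K)(\Sigma \otimes \Sigma)$, where $K$ is the $d^2 \times d^2$ commutation (perfect-shuffle) matrix defined by $K(u \otimes v) = v \otimes u$. The structural facts I plan to exploit are that $K$ is symmetric with eigenvalues $\pm 1$, that it commutes with $\Sigma \otimes \Sigma$ via the identity $K(A \otimes B) = (B \otimes A)K$, and that every realization $Z = X \otimes X$ lives in the symmetric subspace of $K$ on which $I + K$ acts as $2I$.

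For part (i), since $K$ and $\Sigma \otimes \Sigma$ are simultaneously diagonalizable (they commute and are both symmetric) and both positive semidefinite, the eigenvalues of $\cov(Z) = (I+K)(\Sigma \otimes \Sigma)$ are pointwise products of those of $I+K$ (lying in $\{0,2\}$) and of $\Sigma \otimes \Sigma$ (lying in $[0,1]$ by the implication $\Sigma \preceq I \Rightarrow \Sigma \otimes \Sigma \preceq I \otimes I$). Hence every eigenvalue of $\cov(Z)$ lies in $[0,2]$, giving $\cov(Z) \preceq 2I$.

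For part (ii), I would write $\Sigma = I + E$ with $\normtwo{E} \le \tau$ and expand $\Sigma \otimes \Sigma - I = E \otimes I + I \otimes E + E \otimes E$, so that $\cov(Z) - (I+K) = (I+K)(E \otimes I + I \otimes E + E \otimes E)$. Combining $\normtwo{I+K} \le 2$, the multiplicativity $\normtwo{A \otimes B} = \normtwo{A}\normtwo{B}$, and the triangle inequality, I get $\normtwo{\cov(Z) - (I+K)} \le 2(2\tau + \tau^2) \le 6\tau$ for $\tau < 1$. Since $Z$ is supported on the symmetric subspace of $K$, where $I+K$ coincides with $2I$, this delivers the claimed bound $\normtwo{\cov(Z) - 2I} \le 6\tau$. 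The main subtlety I anticipate is the bookkeeping around $I+K$ versus $2I$: on the antisymmetric subspace $\cov(Z)$ vanishes (because $KZ = Z$ forces $\cov(Z)$ to kill antisymmetric vectors), so the replacement of $I+K$ by $2I$ is only correct once one restricts to the symmetric subspace where the estimator actually operates. Once that identification is made, both claims reduce to a short spectral and perturbation calculation.
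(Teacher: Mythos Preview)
Your proposal is correct and takes a genuinely different route from the paper. The paper computes the quadratic form $a^\top \Sigma_Z a$ directly: it writes $a^\top Z = X^\top A X$ (with $A^\flat = a$), invokes the Gaussian variance identity $\var{X^\top A X} = 2\tr(\hat A \Sigma \hat A \Sigma)$ for $\hat A = (A+A^\top)/2$, diagonalizes $\Sigma$, and sandwiches the result between $2\lambda_{\min}^2 \fnorm{\hat A}^2$ and $2\lambda_{\max}^2 \fnorm{\hat A}^2 \le 2\lambda_{\max}^2$. You instead use Isserlis' formula to compute $\cov(Z)$ entrywise and recognize the closed form $\cov(Z) = (I+K)(\Sigma \otimes \Sigma)$ with $K$ the commutation matrix, after which both parts follow from standard Kronecker-product spectral facts together with the commutation identity $K(\Sigma\otimes\Sigma) = (\Sigma\otimes\Sigma)K$. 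Your approach is more structural and makes the symmetric/antisymmetric decomposition explicit; the paper's is more hands-on and needs no tensor machinery. Your remark about the antisymmetric subspace is well taken: the paper's asserted lower bound $\Sigma_Z \succeq 2\lambda_{\min}^2 I$ is not literally true (if $A$ is skew-symmetric then $\hat A = 0$ and $a^\top \Sigma_Z a = 0$), so the bound in part~(ii) really only holds on the symmetric subspace where $Z$ lives --- which, as you note, is all that the downstream application needs.
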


\paragraph{Fast Rectangular Matrix Multiplication.}
We will frequently use fast rectangular matrix multiplication in our algorithms.
Let $(a, b, c)$-matrix multiplication time denote the time it takes to multiply an $a \times b$ matrix with a $b \times c$ matrix.
It is forklore (see, e.g.,~\cite{LottiR83}) that $(a, a, b)$, $(a, b, a)$, and $(b, a, a)$ matrix multiplications require the same number of arithmetic operations.
More specifically, we use the algorithm proposed in \cite{Gall12}.
They obtained new upper bounds on $(n, n^\alpha, n)$ matrix multiplication time.
We use a special case of their result ($\alpha = 2$).

\begin{lemma}[Fast Rectangular Matrix Multiplication~(\cite{Gall12})]
\label{lem:fast-mult}
We can compute $(n, n^2, n)$-matrix multiplication in time  $O(n^{3.26})$.
This implies that for $d > 0$ and $N = \tilde O(d^2 / \eps^2)$, we can multiply a $d \times N$ matrix with an $N \times d$ matrix in time $\tilde O(d^{3.26}/\eps^2)$.
\end{lemma}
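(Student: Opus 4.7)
The first claim is just a direct citation of Le~Gall's 2012 result on rectangular matrix multiplication, so there is nothing to prove there beyond invoking~\cite{Gall12}. The real content is the implication: given the $(n,n^2,n)$-bound, deduce that a $d\times N$ by $N\times d$ product (with $N=\tilde O(d^2/\eps^2)$) can be computed in time $\tilde O(d^{3.26}/\eps^2)$.

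My plan is a standard blocking argument along the middle dimension. Let $A\in\R^{d\times N}$ and $B\in\R^{N\times d}$, and set $m=d^2$. Partition the index set $[N]$ into $k=\lceil N/m\rceil=\tilde O(1/\eps^2)$ contiguous blocks $S_1,\ldots,S_k$, each of size at most $m$. Write $A_j\in\R^{d\times |S_j|}$ for the submatrix of $A$ whose columns are indexed by $S_j$, and $B_j\in\R^{|S_j|\times d}$ analogously. Then
\begin{equation*}
AB=\sum_{j=1}^{k} A_j B_j.
\end{equation*}
Each product $A_jB_j$ is (after zero-padding, if necessary, to make the shared dimension exactly $d^2$) a $(d,d^2,d)$-matrix multiplication, which by the first part of the lemma takes time $O(d^{3.26})$. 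Summing the $k$ resulting $d\times d$ matrices into a single accumulator costs $O(d^2)$ per block, which is dominated. The total cost is therefore $O(k\cdot d^{3.26})=\tilde O(d^{3.26}/\eps^2)$, as required.

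The only step that requires any care is handling the last (possibly short) block cleanly: since $(d,d^2,d)$-multiplication is the primitive we are given, one either pads $A_k$ with zero columns and $B_k$ with zero rows up to exactly $d^2$, which does not change the product and increases the cost of the last block by at most a constant factor, or one simply notes that $|S_k|\le d^2$ so the block fits inside a single $(d,d^2,d)$ call. There is no genuine obstacle here — the lemma is essentially a definitional consequence of Le~Gall's bound combined with the assumed relationship $N=\tilde O(d^2/\eps^2)$, and the $\tilde O$ on the right absorbs the logarithmic factors hidden in $N$.
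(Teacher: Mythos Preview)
Your proof is correct and matches the paper's approach exactly: split the middle dimension into $\lceil N/d^2\rceil=\tilde O(1/\eps^2)$ blocks, apply the $(d,d^2,d)$ primitive to each block, and sum the resulting $d\times d$ matrices. The paper's argument is a one-line version of what you wrote, giving total time $\frac{N}{d^2}\cdot O(d^{3.26}+d^2)=\tilde O(d^{3.26}/\eps^2)$.
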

To multiply a $d \times N$ matrix with an $N \times d$ matrix when $N = \tilde O(d^2 / \eps^2)$, we split the matrices into blocks of size $d \times d^2$ or $d^2 \times d$, 
  multiply each pair of matrices and then add the results together.
The total running time is $\frac{N}{d^2} \cdot O(d^{3.26} + d^2) = \tilde O(d^{3.26}/\eps^2)$.

\paragraph{Transposition Principle for Matrix-Vector Multiplication.}
The transposition principle (see, e.g.,~\cite{Bordewijk57,Fiduccia73}) plays a central role in our faster implementation of positive SDP solvers.
It states that matrix-vector multiplication by $A^\top$ has almost exactly the same computational complexity as matrix-vector multiplication by $A$.
\begin{lemma}[Transposition Principle~(\cite{Fiduccia73})]
\label{lem:lr-mult}
Fix a matrix $A \in \R^{r \times c}$.
Suppose there exists an arithmetic circuit of size $s$ that can compute $Ax$ for arbitrary $x \in \R^c$.
Then, there exists an arithmetic circuit of size $O(s + m)$ that computes $A^\top y$ for arbitrary $y \in \R^r$.
\end{lemma}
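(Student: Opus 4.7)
The plan is to invoke the classical transposition principle from circuit theory: any arithmetic circuit computing the linear map $x \mapsto Ax$ can be ``reversed'' into a circuit of essentially the same size that computes the adjoint map $y \mapsto A^\top y$.

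First, I would argue without loss of generality that the given size-$s$ circuit is a linear straight-line program. Since $A$ is fixed and $Ax$ is a linear function of $x$, any arithmetic circuit computing $Ax$ can be normalized, with only a constant-factor blowup, into one whose gates are (i) binary additions, (ii) multiplications by fixed scalars (depending only on the entries of $A$), and (iii) copy/fan-out gates — with every gate restricted to fan-in two and fan-out one, by inserting explicit copy gates wherever a value is used more than once. The resulting normalized circuit is a DAG with $c$ source nodes holding the entries of $x$, $r$ sink nodes emitting the entries of $Ax$, and $s' = O(s)$ internal gates.

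Next, I would perform the key construction: reverse every edge of this DAG. In the reversed circuit, the original $c$ sources become new sinks (which will emit $A^\top y$), the original $r$ sinks become new sources (fed with $y$), each binary addition gate becomes a copy gate, each copy gate becomes a binary addition gate, and each scalar-multiplication gate is kept unchanged with the same constant. Correctness follows by induction on topological order: if one labels each forward wire by the linear form in $x$ that flows along it, then the labels on the reversed circuit (with $y$ at the new inputs) are precisely the corresponding dual forms in $y$, so that summing the contributions entering each original source node recovers the corresponding entry of $A^\top y$. Since each gate type in the normalized forward circuit maps to exactly one gate in the reversed circuit, the internal gate count is preserved exactly, and one incurs only an $O(r+c)$ additive overhead for interface gates reading $y$ and writing $A^\top y$. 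Taking $m = r + c$ yields the claimed bound $O(s + m)$.

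The main technical obstacle is the bookkeeping around fan-in and fan-out during edge reversal: a naive reversal can produce gates of unbounded arity, breaking the size equivalence. The normalization to binary fan-in/fan-out handled above is precisely what makes the reversal syntactically symmetric — addition and copy gates swap roles, and scalar multiplications are self-dual — so that the gate count is preserved. A secondary subtlety is justifying the initial reduction to a purely linear straight-line program; this uses the standard fact that a circuit computing a linear function can be projected onto its linear part without increasing its size by more than a constant factor, which I would either cite or defer to a short lemma.
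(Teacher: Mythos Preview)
The paper does not prove this lemma; it is stated as a classical citation to Fiduccia (1973) and Bordewijk (1957) and used as a black box. Your sketch is the standard proof of the transposition principle: normalize to a linear straight-line program with binary addition, scalar-multiplication, and explicit copy gates, then reverse all edges so that additions and copies swap roles while scalar gates are self-dual. This is correct, and your interpretation of the undefined $m$ as $r+c$ (the number of input plus output wires) is the intended one.

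One minor point worth tightening if you write this up formally: the reduction from an arbitrary arithmetic circuit computing a linear map to a purely linear circuit is not entirely trivial when the original circuit may contain genuine multiplication gates whose nonlinear contributions cancel. The cleanest way to handle this is by formal homogenization (replace each gate value by its degree-$1$ homogeneous component in $x$), which indeed costs only a constant factor. You flag this as a ``secondary subtlety,'' which is appropriate, but be aware that it is the only place where the $O(\cdot)$ in $O(s+m)$ actually does work beyond the interface overhead.
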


\makeatletter{}
\section{Estimating the Covariance of a Gaussian Distribution}
\label{sec:cov}

In this section, we present our key structural and computational lemmas, and use them 
to prove our main algorithmic results (Theorems~\ref{thm:main}~and~\ref{thm:main-additive}).

\subsection{Robust Covariance Estimation: Multiplicative Approximation}
\label{sec:main}
We first present our algorithm (Algorithm~\ref{alg:main}) for robustly estimating the covariance of Gaussian distributions with multiplicative error guarantees.
Algorithm~\ref{alg:main} starts with an upper bound $\Sigma_0$ on the true covariance matrix $\Sigma$,
  and iteratively compute more and more accurate upper bounds $\Sigma_{t} \succeq \Sigma$.

\begin{algorithm}[h]
  \caption{Robust Covariance Estimation for Gaussian Distributions (Multiplicative Error)}
  \label{alg:main}
  \SetKwInOut{Input}{Input}
  \SetKwInOut{Output}{Output}
  \Input{$0 < \eps < \eps_0$, and an $\eps$-corrupted set of $N = \tilde \Omega(d^2/\eps^2)$ samples drawn from $\NN(0, \Sigma)$.}
  \Output{A matrix $\hat \Sigma \in \R^{d \times d}$ such that with high prob., $\fnorm{\hat \Sigma^{-1/2} \Sigma \hat \Sigma^{-1/2} - I} \le O(\eps\log(1/\eps))$.}
  Compute an initial upper bound $\Sigma_0$ with $\Sigma \preceq \Sigma_0 \preceq \kappa \poly(d) \Sigma$ using Lemma~\ref{lem:start}. \\
  Let $T_1 = O(\log \kappa + \log d)$ and $T_2 = T_1 + O(\log\log(1/\eps))$. \\
  \For{$t = 0$ {\bf to} $T_1 - 1$}{
    Compute $\Sigma_{t+1} \in \R^{d \times d}$ with $\Sigma \preceq \Sigma_{t+1} \preceq \Sigma + O(\sqrt{\eps}) \Sigma_t$ using Lemma~\ref{lem:first-loop}  on $\Sigma_t$.
  }
  Let $\tau_{T_1} = O(\sqrt{\eps})$. \\
  \For{$t = T_1$ {\bf to} $T_2 - 1$}{
   Let $\tau_{t+1} = O(\sqrt{\eps \tau_t} + \eps \log(1/\eps))$. \\
    Compute $\Sigma_{t+1} \in \R^{d \times d}$ with $\Sigma \preceq \Sigma_{t+1} \preceq \Sigma + \tau_{t+1} \Sigma_t$ using Lemma~\ref{lem:second-loop}  on $\Sigma_t$.
  }
  Compute $\hat\Sigma$ by invoking Lemma~\ref{lem:second-loop} on $\Sigma_{T_2}$. \\
  \Return {$\hat \Sigma$.}
\end{algorithm}

First we need a reasonable starting point before we can run any iterative refinement steps.
\begin{lemma}
\label{lem:start}
Consider the same setting as in Theorem~\ref{thm:main}.
We can compute a matrix $\Sigma_0$ in $\tilde O(d^{3.26}/\eps^2)$ time such that, with high probability, $\Sigma \preceq \Sigma_0 \preceq (\kappa \poly(d)) \Sigma$ and $\normtwo{\Sigma_0} \le \poly(d) \normtwo{\Sigma}$.
\end{lemma}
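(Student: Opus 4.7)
The plan is to take $\Sigma_0$ to be a scaled identity whose scale is derived from a robust quantile of the sample norms. Concretely, I would compute $\|X_i\|_2^2$ for every sample, find via quickselect some fixed constant-quantile $R^2$ of these values (for example, the median), and output $\Sigma_0 \eqdef C\cdot R^2\cdot I$ for a sufficiently large absolute constant $C$. Computing the norms costs $O(Nd)$ and the selection step costs $O(N)$, so the total runtime is $\tilde O(d^3/\eps^2) \le \tilde O(d^{3.26}/\eps^2)$, inside the stated budget.

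For correctness, it suffices to establish, with high probability, that $R^2 = \Omega(\normtwo{\Sigma})$ and $R^2 = O(\tr(\Sigma))$. Given these two inequalities, $\Sigma \preceq \normtwo{\Sigma}\,I \preceq CR^2\,I = \Sigma_0$ holds on choosing $C$ large enough, while $\normtwo{\Sigma_0} = CR^2 = O(\tr(\Sigma)) \le O(d\normtwo{\Sigma})$ yields the spectral-norm bound. The multiplicative upper bound on $\Sigma_0$ in terms of $\Sigma$ then follows by combining the spectral-norm bound with $\lambda_{\min}(\Sigma) \ge \normtwo{\Sigma}/\kappa$: $\Sigma_0 \preceq \normtwo{\Sigma_0}\,I \preceq (\normtwo{\Sigma_0}/\lambda_{\min}(\Sigma))\,\Sigma \le \kappa\poly(d)\,\Sigma$, as required.

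The main---though relatively mild---obstacle is showing that $R^2$ remains in the range $[\Omega(\normtwo{\Sigma}),\,O(\tr(\Sigma))]$ despite the adversarial $\eps$-corruption. For the upper bound I would apply a Laurent--Massart / Hanson--Wright tail inequality to show $\Pr[\|X\|_2^2 > 4\tr(\Sigma)] \le 1/4$ for $X \sim \NN(0,\Sigma)$, so by a Chernoff bound over $N = \tilde\Omega(d^2/\eps^2)$ samples well under half of the $(1-\eps)N$ good samples exceed $4\tr(\Sigma)$; even accounting for the $\eps N$ arbitrary corrupted points, strictly fewer than $N/2$ of the combined samples exceed this value (for $\eps$ below a small universal constant), so the median satisfies $R^2 \le 4\tr(\Sigma)$. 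For the lower bound I would project $X$ onto the top eigenvector of $\Sigma$: this yields $\Pr[\|X\|_2^2 \ge \normtwo{\Sigma}/4] > 1/2$ from the standard Gaussian CDF, so by the same Chernoff argument a strict majority of good samples---and hence of the combined sample, for small enough $\eps$---satisfy $\|X_i\|_2^2 \ge \normtwo{\Sigma}/4$, placing the median above this threshold. This concentration-plus-counting step is the only place where the corruption model enters, and it is what forces the hypothesis $\eps < \eps_0$ in the theorem.
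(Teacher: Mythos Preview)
Your proposal is correct and takes a genuinely different route from the paper. The paper defines $\Sigma_0$ as twice the empirical second-moment matrix over the $(1-\eps)N$ samples of smallest norm, i.e., $\Sigma_0 = \frac{2}{N}\sum_{i\in S} X_i X_i^\top$; it then argues $\Sigma_0 \succeq \Sigma$ via concentration of the good-sample empirical covariance under removal of any $2\eps$-fraction, and bounds $\|\Sigma_0\|_2$ by showing every sample in $S$ has norm $O(\sigma\sqrt{d\log d})$. Computing this $\Sigma_0$ is a $(d,N,d)$ matrix product, which is where the $\tilde O(d^{3.26}/\eps^2)$ bound comes from.

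Your construction is more elementary: a scaled identity $\Sigma_0 = C R^2 I$ with $R^2$ the sample median of the $\|X_i\|_2^2$. This avoids any matrix product and in fact runs in $\tilde O(d^3/\eps^2)$, strictly faster than what the lemma requires. The trade-off is that your $\Sigma_0$ carries no directional information about $\Sigma$, whereas the paper's $\Sigma_0$ is already a coarse estimate of $\Sigma$ itself; but since the lemma only asks for a sandwich $\Sigma \preceq \Sigma_0 \preceq \kappa\,\poly(d)\,\Sigma$ (and the downstream algorithm only uses $\Sigma_0$ as a crude starting point for $O(\log\kappa + \log d)$ refinement steps), this extra structure is not needed. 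One small point worth tightening in your write-up: when you say ``a strict majority of good samples'' satisfy $\|X_i\|_2^2 \ge \|\Sigma\|_2/4$, remember that the adversary chooses \emph{which} $\eps N$ good samples to delete, so you should phrase the counting as ``at least $(0.61-\eps)N$ of the original good samples survive with this property,'' which still exceeds $N/2$ for $\eps < \eps_0$; the underlying constant $0.61 \approx \Pr[|Z|\ge 1/2]$ has enough slack for this to go through.
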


We use two different iterative refinement steps (Lemmas~\ref{lem:first-loop}~and~\ref{lem:second-loop}), which correspond to the two loops in Algorithm~\ref{alg:main}.
In the first phase (Lemma~\ref{lem:first-loop}), we only have a crude upper bound on $\Sigma$.

\begin{lemma}[Iterative Refinement: Getting $\sqrt{\eps}$ Error]
\label{lem:first-loop}
Consider the same setting as in Theorem~\ref{thm:main}.
Given an upper bound $\Sigma_t \in \R^{d \times d}$ on the unknown covariance matrix $\Sigma$, i.e., $\Sigma \preceq \Sigma_t$,
  we can compute in time $\tilde O(d^{3.26} / \eps^8)$ an upper bound matrix $\Sigma_{t+1} \in \R^{d \times d}$, and a hypothesis matrix $\hat\Sigma \in \R^{d \times d}$ such that, with high probability,
\[ \Sigma \preceq \Sigma_{t+1} \preceq \Sigma + O(\sqrt{\eps}) \Sigma_t \; , \text{ and} \]
\[ \|\hat\Sigma - \Sigma\|_F \le O(\sqrt{\eps}) \normtwo{\Sigma_t} \; . \]
\end{lemma}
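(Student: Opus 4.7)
The plan is to reduce the problem to robust mean estimation in the $d^2$-dimensional space of Kronecker products, and then implement the resulting primal-dual algorithm without ever materializing a $d^2$-dimensional vector.

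\emph{Reduction.} First I would rotate the samples and set $Y_i \eqdef \Sigma_t^{-1/2} X_i$. For an uncorrupted $X_i \sim \NN(0,\Sigma)$, we have $Y_i \sim \NN(0,\Sigma')$ with $\Sigma' \eqdef \Sigma_t^{-1/2}\Sigma\Sigma_t^{-1/2} \preceq I$, since $\Sigma \preceq \Sigma_t$. The implicit vectors $Z_i \eqdef Y_i \otimes Y_i \in \R^{d^2}$ then have mean $(\Sigma')^\flat$ on the good samples, and by Lemma~\ref{lem:tensor-cov} their covariance satisfies $\cov(Z) \preceq 2\,I_{d^2}$. I would then apply the primal-dual robust mean estimator of \cite{ChengDR19}, which needs only a bounded-covariance assumption on the clean distribution, to the (still $\eps$-corrupted) sample $Z_1,\ldots,Z_N$. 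With $N = \tilde\Omega(d^2/\eps^2)$, standard analysis yields $\hat\mu \in \R^{d^2}$ satisfying $\normtwo{\hat\mu - (\Sigma')^\flat} \le O(\sqrt\eps)$ with high probability.

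\emph{Extracting $\hat\Sigma$ and $\Sigma_{t+1}$.} Let $\hat M \in \R^{d \times d}$ be the symmetrization of the unflattening of $\hat\mu$. Since $\Sigma'$ is symmetric, symmetrization can only decrease the error, so $\fnorm{\hat M - \Sigma'} \le O(\sqrt\eps)$, and in particular $\Sigma' \preceq \hat M + c\sqrt\eps \cdot I$ for a suitable constant $c$. I would take $\hat\Sigma \eqdef \Sigma_t^{1/2}\hat M \Sigma_t^{1/2}$, giving $\fnorm{\hat\Sigma - \Sigma} \le \normtwo{\Sigma_t}\,\fnorm{\hat M - \Sigma'} \le O(\sqrt\eps)\normtwo{\Sigma_t}$, and set $\Sigma_{t+1} \eqdef \hat\Sigma + c\sqrt\eps \cdot \Sigma_t$; conjugating the PSD inequality on $\Sigma'$ by $\Sigma_t^{1/2}$ then sandwiches $\Sigma \preceq \Sigma_{t+1} \preceq \Sigma + O(\sqrt\eps)\Sigma_t$ as required.

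\emph{Fast implementation --- the main obstacle.} Even writing down all the $Z_i$'s costs $\Omega(Nd^2) = \Omega(d^4/\eps^2)$, so a black-box reduction cannot meet the target runtime. Following Section~\ref{sec:overview}, I would open up the positive SDP solver of \cite{PengTZ16} underlying the primal-dual method. Its per-iteration bottleneck is to multiplicatively approximate the scalars $Z_i^\top \exp(\Psi) Z_i$ for all $i$, where $\Psi = \bar Z \bar Z^\top$ and $\bar Z \in \R^{d^2 \times N}$ has $i$-th column $\sqrt{w_i}Z_i$. Replacing $\exp(\Psi)$ by a degree-$\poly(1/\eps)$ polynomial (e.g.\ truncated Chebyshev series) reduces the task to repeated applications of $v \mapsto \bar Z v$ and $u \mapsto \bar Z^\top u$. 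The rank-one structure turns $\bar Z v$ into the flattening of $Y\,\diag(a)\,Y^\top$, where $a_i = v_i\sqrt{w_i}$ and $Y \in \R^{d \times N}$ is the matrix of rotated samples; by Lemma~\ref{lem:fast-mult} and a blocking argument on $N = \tilde O(d^2/\eps^2)$, this $(d, N, d)$-product runs in $\tilde O(d^{3.26}/\eps^2)$ time. The transposition principle (Lemma~\ref{lem:lr-mult}) gives $\bar Z^\top u$ within essentially the same cost. Multiplying by the $\poly(1/\eps)$ outer iterations of the SDP solver delivers the claimed $\tilde O(d^{3.26}/\eps^8)$ runtime, with all the care needed to make the polynomial approximation to $\exp(\Psi)$ accurate enough for the SDP solver being the main place where the analysis of \cite{ChengDR19} must be re-opened rather than invoked as a black box.
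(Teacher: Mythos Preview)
Your proposal is correct and mirrors the paper's proof essentially step for step: whiten with $\Sigma_t^{-1/2}$, invoke Lemma~\ref{lem:tensor-cov} to get bounded covariance on $Z_i = Y_i\otimes Y_i$, apply the bounded-covariance robust mean estimator of \cite{ChengDR19}, then conjugate back by $\Sigma_t^{1/2}$ to obtain $\hat\Sigma$ and $\Sigma_{t+1}$; the fast implementation likewise matches the paper's Proposition~\ref{prop:runtime-tensor} and Lemma~\ref{lem:mat-exp}, using a low-degree polynomial for $\exp(\Psi)$, the identity $\bar Z v = (Y\,\diag(a)\,Y^\top)^\flat$ together with Lemma~\ref{lem:fast-mult}, and the transposition principle (Lemma~\ref{lem:lr-mult}) for $\bar Z^\top u$. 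The only cosmetic discrepancy is that the SDP actually involves the centered vectors $Z_i - \nu$ rather than $Z_i$ (and the paper uses a Taylor rather than Chebyshev expansion), but this does not affect the argument.
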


The first phase can only converge to a matrix $\Sigma_{T_1}$ with $\Sigma \preceq \Sigma_{T_1} \preceq (1 + O(\sqrt{\eps}) \Sigma$.
In the second phase (Lemma~\ref{lem:second-loop}), we already have a fairly accurate estimate of $\Sigma$, so the refinement steps converge faster and eventually we can get to a matrix $\Sigma_{T_2}$ with $\Sigma \preceq \Sigma_{T_2} \preceq (1 + O(\eps \log(1/\eps)) \Sigma$.

\begin{lemma}[Iterative Refinement: Getting $\eps \log(1/\eps)$ Error]
\label{lem:second-loop}
Consider the same setting as in Theorem~\ref{thm:main}.
Let $0 < \tau_t < \tau_0$ for some universal constant $\tau_0$.
Given $\tau$ and $\Sigma_t$ with $\Sigma \preceq \Sigma_t \preceq (1 + \tau_t) \Sigma$,
  we can compute in time $\tilde O(d^{3.26} / \eps^8)$ an upper bound matrix $\Sigma_{t+1}$ and a hypothesis matrix $\hat\Sigma$ such that, with high probability, for $\tau_{t+1} = O(\sqrt{\eps \tau} + \eps \log(1/\eps))$,
\[  \Sigma \preceq \Sigma_{t+1} \preceq \Sigma + \tau_{t+1} \Sigma_t \; , \text{ and} \]
\[ \| \Sigma^{-1/2} \hat \Sigma \Sigma^{-1/2} - I \|_F \le \tau_{t+1} \; .\]
\end{lemma}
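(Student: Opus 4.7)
The plan is to reduce this refinement step to robust mean estimation on the Kronecker-lifted samples, but in the sharper regime where $\Sigma_t$ is already $(1+\tau_t)$-close to $\Sigma$ in the L\"owner order. Set $Y_i = \Sigma_t^{-1/2} X_i$ and $Z_i = Y_i \otimes Y_i \in \R^{d^2}$. For clean samples, $Y_i \sim \NN(0, \Sigma_t^{-1/2}\Sigma\Sigma_t^{-1/2})$; the sandwich $\Sigma \preceq \Sigma_t \preceq (1+\tau_t)\Sigma$ yields $\normtwo{\Sigma_t^{-1/2}\Sigma\Sigma_t^{-1/2}-I} \le \tau_t$, so by Lemma~\ref{lem:tensor-cov}(ii) we have $\normtwo{\cov(Z)-2I}\le 6\tau_t$ on the clean data. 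The target is $\mus = \expect{}{Z} = (\Sigma_t^{-1/2}\Sigma\Sigma_t^{-1/2})^\flat$, and any $\hat\mu$ close to $\mus$ in $\ell_2$ reshapes into a matrix $\hat A$ for which $\hat\Sigma = \Sigma_t^{1/2}\hat A\Sigma_t^{1/2}$ is the desired estimate of $\Sigma$; the Frobenius error bound on $\Sigma^{-1/2}\hat\Sigma\Sigma^{-1/2}-I$ follows from $\normtwo{\hat\mu-\mus}$ together with $\Sigma\preceq\Sigma_t$.

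Next I would invoke the primal-dual SDP framework of \cite{ChengDR19} on $\{Z_i\}_{i=1}^N$ with successive guesses $\nu$ for $\mus$. At each outer step we approximately solve \eqref{eqn:primal-sdp} and \eqref{eqn:dual-sdp}: if the primal has small objective, the bounded-covariance calculation (using $\cov(Z)\preceq (2+O(\tau_t))I$ on the clean samples) yields a weighted average within $O(\sqrt{\eps\tau_t})$ of $\mus$; otherwise the dual $M$ certifies a direction in which $\nu$ can be moved noticeably closer to $\mus$. After $O(\log(1/\eps))$ outer iterations the loop converges. To drive the error down to the Gaussian rate $\eps\log(1/\eps)$ rather than just $\sqrt{\eps\tau_t}$, I would additionally exploit that each clean $Z_i=Y_i\otimes Y_i$ is a degree-two polynomial in a near-standard Gaussian and therefore has sub-exponential tails with parameters determined by $\tau_t$; this lets the thresholding implicit in the positive SDP solver discard the $\eps$-tail of the distribution at a Gaussian-like rate, matching the filter-based analysis of \cite{DKKLMS16}. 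Combining the two contributions gives $\tau_{t+1}=O(\sqrt{\eps\tau_t}+\eps\log(1/\eps))$, and setting $\Sigma_{t+1}=\hat\Sigma+c\,\tau_{t+1}\Sigma_t$ for a suitable constant $c$ yields the PSD sandwich $\Sigma\preceq\Sigma_{t+1}\preceq\Sigma+\tau_{t+1}\Sigma_t$.

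For the runtime bound, I would open up the positive SDP solver of \cite{PengTZ16} as in Section~\ref{sec:overview}. Each inner step needs multiplicative approximations of $\exp(\Psi)\bullet Z_iZ_i^\top$ where $\Psi=\bar Z\bar Z^\top$ and $\bar Z$ is the weighted Kronecker-lifted sample matrix. Writing $\exp(\Psi)$ as a truncated polynomial in $\bar Z\bar Z^\top$ reduces everything to repeated multiplications of vectors by $\bar Z$ and $\bar Z^\top$; the Kronecker identity $\bar Z^\top v = (\sqrt{w_i}\,X_i^\top \mathrm{reshape}(v)\,X_i)_i$ lets each such multiplication be carried out using only products involving the $d\times N$ sample matrix $X$, which by Lemma~\ref{lem:fast-mult} cost $\tilde O(d^{3.26}/\eps^2)$ each; the adjoint direction is then free up to lower order by Lemma~\ref{lem:lr-mult}. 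Summing over the polylogarithmically many inner and outer iterations gives the claimed $\tilde O(d^{3.26}/\eps^8)$ total runtime.

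The main obstacle I expect is establishing the $\eps\log(1/\eps)$ contribution to $\tau_{t+1}$ entirely inside the primal-dual framework, rather than via a classical filtering loop. This calls for a Gaussian-chaos tail analysis of $Z=Y\otimes Y$ in the regime where $Y$ is $\tau_t$-close to $\NN(0,I)$, and a careful accounting of how the $\eps$-fraction of corruptions interacts with the truncations in the SDP solver; without this refined analysis one only recovers the $\sqrt{\eps}$-type rate of Lemma~\ref{lem:first-loop}, which is insufficient for convergence to the target $O(\eps\log(1/\eps))$ accuracy.
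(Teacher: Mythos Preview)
Your plan matches the paper's proof: rotate to $Y_i=\Sigma_t^{-1/2}X_i$, lift to $Z_i=Y_i\otimes Y_i$, use Lemma~\ref{lem:tensor-cov}(ii) to get $\normtwo{\cov(Z)-2I}\le O(\tau_t)$, then run the primal--dual robust mean estimator for distributions with approximately known covariance and exponential tails (packaged in the paper as Lemma~\ref{lem:mean-apx-cov}/Algorithm~\ref{alg:apxcov}), reshape the output to $\hat\Sigma=\Sigma_t^{1/2}M\Sigma_t^{1/2}$, and set $\Sigma_{t+1}=\hat\Sigma+O(\tau_{t+1})\Sigma_t$; the runtime argument via the tensor-structured SDP oracle (Proposition~\ref{prop:runtime-tensor}) is likewise the same. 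Two small slips worth correcting: the outer primal--dual loop runs for $O(\log d)$ rounds (the initial guess $\nu$ is only $\poly(d)$-close to $\mus$), not $O(\log(1/\eps))$; and the bound on $\fnorm{\Sigma^{-1/2}\hat\Sigma\Sigma^{-1/2}-I}$ needs the \emph{upper} side $\Sigma_t\preceq(1+\tau_t)\Sigma$ of the hypothesis to control $\normtwo{\Sigma^{-1/2}\Sigma_t^{1/2}}\le\sqrt{1+\tau_t}$, whereas $\Sigma\preceq\Sigma_t$ alone bounds the inverse quantity.
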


We defer the proof of Lemma~\ref{lem:start} to Appendix~\ref{app:cov}, and the proofs of Lemmas~\ref{lem:first-loop}~and~\ref{lem:second-loop} to Section~\ref{sec:iter}.
We first use these three lemmas to prove Theorem~\ref{thm:main} (correctness and runtime of Algorithm~\ref{alg:main}).

\bigskip
\begin{proof}[Proof of Theorem~\ref{thm:main}]
We first use Lemma~\ref{lem:start} to find an upper bound $\Sigma_0 \in \R^{d \times d}$ on the true covariance matrix $\Sigma$ such that $\Sigma \preceq \Sigma_0 \preceq (\kappa \poly(d)) \Sigma$.

For any integer $t \ge 0$, given the upper bound matrix $\Sigma_t$, we can use Lemma~\ref{lem:first-loop} to obtain a better upper bound $\Sigma_{t+1}$ such that
$\Sigma_{t+1} \preceq \Sigma + O(\sqrt{\eps}) \Sigma_t$.
Since $\Sigma_0 \preceq \kappa \poly(d) \Sigma$, after $T_1 = O(\log \kappa + \log d)$ iterations, we have a matrix $\Sigma_{T_1}$ with
$\Sigma \preceq \Sigma_{T_1} \preceq \left(1 + O(\sqrt{\eps})\right)\Sigma$. 

At this point we have a pretty accurate upper bound $\Sigma_{T_1}$ with $\Sigma \preceq \Sigma_{T_1} \preceq (1 + \tau_{T_1})\Sigma$,
  where $\tau_{T_1} = O(\sqrt{\eps})$.
For any integer $t \ge T_1$, given $\Sigma_t$ and $\tau_t$, we can use Lemma~\ref{lem:second-loop} to obtain a better upper bound matrix $\Sigma_{t+1}$ such that
  $\Sigma \preceq \Sigma_{t+1} \preceq \Sigma + \tau_{t+1} \Sigma_t$, where $\tau_{t+1} = O(\sqrt{\eps \tau_t} + \eps \log(1/\eps))$.
Similar to the previous step, after $O(\log \log(1/\eps))$ iterations,
  we have a matrix $\Sigma_{T_2}$ such that
$\Sigma \preceq \Sigma_{T_2} \preceq (1 + \tau_{T_2})\Sigma$, where $\tau_{T_2} = O(\eps \log(1/\eps))$.

Finally, using Lemma~\ref{lem:second-loop} one more time with $\Sigma_{T_2}$ and $\tau_{T_2}$,
  we can get a matrix $\hat \Sigma$ with
\[ \fnorm{\Sigma^{-1/2} \hat\Sigma \Sigma^{-1/2} - I} = O(\sqrt{\eps \tau_{T_2}} + \eps \log(1/\eps)) = O(\eps \log(1/\eps)) \; . \]
We note that both Lemmas~\ref{lem:first-loop}~and~\ref{lem:second-loop} hold with high probability, so we can take a union bound over the failure probabilities and conclude that with high probability all iterative refinement steps are successful, and therefore, we can return $\hat \Sigma$ as our final answer.

Now we analyze the running time of Algorithm~\ref{alg:main}.
We call Lemma~\ref{lem:start} once to compute $\Sigma_0$, which takes time $\tilde O(d^{3.26}/\eps^2)$.
After that, we use two iterative refinement steps. The total number of iterations is $O(\log\kappa + \log d + \log\log(1/\eps))$.
In each iteration, we invoke either Lemma~\ref{lem:first-loop}~or~\ref{lem:second-loop}.
Since both lemmas run in time $\tilde O(d^{3.26}/\eps^8)$, the overall running time is
$\tilde O(d^{3.26}/\eps^2) + O(\log\kappa + \log d + \log\log(1/\eps)) \cdot \tilde O(d^{3.26}/\eps^8) = \tilde O(d^{3.26}/\eps^8)$.
\end{proof}

\subsection{Implementing the Iterative Refinement Steps}
\label{sec:iter}

In this section, we prove the iterative refinement lemmas (Lemmas~\ref{lem:first-loop}~and~\ref{lem:second-loop}).
Both refinement steps use robust mean estimation algorithms as subroutines.
More specifically, let $Y_i = \Sigma_t^{-1/2} X_i$ and $Z_i = Y_i \otimes Y_i$.
When $X \sim \NN(0, \Sigma)$, the covariance matrix of $Y$ is $\Sigma_Y = \Sigma_t^{-1/2} \Sigma \Sigma_t^{-1/2}$,
  so the mean of $Z$ is $\expect{}{Z} = \left(\Sigma_t^{-1/2} \Sigma \Sigma_t^{-1/2}\right)^\flat$.
If we can get a good estimate of $\expect{}{Z}$, we can use this information to obtain a better upper bound $\Sigma_{t+1}$.

In the first phase (Lemma~\ref{lem:first-loop}), we only have a crude upper bound on $\Sigma$.
For any $\Sigma_t \succeq \Sigma$, we have $\Sigma_Y = \Sigma_t^{-1/2} \Sigma \Sigma_t^{-1/2} \preceq I$, which implies $\Sigma_Z \preceq 2I$ (Lemma~\ref{lem:tensor-cov}).
Because $Z$ has bounded covariance, we can use the following robust mean estimation algorithm from~\cite{ChengDR19}.

\begin{lemma}[Robust Mean Estimation for Bounded-Covariance Distributions, \cite{ChengDR19}]
\label{lem:mean-boundedcov}
Let $D$ be a distribution on supported on $\R^d$ with unknown mean and unknown covariance matrix $\Sigma$ such that $\Sigma \preceq \sigma^2 I$.
Let $0 < \eps < \eps_0$ for some universal constant $\eps_0$, and let $\delta = O(\sqrt{\eps})$.
Given an $\eps$-corrupted set of $N = \tilde \Omega(d / \eps)$ samples drawn from $D$,
  Algorithm~\ref{alg:boundedcov} outputs a hypothesis vector $\hat \mu$ such that with high probability, $\normtwo{\mu - \mus} \le O(\sigma \delta) = O(\sigma \sqrt{\eps})$.
\end{lemma}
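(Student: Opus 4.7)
The plan is to follow the primal-dual framework of~\cite{ChengDR19} outlined in Section~\ref{sec:overview}, applied directly to the input vectors $X_i$ (no tensor lifting is needed here, since $D$ is already assumed to have bounded covariance). The algorithm maintains a current guess $\nu \in \R^d$ for $\mus$, initialized by any coarse robust estimate such as the coordinate-wise median, and alternates between approximately solving the primal SDP~\eqref{eqn:primal-sdp} over the weight polytope $\{w : \sum_i w_i = 1,\ 0 \le w_i \le \frac{1}{(1-\eps)N}\}$ with $Z_i \leftarrow X_i$, and using an approximately optimal dual matrix from~\eqref{eqn:dual-sdp} to refine $\nu$.

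The first step is to establish the deterministic regularity the analysis needs from the good samples. Concentration for bounded-covariance distributions, combined with a union bound over a net on $S^{d-1}$, shows that for $N = \tilde\Omega(d/\eps)$ the uncorrupted set $S_{\mathrm{good}}$ satisfies $\bigl\|\tfrac{1}{|S_{\mathrm{good}}|}\sum_{i\in S_{\mathrm{good}}} X_i - \mus\bigr\|_2 = O(\sigma\sqrt{\eps})$ and $\bigl\|\tfrac{1}{|S_{\mathrm{good}}|}\sum_{i\in S_{\mathrm{good}}}(X_i-\mus)(X_i-\mus)^\top\bigr\|_2 = O(\sigma^2)$. Hence the weights $w^\star$ uniform on $S_{\mathrm{good}}$ are (after a mild rescaling) primal-feasible with objective at most $O(\sigma^2 + \|\nu - \mus\|_2^2)$. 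The win-win then runs as follows: if the approximate primal optimum at the current $\nu$ is $O(\sigma^2)$, a standard spectral-signature calculation yields $\|\sum_i w_i X_i - \mus\|_2 = O(\sigma\sqrt{\eps})$ and we return this weighted mean; otherwise the primal value exceeds $C\sigma^2$ for a large constant $C$, which by feasibility of $w^\star$ forces $\|\nu - \mus\|_2 \gg \sigma$, and an approximately optimal dual $M \succeq 0$, $\tr(M) \le 1$ identifies a direction along which moving $\nu$ contracts $\|\nu - \mus\|_2$ by a constant factor. So $O(\log(\|\nu_0 - \mus\|_2/(\sigma\sqrt{\eps})))$ outer iterations suffice to reach the first case.

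The main obstacle, and the core technical contribution of~\cite{ChengDR19}, is solving the primal SDP approximately but quickly enough. Since it is a positive (packing) SDP in the weight variables, the solver of~\cite{PengTZ16} yields a constant-factor approximation in $\tilde O(1/\poly(\eps))$ inner iterations, each dominated by evaluating the quantities $\exp(\Psi) \bullet (X_i - \nu)(X_i - \nu)^\top$ where $\Psi = \sum_i w_i (X_i - \nu)(X_i - \nu)^\top$. Writing $\Psi = \bar X \bar X^\top$ for the reweighted and centered $d \times N$ matrix $\bar X$, expressing $\exp(\Psi)$ as a low-degree polynomial in $\bar X$ and $\bar X^\top$, and invoking the transposition principle (Lemma~\ref{lem:lr-mult}), each inner iteration reduces to a small number of matrix-vector products against $\bar X$ and $\bar X^\top$ and runs in $\tilde O(Nd)/\poly(\eps)$ time. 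Combined with the logarithmic number of outer iterations, this produces a hypothesis $\hat\mu$ satisfying the stated $\|\hat\mu - \mus\|_2 = O(\sigma\sqrt{\eps})$ error guarantee with high probability.
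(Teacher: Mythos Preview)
Your sketch is essentially correct and matches the approach. Note that the paper does not actually prove Lemma~\ref{lem:mean-boundedcov}: it is quoted as a result of~\cite{ChengDR19}, and the paper only restates the corresponding algorithm (Algorithm~\ref{alg:boundedcov}) in Appendix~\ref{app:boundedcov} without a proof. Your write-up is a faithful reconstruction of the primal-dual argument from~\cite{ChengDR19} as summarized in Section~\ref{sec:overview} and instantiated in Algorithm~\ref{alg:boundedcov}.

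One minor technical point worth tightening: for a distribution with merely bounded covariance (no sub-Gaussian tails), the first- and second-moment regularity conditions on the good samples are not obtained by a scalar concentration bound plus a union bound over an $\eps$-net of $S^{d-1}$; the tails are only polynomial, so that argument would require far more samples. The standard route is either a matrix Bernstein/Chebyshev argument on the empirical second-moment matrix together with a pruning step, or directly invoking the deterministic ``stability'' conditions as formulated in~\cite{ChengDR19}. This does not affect the structure of your win-win analysis or the final $O(\sigma\sqrt{\eps})$ guarantee, but the justification of the regularity step should be phrased accordingly.
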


In the second phase (Lemma~\ref{lem:second-loop}), we use the fact that the current estimate $\Sigma_t$ is already very close to $\Sigma$, therefore $Y = \Sigma_t^{-1/2} X$ is very close to $\NN(0,I)$.
In this case we need an algorithm with stronger robust estimation guarantees tailored for the specific distribution of $Z = Y\otimes Y$.

\begin{lemma}[Robust Mean Estimation with Approximately Known Covariance]
\label{lem:mean-apx-cov}
Let $D$ be a distribution supported on $\R^d$ with unknown mean $\mus$ and covariance $\Sigma$.
Let $0 < \eps < \eps_0$ for some universal constant $\eps_0$, $\tau \le O(\sqrt{\eps})$, and $\delta = O(\sqrt{\tau\eps} + \eps \log(1/\eps))$.
Suppose that $D$ has exponentially decaying tails, and $\Sigma$ is close to the identity matrix $\normtwo{\Sigma - I} \le \tau$.
Given an $\eps$-corrupted set of $N = \tilde \Omega(d / \delta^2)$ samples drawn from $D$,
  Algorithm~\ref{alg:apxcov} outputs a hypothesis vector $\hat \mu$ such that with high probability, $\normtwo{\mu - \mus} \le O(\delta)$.
\end{lemma}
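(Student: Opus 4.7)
The plan is to adapt the primal-dual SDP mean estimation framework of \cite{ChengDR19} that underlies Lemma~\ref{lem:mean-boundedcov}, replacing its bounded-covariance stability analysis with a sharper one tailored to $D$ having exponentially decaying tails and $\|\Sigma - I\|_2 \le \tau$. The bounded-covariance argument only yields error $O(\sqrt{\eps})$ because an $\eps$-fraction of inliers could in principle each contribute $\Theta(1/\sqrt{\eps})$ in some direction. Under exponential tails, projections onto any unit vector are $O(\log(1/\eps))$-bounded on all but an $\eps$-fraction of the mass, so this extremal fraction contributes only $O(\eps\log(1/\eps))$ to the error. The remaining covariance slack $\tau$ produces the cross term $O(\sqrt{\tau\eps})$ via Cauchy--Schwarz, and together these match the target $\delta = O(\sqrt{\tau\eps} + \eps\log(1/\eps))$.

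First I would establish deterministic stability of the uncorrupted samples in the style of \cite{DKKLMS16}. For $N = \tilde\Omega(d/\delta^2)$ samples, a covering argument over unit vectors combined with sub-exponential concentration shows that with high probability the uncorrupted index set $S$ satisfies, for every $S' \subseteq S$ with $|S'| \ge (1 - 2\eps)|S|$,
\begin{align*}
  \Bigl\|\tfrac{1}{|S'|}\sum_{i \in S'} Y_i - \mu^\star\Bigr\|_2 &\le O(\eps\log(1/\eps)), \\
  \Bigl\|\tfrac{1}{|S'|}\sum_{i \in S'} (Y_i - \mu^\star)(Y_i - \mu^\star)^\top - \Sigma\Bigr\|_2 &\le O(\tau + \eps\log^2(1/\eps)).
\end{align*}
Combined with $\|\Sigma - I\|_2 \le \tau$, this is the standard stable-set condition from robust statistics and will drive the win-win analysis.

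Next I would plug the stability into the primal-dual framework. For a working center $\nu$, consider admissible weights $w_i \in [0, 1/((1-\eps)N)]$ with $\sum_i w_i = 1$ and the primal SDP that minimizes $\lambda_{\max}\!\bigl(\sum_i w_i (Y_i - \nu)(Y_i - \nu)^\top\bigr)$. Stability implies that the uniform weights on $S$ are feasible with primal value at most $1 + O(\tau + \eps\log^2(1/\eps))$, so at every step one of two cases holds. In the win case (primal value close to this threshold), decomposing the weighted mean over $S$ versus outliers and applying both stability inequalities gives $\|\sum_i w_i Y_i - \mu^\star\|_2 \le O(\delta)$: the $\eps\log(1/\eps)$ term comes from the first stability inequality restricted to $S \cap \mathrm{supp}(w)$, and the $\sqrt{\tau\eps}$ term comes from Cauchy--Schwarz applied to the outlier second moment, which is controlled by the primal value. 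In the improvement case, the dual optimum $M \succeq 0$ with $\tr(M) \le 1$ certifies a variance excess that must be attributable to outliers, and adapting the filtering/reweighing argument of \cite{DKKLMS16, ChengDR19} reduces a potential measuring the mass on outliers by a $\poly(\eps)$ factor.

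Algorithmically, I would realize both phases via the positive SDP solver of \cite{PengTZ16} exactly as in \cite{ChengDR19}, iterating the primal-dual step $\poly\log(1/\eps)$ times with each iteration running in near-linear time in the ambient dimension. The main obstacle is the bookkeeping in the win-win step: the primal threshold, the sample complexity $N = \tilde\Omega(d/\delta^2)$, and the dual progress rate all must be tuned so that the two error contributions $\sqrt{\tau\eps}$ and $\eps\log(1/\eps)$ combine without hidden polylog blow-ups, and so that the improvement case produces geometric rather than merely polynomial progress toward $\delta$. Once these constants align, correctness and the claimed runtime follow by a direct adaptation of the analysis of \cite{ChengDR19}.
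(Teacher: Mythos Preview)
Your stability setup and the win-case analysis match the paper's argument closely: the paper likewise records deterministic conditions on the uncorrupted samples with parameters $\delta_1 = O(\eps\log(1/\eps))$ and $\delta_2 = O(\tau + \eps\log^2(1/\eps))$, and then shows (its Lemma~\ref{lem:wrong-mean-primal-nosol}) that any primal weights achieving objective at most $1 + O(\delta_2)$ produce a weighted mean within $O(\sqrt{\eps\,\delta_2}) = O(\sqrt{\tau\eps} + \eps\log(1/\eps))$ of $\mu^\star$. Your identification of the threshold $1 + O(\tau + \eps\log^2(1/\eps))$ is exactly the one Algorithm~\ref{alg:apxcov} uses.

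Where you diverge is the improvement case. Neither the paper nor \cite{ChengDR19} performs filtering or drives down a potential on outlier mass; that is the \cite{DKKLMS16} mechanism, and you have conflated the two. Algorithm~\ref{alg:apxcov} instead keeps the center $\nu$ as its only state and, whenever the dual value exceeds $1 + \Omega(\delta_2)$, extracts the \emph{top eigenvector} of the dual optimum $M$ and uses it (together with an estimate of $\|\nu - \mu^\star\|$ read off from the SDP value via Lemma~\ref{lem:ub-lb-opt}) to move $\nu$ toward $\mu^\star$ by a constant multiplicative factor (Lemma~\ref{lem:good-dual-better-nu}). Because $\nu$ is initialized $\poly(d)$ away, this terminates in $O(\log d)$ iterations, not $\poly\log(1/\eps)$. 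Since the lemma is a correctness statement about this specific algorithm, your filtering sketch---even if it could be made to work---does not prove it; and the progress rate you quote (``reduces a potential by a $\poly(\eps)$ factor'') is in any case too loose to bound the iteration count.
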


It is worth noting that we cannot use these mean estimation algorithms in a black-box manner.
This is because writing down the input explicitly takes $\Omega(N d^2) = \tilde \Omega(d^4/\eps^2)$ time,
  and these algorithms run in time $\tilde \Omega(N d^2) / \poly(\eps)$ in $d^2$ dimensions.
One of our main contributions is to show that it is possible to open up these algorithms and take advantage of the additional structures of our inputs (they all have the form $Y_i \otimes Y_i$) to implement both algorithms to run in time $\tilde O(d^{3.26})/\poly(\eps)$.

\begin{proposition}[Robust Mean Estimation with Tensor Input]
\label{prop:runtime-tensor}
If all input samples $(Z_i)_{i=1}^N$ have the form $Z_i = Y_i \otimes Y_i$ for some $Y_i \in \R^d$,
  and they are given implicitly as the vectors $(X_i)_{i=1}^N$,
  then both Algorithms~\ref{alg:boundedcov}~and~\ref{alg:apxcov} can be implemented to run in $\tilde O(d^{3.26}/\eps^8)$ time.
\end{proposition}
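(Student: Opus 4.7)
The plan is to open up the positive SDP solver inside the primal--dual framework of~\cite{ChengDR19} that Algorithms~\ref{alg:boundedcov}~and~\ref{alg:apxcov} both rely on, and to replace its bottleneck step with a routine that exploits the tensor structure $Z_i = Y_i \otimes Y_i$. In every iteration the solver maintains a potential matrix of the form $\Phi = \bar Z \bar Z^\top \in \R^{d^2 \times d^2}$, where $\bar Z \in \R^{d^2 \times N}$ has $i$-th column $\sqrt{w_i}(Z_i - \nu)$, and needs multiplicative approximations of $\exp(\Phi) \bullet (Z_i - \nu)(Z_i - \nu)^\top = \|\exp(\Phi/2)(Z_i - \nu)\|_2^2$ simultaneously for every $i$. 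Since even writing $\Phi$ down costs $\Omega(d^4)$, I would never materialize it and instead reduce everything to fast matrix--vector products with $\bar Z$ and $\bar Z^\top$.

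First I would show that each matvec with $\bar Z$ or $\bar Z^\top$ runs in $\tilde O(d^{3.26}/\eps^2)$ time. Using $Y_i \otimes Y_i = (Y_i Y_i^\top)^\flat$, for any $v \in \R^N$ one has $\bar Z v = (Y D Y^\top)^\flat - c(v)\,\nu$, where $Y \in \R^{d \times N}$ is the matrix of (rotated) samples, $D = \diag(v_i\sqrt{w_i})$, and $c(v) = \sum_i v_i\sqrt{w_i}$; the cost is dominated by the $(d, N, d)$-matrix product $Y D Y^\top$, which by Lemma~\ref{lem:fast-mult} runs in $\tilde O(d^{3.26}/\eps^2)$. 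For any $u \in \R^{d^2}$ with matricization $U$, the identity $(\bar Z^\top u)_i = \sqrt{w_i}\bigl(Y_i^\top U Y_i - \langle \mathrm{mat}(\nu), U\rangle\bigr)$ gives the same cost after computing $UY$ once (or directly by Lemma~\ref{lem:lr-mult}, since $\bar Z$ has the required arithmetic-circuit form). Throughout the algorithm $\nu$ is itself the flattening of a $d \times d$ matrix (either $0$ or a current estimate of $\expect{}{Y\otimes Y}$), so the shift never breaks the tensor structure and is cheap to maintain.

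Second I would reduce the bottleneck to polylogarithmically many such matvecs. Replace $\exp(\Phi/2)$ by a polynomial $p_K(\Phi)$ (Taylor, or a shifted/scaled Chebyshev polynomial chosen for relative accuracy) of degree $K = \tilde O(\|\Phi\|_2)$, which in the PengTZ normalization is polylogarithmic in the input size. Then apply a Johnson--Lindenstrauss sketch $G \in \R^{k \times d^2}$ of Gaussian rows with $k = \tilde O(1/\eta^2)$, so that $\|G\,p_K(\Phi)(Z_i - \nu)\|_2^2$ is a $(1\pm\eta)$-approximation to the target uniformly over $i \in [N]$. For each row $g$ of $G$, compute $p_K(\Phi)^\top g$ by Horner's rule in $O(K)$ matvecs with $\Phi = \bar Z \bar Z^\top$ (i.e.\ $O(K)$ matvecs each with $\bar Z^\top$ and $\bar Z$); a single final matvec with $\bar Z^\top$ returns all $N$ inner products for that row at once. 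Hence one batch of approximations costs $\tilde O(kK \cdot d^{3.26}/\eps^2)$, and multiplying by the $\tilde O(\poly(1/\eps))$ inner iterations of the SDP solver and the polylog outer iterations of the robust mean estimator yields the advertised $\tilde O(d^{3.26}/\eps^8)$ bound.

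The hard part will be propagating the approximation guarantees cleanly: the SDP solver needs \emph{multiplicative} approximations of the inner products, but a raw Taylor truncation only gives additive error while JL gives a multiplicative factor. To reconcile these I would choose a polynomial that achieves relative accuracy on the relevant spectral range of $\Phi$ (bounded by polylog in the PengTZ regime) and set $\eta$ and $k$ small enough that the downstream analyses of Lemmas~\ref{lem:mean-boundedcov}~and~\ref{lem:mean-apx-cov} still go through with at most a constant blowup. A secondary subtlety is that Algorithm~\ref{alg:apxcov} exploits finer distributional information about $Z = Y \otimes Y$ beyond a covariance bound, so some of its internal steps may fall outside the pure matvec template above; I expect each such step to admit an analogous tensor-exploiting implementation (again reducible to matvecs with $\bar Z$ and $\bar Z^\top$), but each will need to be checked individually when the algorithm is laid out in detail.
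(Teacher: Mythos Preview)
Your proposal is essentially the paper's proof: reduce the positive SDP oracle to matrix--vector products with $\bar Z$ and $\bar Z^\top$, implement those in $\tilde O(d^{3.26}/\eps^2)$ via the identity $Zv = (Y D_v Y^\top)^\flat$ together with fast rectangular matrix multiplication and the transposition principle, and then approximate $\exp(\Phi)$ by a low-degree polynomial composed with a Johnson--Lindenstrauss sketch.

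Two small points where you are overcomplicating or omitting details that the paper handles explicitly. First, the packing/covering SDP that the solver actually runs on has block-diagonal constraint matrices $A_i$ whose bottom-right block is $(1-\eps)N\,e_i e_i^\top$ (encoding the box constraint $w_i \le 1/((1-\eps)N)$); you never mention this block, but it is trivial since its contribution to both $\tr(\exp(\Psi))$ and $\exp(\Psi)\bullet A_i$ is computable in closed form. Relatedly, the oracle needs $\exp(\Psi)/\tr(\exp(\Psi)) \bullet A_i$, so you also need $\tr(\exp(\Psi))$; the paper gets this with the same polynomial-plus-JL trick applied to the standard basis vectors. Second, the issue you flag as ``the hard part''---that Taylor truncation only gives additive error---is not actually hard here: because $\Psi$ is PSD with $\|\Psi\|_2 = O(\log(d)/\eps)$, the degree-$O(\log(d)/\eps)$ Taylor polynomial of $\exp$ already gives \emph{multiplicative} spectral approximation (this is the content of the paper's Lemma on Taylor expansion of the matrix exponential), so no Chebyshev or spectral-shifting trick is needed.
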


We give a description of the algorithm for Lemma~\ref{lem:mean-boundedcov} in Appendix~\ref{app:boundedcov} (Algorithm~\ref{alg:boundedcov}).
We prove Lemma~\ref{lem:mean-apx-cov} and present the corresponding algorithm (Algorithm~\ref{alg:apxcov}) in Section~\ref{sec:mean}.
In Section~\ref{sec:sdp}, we show that both algorithms can be implemented to run in time $\tilde O(d^{3.26})/\poly(\eps)$ (Proposition~\ref{prop:runtime-tensor}).
We first use Lemmas~\ref{lem:mean-boundedcov},~\ref{lem:mean-apx-cov},~and~Proposition~\ref{prop:runtime-tensor} to prove the iterative refinement lemmas.

\bigskip
\begin{proof}[Proof of Lemma~\ref{lem:first-loop}]
Given an upper bound $\Sigma_t$ on the true covariance matrix,
  we can rotate the input samples to compute $Y_i = \Sigma_t^{-1/2} X_i$.
Let $Z_i = Y_i \otimes Y_i$.
Note that when $X \sim \NN(0, \Sigma)$, the random variable $Y = \Sigma_t^{-1/2} X$ is drawn from a Gaussian distribution with covariance $\Sigma_Y = \Sigma_t^{-1/2} \Sigma \Sigma_t^{-1/2} \preceq I$.
Lemma~\ref{lem:tensor-cov} implies that, if $\Sigma_Y \preceq I$, then $\Sigma_Z \preceq 2 I$.
Therefore, $(Z_i)_{i=1}^N$ is an $\eps$-corrupted set of samples drawn from a distribution with bounded covariance, so we can apply Algorithm~\ref{alg:boundedcov} to robustly estimate its mean.

Let $M$ be the output of Algorithm~\ref{alg:boundedcov} reshaped into a $d \times d$ matrix.
Because
$ \expect{}{Z} = \left(\expect{}{Y Y^\top}\right)^\flat = \left(\Sigma_t^{-1/2} \Sigma \Sigma_t^{-1/2}\right)^\flat$
  and Algorithm~\ref{alg:boundedcov} (Lemma~\ref{lem:mean-boundedcov}) guarantees that $\normtwo{M^\flat - \expect{}{Z}} \le O(\sqrt{\eps})$,
\[ \fnorm{M - \Sigma_t^{-1/2} \Sigma \Sigma_t^{-1/2}} \le O(\sqrt{\eps}) \; . \]
Let $\hat\Sigma = \Sigma_t^{1/2} M \Sigma_t^{1/2}$.  Using $\fnorm{A B} \le \normtwo{A} \fnorm{B}$, we can prove the first part of the lemma,
\[ \fnorm{\hat\Sigma - \Sigma} = \fnorm{\Sigma_t^{1/2} M \Sigma_t^{1/2} - \Sigma} \le O(\sqrt{\eps})\normtwo{\Sigma_t} \; . \]

As a result, we have that $-O(\sqrt{\eps}) I \preceq \Sigma_t^{-1/2} (\hat \Sigma - \Sigma) \Sigma_t^{-1/2} \preceq O(\sqrt{\eps}) I$, or equivalently
\[ \hat\Sigma - O(\sqrt{\eps}) \Sigma_t \preceq \Sigma \preceq \hat\Sigma + O(\sqrt{\eps}) \Sigma_t \; . \]
Now $\Sigma_{t+1} = \hat\Sigma + O(\sqrt{\eps}) \Sigma_t$ is a better upper bound, which satisfies $\Sigma \preceq \Sigma_{t+1} \preceq \Sigma + O(\sqrt{\eps})\Sigma_t$.

For the running time, we can compute $\Sigma_t^{-1/2}$ explicitly in time $O(d^\omega)$ using SVD~\cite{demmel2007fast}.
Given the input sample matrix $X \in \R^{d \times N}$,
  we can apply $\Sigma_t^{-1/2}$ to all samples by computing $Y = \Sigma_t^{-1/2} X$ via fast rectangular matrix multiplication in time $\tilde O(d^{3.26}/\eps^2)$ (Lemma~\ref{lem:fast-mult}).

Since all the $Z_i$'s have the form $Y_i \otimes Y_i$,
Proposition~\ref{prop:runtime-tensor} shows that Algorithms~\ref{alg:boundedcov} has running time $\tilde O(d^{3.26}/\eps^8)$.
Given the output of Algorithms~\ref{alg:boundedcov}, we can compute the new upper bound $\Sigma_{t+1}$ in time $O(d^\omega)$ using a constant number of $d \times d$ matrix additions and multiplications.
\end{proof}

\begin{proof}[Proof of Lemma~\ref{lem:second-loop}]
Let $Y_i = \Sigma_t^{-1/2} X_i$ and $Z_i = Y_i \otimes Y_i$.
We know that
\[ \normtwo{\Sigma_Y - I} = \normtwo{\Sigma_t^{-1/2} \Sigma \Sigma_t^{-1/2} - I} \le \tau \; . \]
By Lemma~\ref{lem:tensor-cov}, we have $\normtwo{\Sigma_Z - 2 I} \le O(\tau)$.
By standard concentration results, $Z = Y \otimes Y$ has exponential concentration about its mean in any direction.
Therefore, $(Z_i)_{i=1}^N$ is an $\eps$-corrupted set of samples drawn from a distribution that satisfies the conditions in Lemma~\ref{lem:mean-apx-cov},
  and we can apply Algorithm~\ref{alg:apxcov} to robustly learn the mean of $Z$.
By Lemma~\ref{lem:mean-apx-cov}, we can compute a matrix $M$ such that
\[ \fnorm{M - \Sigma_t^{-1/2} \Sigma \Sigma_t^{-1/2}} \le O(\sqrt{\eps \tau} + \eps \log(1/\eps)) \; . \]
Let $\hat\Sigma = \Sigma_t^{1/2} M \Sigma_t^{1/2}$.
Using $\fnorm{A B} \le \normtwo{A} \fnorm{B}$, we can prove the first part of the lemma,
\begin{align*}
 \fnorm{\Sigma^{-1/2} \hat\Sigma \Sigma^{-1/2} - I}
&=\fnorm{\Sigma^{-1/2} \Sigma_t^{1/2} M \Sigma_t^{1/2} \Sigma^{-1/2} - I} \\
&\le \normtwo{\Sigma^{-1/2} \Sigma_t^{1/2}} \fnorm{M - \Sigma_t^{-1/2} \Sigma \Sigma_t^{-1/2}} \normtwo{\Sigma_t^{1/2} \Sigma^{-1/2}} \\
&\le (1 + \tau) \cdot O(\sqrt{\eps \tau} + \eps \log(1/\eps))  \\
&= O(\sqrt{\eps \tau} + \eps \log(1/\eps)) \; .
\end{align*}
This gives a better upper bound $\Sigma_{t+1} = \hat\Sigma + O(\sqrt{\eps \tau} + \eps \log(1/\eps)) \Sigma_t$ such that
  $\Sigma \preceq \Sigma_{t+1} \preceq \Sigma + O(\sqrt{\eps \tau} + \eps \log(1/\eps)) \Sigma_t$.
  
We omit the running time analysis because it is identical to the one in the previous proof.
\end{proof}

\section{Robust Mean Estimation Subroutines}
\label{sec:mean}
We first present the robust mean estimation algorithm (Algorithm~\ref{alg:apxcov}) that achieves Lemma~\ref{lem:mean-apx-cov}. 
\begin{algorithm}[h]
  \caption{Robust Mean Estimation with Approximately Known Covariance}
  \label{alg:apxcov}
  \SetKwInOut{Input}{Input}
  \SetKwInOut{Output}{Output}
  \Input{An $\eps$-corrupted set of $N$ samples $\{Z_i\}_{i=1}^N$ on $\R^d$ with $N = \tilde \Omega(d/\eps^2)$ and $\eps < \eps_0$.}
  \Output{A vector $\hat \mu \in \R^d$ such that, with high probability, $\normtwo{\hat \mu - \mus} \le O(\sqrt{\eps \tau} + \eps\log(1/\eps))$.}
  Let $\nu \in \R^d$ be an initial guess with $\normtwo{\nu - \mus} \le \poly(d)$. \\
  \For{$i = 1$ {\bf to} $O(\log d)$}{
   Compute a near-optimal solution $w \in \R^N$ to the primal SDP~\eqref{eqn:primal-sdp} with parameters $\nu$ and $2\eps$. \\
   Compute a near-optimal solution $M \in \R^{d \times d}$ for the dual SDP~\eqref{eqn:dual-sdp} with parameters $\nu$ and $\eps$.\\
   \eIf{the value of $w$ in SDP~\eqref{eqn:primal-sdp} is at most $1 + c (\tau + \eps\log^2(1/\eps))$ for a universal constant $c$}{
     \Return{the weighted empirical mean $\hat \mu_w = \sum_{i=1}^N w_i Z_i$} (Lemma~\ref{lem:wrong-mean-primal-nosol}) .\\
   }{
     Move $\nu$ closer to $\mus$ using the top eigenvector of $M$ (Lemma~\ref{lem:good-dual-better-nu}).
   }
  }
\end{algorithm}

We first recall the primal-dual approach recently developed in~\cite{ChengDR19}.
Given data points $Z_i = X_i\otimes X_i$, their algorithm starts with a guess $\nu \in \R^{d^2}$, and then in each iteration solves the primal and dual SDPs~\eqref{eqn:primal-sdp}~and~\eqref{eqn:dual-sdp}.
They showed that either a good primal solution gives weights $w \in \R^N$ such that the weighted average $\sum_{i=1}^N w_i Z_i$ is close to the true mean;
  or a good dual solution must identify a direction of improvement that allows the algorithm to move $\nu' \in \R^{d^2}$ much closer to the true mean of $Z$.

There are two obstacles for applying the algorithmic framework of~\cite{ChengDR19} to our setting.
First, the input samples $Z_i = X_i \otimes X_i$'s are $d^2$-dimensional vectors.
Writing down these vectors explicitly takes time $\Omega(Nd^2) = \Omega(d^4)$.
Therefore, we want to solve the SDPs~\eqref{eqn:primal-sdp}~and~\eqref{eqn:dual-sdp} on input $Z_i$ without computing them explicitly.
We resolve this issue in Section~\ref{sec:sdp} (Proposition~\ref{prop:runtime-tensor}).

Second, their algorithms have error $O(\sqrt{\eps})$ for bounded-covariance distributions, and error $O(\eps\sqrt{\log(1/\eps)})$ for sub-gaussian distributions with identity covariance matrix.
While we can directly use their result for bounded-covariance distributions for Lemma~\ref{lem:mean-boundedcov}, we need to develop a new algorithm for Lemma~\ref{lem:mean-apx-cov}.
In Lemma~\ref{lem:mean-apx-cov}, we have a distribution with exponential decaying tails, and we know its covariance is $\tau$-close to the identity matrix.
We want to robustly estimate its mean, with optimal error guarantees that depend on both $\eps$ and $\tau$.
We generalize the analysis of~\cite{ChengDR19} to handle this case.
Lemma~\ref{lem:mean-apx-cov} is proved in Appendix~\ref{app:mean}.

\section{Faster Implementation of Robust Mean Estimation with Tensor Inputs}
\label{sec:sdp}
The bottleneck of both Algorithms~\ref{alg:boundedcov}~and~\ref{alg:apxcov} are solving SDPs~\eqref{eqn:primal-sdp}~and~\eqref{eqn:dual-sdp}.
In this section, we prove Proposition~\ref{prop:runtime-tensor}, which states that when all input samples have the tensor-product form $Y \otimes Y$, we can solve these SDPs in time $\tilde O(d^{3.26})/\poly(\eps)$.

We first convert the SDPs~\eqref{eqn:primal-sdp}~and~\eqref{eqn:dual-sdp} into packing/covering SDPs as follows.

\begin{lp}
\label{eqn:primal-standard}
\maxi{\mone^\top w}
\st \con{w_i \ge 0, \sum_{i=1}^N w_i A_i \preceq I}
\end{lp}
\begin{lp}
\label{eqn:dual-standard}
\mini{\tr(M)}
\st \con{A_i \bullet M \ge 1, M \succeq 0 \;,}  
\end{lp}
where each $A_i \in \R^{(d^2+N)\times(d^2+N)}$ is a PSD matrix given by\footnote{Recall that $X_i \in \R^{d \times 1}$ is the $i$-th sample, and $e_i \in \R^{N \times 1}$ is the $i$-th standard basis vector.}
\begin{align}
\label{eqn:pc-sdp-input}
A_i & = \left[ \begin{array}{cc}
\rho (Z_i-\nu) (Z_i-\nu)^\top & 0 \\ 0 & (1-\eps)N \cdot e_i e_i^\top
\end{array} \right] \; .
\end{align}

Here $\rho$ is a binary search parameter that is between $\frac{1}{d}$ and $1$.
At the core of nearly-linear time width-independent SDP solvers~(e.g., \cite{AllenLO16,PengTZ16}) is an application of matrix multiplicative weight update,
  where the algorithm maintains a weighted sum $\Psi$ of the matrices.
In iteration $t$, we have $\Psi^t = \sum_{i=1}^n w_i A_i$, and we will update the weights based on the values of $A_i \bullet \frac{\exp(\Psi^t)}{\tr(\exp(\Psi^t))}$.

\begin{lemma}[Positive SDP Solver,~\cite{PengTZ16}]
\label{lem:ptz16}
Let $A_1, \ldots, A_n$ be $m \times m$ PSD matrices given in factorized form $A_i = C_i C_i^\top$.
Consider the following pair of packing and covering SDPs:
\begin{alignat*}{3}
\max_{x \ge 0}     & \; \mone^\top x && \quad \textup{ s.t. } \sum_{i=1}^n x_i A_i \preceq I \; . \\
\max_{Y \succeq 0} & \; \tr(Y)       && \quad \textup{ s.t. } A_i \bullet Y \ge 1, \forall i \; .
\end{alignat*}
Fix $\eps > 0$.
Given an oracle algorithm that, on input $\Psi = \sum_{i=1}^n w_i A_i$ with $\normtwo{\Psi} = O(\log(n)/\eps)$,
  runs in time $T_{\exp}$ and returns $(1\pm \eps)$-multiplicative approximations to $\frac{\exp(\Psi)}{\tr(\exp(\Psi))} \bullet A_i$ for all $i$.
Then, we can compute feasible primal and dual solutions $x$ and $Y$, such that with high probability, $\mone^\top x \ge (1-O(\eps)) \OPT$ and $\tr(Y) \le (1+O(\eps)) \OPT$.
Moreover, we can do so in time $\tilde O((T_{\exp} + n) \log^2 n / \eps^{3})$, where $q$ is the total number of non-zero entries in the $C_i$'s.
\end{lemma}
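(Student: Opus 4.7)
The plan is to apply the matrix multiplicative weights update (MMWU) framework, following the standard approach for positive packing/covering SDP solvers. I would maintain primal weights $w \ge 0$ with $\Psi = \sum_i w_i A_i$ as their running weighted sum, and use the implicit dual $Y = \exp(\Psi)/\tr(\exp(\Psi))$ as the Gibbs measure against which constraints are evaluated. The oracle hypothesis provides $(1\pm\eps)$-approximations $\tilde v_i \approx A_i \bullet Y$ in time $T_{\exp}$ per call, which is exactly what the update step needs. Duality between the packing primal and covering dual is certified through an approximate complementary-slackness relation: $\sum_i w_i (A_i \bullet Y) \approx \mone^\top x$, so if the $\tilde v_i$'s are uniformly close to $1$ we immediately obtain a pair $(x, Y)$ with matching objectives up to $(1 \pm O(\eps))$.

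For the outer loop, in each iteration I would call the oracle to compute the $\tilde v_i$'s, identify the indices for which $\tilde v_i$ is small (meaning constraint $i$ is ``slack'' under the current dual), and perform a carefully tuned multiplicative update on the corresponding $w_i$'s. I would simultaneously maintain an explicit primal vector $x$ (via rescaling $w$ so that the constraint $\sum_i x_i A_i \preceq I$ is just satisfied) and read off the dual $Y$ as above. The convergence certificate would come from a potential function $\Phi(\Psi) = \log \tr(\exp(\Psi))$, whose growth per step is controlled via the Golden-Thompson / Lieb inequality, yielding a bound of the form $\Delta \Phi \le (1+O(\eps))\, \Delta(\mone^\top x) + O(\eps)$.

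The main obstacle is the \emph{width-independence}: the AHK-style MMWU analysis yields an iteration count scaling with the maximum eigenvalue of any $A_i$, which in our setting could be polynomial in $d$ or $N$. The fix, as in \cite{PengTZ16}, is twofold. First, restrict updates to indices whose current $\tilde v_i$ lies in a bounded range around $1$, and use per-coordinate step sizes inversely proportional to $\tilde v_i$ so that ``heavy'' constraints contribute the same amortized cost as light ones; this decouples the progress measure from the spectral width. Second, feed the primal updates back into the covering side via an outer loop over a geometric grid of thresholds, so that each constraint is active in only $O(\log n / \eps)$ iterations. A telescoping argument on $\Phi$ then bounds the total number of outer iterations by $O(\log^2 n / \eps^3)$.

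Combining these pieces, the total work is $O(\log^2 n / \eps^3)$ iterations, each costing one oracle call of time $T_{\exp}$ plus $O(n)$ bookkeeping to update the weights and maintain $x$, for a grand total of $\tilde O((T_{\exp} + n)\log^2 n / \eps^3)$. The correctness bound $\mone^\top x \ge (1 - O(\eps))\OPT$ and $\tr(Y) \le (1 + O(\eps))\OPT$ then follows from weak duality applied to the approximate complementary-slackness relation. The hardest technical piece is showing that the truncated/reweighted updates still give an unbiased estimate of the true MMWU gradient up to $O(\eps)$ relative error even when the oracle itself only returns multiplicative approximations, since errors can compound across the $O(\log^2 n / \eps^3)$ iterations; the standard resolution is to absorb the oracle error into the step size and use concentration to show the cumulative drift is $O(\eps)$ with high probability.
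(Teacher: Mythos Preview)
The paper does not prove this lemma at all: it is stated as a black-box result imported from \cite{PengTZ16}, and the paper's contribution in this section is solely to implement the oracle that the lemma requires (Lemma~\ref{lem:mat-exp}). So there is no ``paper's own proof'' to compare your proposal against.

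That said, your sketch is a reasonable high-level summary of how width-independent positive SDP solvers in the style of \cite{PengTZ16} and \cite{AllenLO16} work: maintain MMWU weights, read off the dual as $\exp(\Psi)/\tr(\exp(\Psi))$, use per-coordinate step sizes to kill the width dependence, and control convergence through the softmax potential $\log\tr(\exp(\Psi))$. If you actually wanted to fill in the details you would need to be more careful than your outline suggests about how the $(1\pm\eps)$ oracle error interacts with the thresholding step (the solvers in \cite{PengTZ16} are designed so that only \emph{relative} accuracy on $A_i\bullet Y$ is needed, which is why this oracle formulation works), and about how the dual certificate $Y$ is extracted as an average over iterates rather than a single snapshot. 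But since the present paper treats the lemma as a citation, no proof is required here; you can simply invoke \cite{PengTZ16}.
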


In the rest of this section, we will prove that when each $Z_i$ has the form $Z_i = Y_i \otimes Y_i$,
  we can implement the oracle algorithm required by Lemma~\ref{lem:ptz16} in time $T_{\exp} = \tilde O(d^{3.26} / \eps^5)$.
It is worth pointing out that we need to implement this oracle without ever writing down $\Psi \in \R^{d^2 \times d^2}$ explicitly.
  
We will approximate each $\exp(\Psi) \bullet A_i$ and $\tr(\exp(\Psi))$ separately.
Observe that $\Psi = \sum_{i=1}^n w_i A_i$ and $\exp(\Psi)$ have the same block structure as the $A_i$'s.
Due to the special structure of the bottom-right block, we can compute its contribution to $\tr(\exp(\Psi))$ and $\exp(\Psi) \bullet A_i$ exactly.
Therefore, we can focus on the top-left block.
Moreover, because the goal is to compute a multiplicative approximation of the top-left block's contribution to $\tr(\exp(\Psi))$ and $\exp(\Psi) \bullet A_i$,
  we can ignore the scalar $\rho$.
We prove the following lemma.

\begin{lemma}
\label{lem:mat-exp}
Fix $d > 0$ and $N = \tilde \Omega(d^2 / \eps^2)$.
Fix $Y \in \R^{d \times N}$, $w \in \R^N$, $\nu \in \R^{d^2}$, and $0 < \eps < 1$.
Let $Z_i = Y_i \otimes Y_i$ and (abusing notation) let $\Psi = \sum_{i=1}^N w_i (Z_i - \nu)(Z_i - \nu)^\top$.
Suppose $\normtwo{\Psi} = O(\log d / \eps)$.
We can compute, in time $\tilde O(d^{3.26}/\eps^5)$, $(1\pm \eps)$-multiplicative approximations to $\tr(\exp(\Psi))$ and $\exp(\Psi) \bullet ((Z_i - \nu) (Z_i - \nu)^\top)$ for all $i \in [N]$.
\end{lemma}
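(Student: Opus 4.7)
The plan is to reduce all required quantities to a few efficient matrix--vector products with $\Psi$, combining a truncated Taylor expansion of $\exp(\cdot/2)$ with Johnson--Lindenstrauss sketching. The central obstacle is that $\Psi \in \R^{d^2 \times d^2}$ is too large to materialize, but we can write $\Psi = W W^\top$ where $W \in \R^{d^2 \times N}$ has $i$-th column $W_i = \sqrt{w_i}\,(Z_i - \nu)$, and the tensor form $Z_i = Y_i \otimes Y_i$ will let us apply $W$ and $W^\top$ without ever writing $W$ down. First I would pick $q(x) = \sum_{k=0}^{K} (x/2)^k/k!$, the degree-$K$ Taylor truncation of $\exp(x/2)$, with $K = \Theta(\normtwo{\Psi} + \log(1/\eps)) = \tilde O(1/\eps)$. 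Standard remainder bounds give $q(\lambda) = (1 \pm \eps/4)\exp(\lambda/2)$ for all $\lambda \in [0, \normtwo{\Psi}]$, so diagonalizing $\Psi$ yields $q(\Psi)^2 \bullet A = (1 \pm \eps)\exp(\Psi) \bullet A$ for every PSD matrix $A$. It therefore suffices to $(1 \pm \eps)$-approximate $\tr(q(\Psi)^2) = \fnorm{q(\Psi)}^2$ and the column norms $\normtwo{q(\Psi)(Z_i - \nu)}^2$ for $i \in [N]$.

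Second, I would implement a single matrix--vector product $v \mapsto \Psi v = W(W^\top v)$ in time $\tilde O(d^{3.26}/\eps^2)$. To compute $W^\top v$, reshape $v \in \R^{d^2}$ into $V \in \R^{d \times d}$; the $i$-th coordinate of $W^\top v$ is $\sqrt{w_i}\,(Y_i^\top V Y_i - \nu^\top v)$, and the vector $(Y_i^\top V Y_i)_i$ is precisely the diagonal of $Y^\top (V Y)$. The product $V Y$ is a $(d,d,N)$-rectangular multiplication, executable in $\tilde O(d^{3.26}/\eps^2)$ time via Lemma~\ref{lem:fast-mult}, and extracting the diagonal of $Y^\top (V Y)$ takes $O(Nd)$. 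The reverse map $u \mapsto Wu$ is symmetric: $\sum_i u_i \sqrt{w_i}\, Z_i$ is the flattening of $Y\,\diag(u_i \sqrt{w_i})\,Y^\top$, again a rectangular product at the same cost, and it can also be derived from $W^\top$ via the transposition principle (Lemma~\ref{lem:lr-mult}). Iterating $K$ times computes $q(\Psi) v$ in $\tilde O(d^{3.26}/\eps^3)$.

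Third, I would draw $s = \Theta(\log N / \eps^2) = \tilde O(1/\eps^2)$ i.i.d.\ vectors $g_1, \dots, g_s \sim \NN(0, I_{d^2})$ and compute $v_j = q(\Psi) g_j$ for each $j$, at total cost $s \cdot \tilde O(d^{3.26}/\eps^3) = \tilde O(d^{3.26}/\eps^5)$ (the runtime bottleneck). I would then output $\frac{1}{s}\sum_j \normtwo{v_j}^2$ as the trace estimate and $\frac{1}{s}\sum_j (v_j^\top(Z_i - \nu))^2$ as the estimate of $\normtwo{q(\Psi)(Z_i - \nu)}^2$ for each $i$. For the trace, Hutchinson's identity $\expect{}{\normtwo{q(\Psi) g}^2} = \tr(q(\Psi)^2)$ with variance $2\,\tr(q(\Psi)^4) \le 2\,\tr(q(\Psi)^2)^2$ gives $(1 \pm \eps)$ accuracy at $s = \tilde O(1/\eps^2)$. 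For the column norms, each $(v_j^\top(Z_i - \nu))^2 = (g_j^\top q(\Psi)(Z_i - \nu))^2$ is a centered Gaussian squared with variance $\normtwo{q(\Psi)(Z_i - \nu)}^2$, and the classical Johnson--Lindenstrauss lemma applied to the $N$ fixed targets $\{q(\Psi)(Z_i - \nu)\}_{i=1}^N$ yields $(1 \pm \eps)$ multiplicative error for every $i$ simultaneously. Forming the inner products $v_j^\top (Z_i - \nu)$ for all $i$ at fixed $j$ reuses the diagonal-of-$Y^\top V_j Y$ trick at cost $\tilde O(d^{3.26}/\eps^2)$ per $j$, absorbed by the cost of computing the $v_j$'s.

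The hard part is to certify multiplicative (not additive) accuracy on all $N$ bilinear forms simultaneously from a single sketch of dimension $\tilde O(1/\eps^2)$; this is exactly what JL guarantees for a fixed set of $N$ targets, closing the argument by a union bound over $i \in [N]$. Summing over the three steps, the $sK = \tilde O(1/\eps^3)$ matrix--vector products with $\Psi$ dominate, giving the claimed $\tilde O(d^{3.26}/\eps^5)$ runtime.
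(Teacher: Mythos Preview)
Your proposal is correct and follows essentially the same approach as the paper: factor $\Psi = WW^\top$, implement a fast matrix--vector product with $\Psi$ via the Kronecker structure of $Z_i = Y_i\otimes Y_i$ and rectangular matrix multiplication, replace $\exp(\Psi/2)$ by a degree-$\tilde O(1/\eps)$ Taylor polynomial, and use $\tilde O(1/\eps^2)$ random projections to estimate the trace and the $N$ squared norms. The only cosmetic differences are that the paper invokes the transposition principle to pass from $W$ to $W^\top$ whereas you give explicit constructions for both directions, and the paper frames the trace estimate as JL applied to the $d^2$ column norms $\|M e_i\|^2$ rather than Hutchinson's estimator---but these are the same computation viewed two ways.
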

\begin{proof}
Let $Z \in \R^{d \times N}$ be the matrix whose $i$-th column is $Z_i$.
Observe that
$\Psi = (Z - \nu {\bf 1}^\top) D_w (Z - \nu {\bf 1}^\top)^\top \in \R^{d^2 \times d^2}$,
  where $D_w \in \R^{N \times N}$ is a diagonal matrix with $w$ on the diagonal.
First we show that, for any vector $s \in \R^{d^2}$, we can compute the matrix-vector multiplication
$\Psi s = (Z - \nu {\bf 1}^\top) D_w (Z - \nu {\bf 1}^\top)^\top s$
  in time $\tilde O(d^{3.26}/\eps^2)$.
This is because
\begin{enumerate}[itemsep=0pt,parsep=0pt,topsep=2pt]
\item[(i)] $(Z - \nu {\bf 1}^\top) s = Zs - ({\bf 1}^\top s) \nu$ and we can compute $Z s = \left(Y D_s Y^\top\right)^\flat$ via fast rectangular matrix multiplication (Lemma~\ref{lem:fast-mult}),
\item[(ii)] matrix-vector multiplication with $(Z - \nu {\bf 1}^\top)$ or $(Z - \nu {\bf 1}^\top)^\top$ has the same running time by the Transposition Principle (Lemma~\ref{lem:lr-mult} in Section~\ref{sec:prelim})), and
\item[(iii)] multiplication with a diagonal matrix $D_w$ can be done in time $O(N)$.
\end{enumerate}

We continue to show how to compute some $\eta$ such that $\eta \approx_{\eps/2} \tr(\exp(\Psi))$.
Since $\normtwo{\Psi} = \eps^{-1}\log d$, by Lemma~\ref{lem:taylor}, we can find a degree-$(\eps^{-1}\log d)$ matrix polynomial $p$ such that $p(\Psi) \approx_{\eps/8} \exp(\Psi/2)$.
Let $M = p(\Psi)$, we have $\tr(M^2) \approx_{\eps/4} \tr(\exp(\Psi))$.
Thus, it is sufficient to compute some $\eta \approx_{\eps/4} \tr(M^2)$.
We will write $\tr(M^2) = \sum_{i=1}^{d^2} (M^2)_{i,i}$ and approximate all $(M^2)_{i,i} = \normtwo{M e_i}^2$ simultaneously.
By the Johnson-Lindenstrauss lemma, there is a $O(\log d/\eps^2) \times d^2$ matrix $Q$ such that with high probability,
$\normtwo{M e_i}^2 \approx_{\eps/4} \normtwo{Q M e_i}$
  for all $i \in [d^2]$.
Note that $\Psi$ is symmetric and so is $M$.
We can compute $Q M = (M Q^\top)^\top$ by multiplying each column of $Q^\top$ through $M$,
  and each $M Q_i = p(\Psi) Q_i$ can be evaluated using $\deg(M)$ matrix-vector multiplications $\Psi v$ for some $v \in \R^{d^2}$.
The overall running time to approximate $\tr(\exp(\Psi))$ is
  $O(\log d / \eps^2) \cdot O(\log d / \eps) \cdot \tilde O(N + d^{3.26}/\eps^2) = \tilde O(d^{3.26}/\eps^5)$.

We approximate $\exp(\Psi) \bullet (Z_i - \nu)(Z_i - \nu)^\top$ using a similar approach:
$\exp(\Psi) \bullet (Z_i - \nu)(Z_i - \nu)^\top
  = \normtwo{\exp(\Psi/2) (Z_i - \nu)}^2
  \approx_{\eps/4} \normtwo{M (Z_i - \nu)}^2
  \approx_{\eps/4} \normtwo{Q M (Z_i - \nu)}^2$.
Notice that the last line is precisely the squared norm of the $i$-th column of $Q M (Z - \nu{\bf 1}^\top)$.
For the same reasons as in the previous case, we can compute this matrix in time $\tilde O(d^{3.26}/\eps^5)$.
\end{proof}

We can approximate $\exp(A)$ with a matrix polynomial of $A$, whose degree depends on the spectral norm of $A$ and the desired precision (see, e.g.,~\cite{AroraK16}).
\begin{lemma}[Taylor Expansion of Matrix Exponential]
\label{lem:taylor}
Let $A$ be PSD matrix with $\normtwo{A} \le \ell$, then there exists a polynomial $p(A)$ of degree $O(\max(\ell, \log(2/\eps)))$ such that $p(A) \approx_{\eps} \exp(A)$.
\end{lemma}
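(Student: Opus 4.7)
\textbf{Proof plan for Lemma~\ref{lem:taylor}.} The natural candidate is the truncated Taylor expansion
\[ p_K(A) \;=\; \sum_{k=0}^{K} \frac{A^k}{k!}, \]
with the truncation degree $K = O(\max(\ell, \log(2/\eps)))$ to be chosen. The plan is to (i) bound the tail $\exp(A) - p_K(A)$ in spectral norm, and then (ii) convert this additive guarantee into the multiplicative $\approx_\eps$ guarantee using the fact that for a PSD matrix $A$ one has $\exp(A) \succeq I$.

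For step (i), since $A$ is PSD with $\normtwo{A} \le \ell$, we have $\normtwo{A^k} \le \ell^k$, so by the triangle inequality
\[ \normtwo{\exp(A) - p_K(A)} \;\le\; \sum_{k=K+1}^{\infty} \frac{\ell^k}{k!}. \]
The plan is to bound this tail using the standard estimate $k! \ge (k/e)^k$, which gives $\frac{\ell^k}{k!} \le \left(\frac{e\ell}{k}\right)^k$. Choosing $K \ge 2e\ell$ guarantees $\frac{e\ell}{k} \le 1/2$ for all $k > K$, so the tail is at most $\sum_{k>K} 2^{-k} \le 2^{-K}$. Taking $K = \Theta(\max(\ell, \log(2/\eps)))$ large enough then forces this tail to be at most $\eps/2$, so $\normtwo{\exp(A) - p_K(A)} \le \eps/2$.

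For step (ii), write $E := p_K(A) - \exp(A)$. Since $A \succeq 0$ implies $\exp(A) \succeq I$, we have $\exp(A)^{-1/2} \preceq I$, and so
\[ \normtwo{\exp(A)^{-1/2}\, E\, \exp(A)^{-1/2}} \;\le\; \normtwo{E} \;\le\; \eps/2. \]
Consequently
\[ -\tfrac{\eps}{2}\, I \;\preceq\; \exp(A)^{-1/2}\, p_K(A)\, \exp(A)^{-1/2} - I \;\preceq\; \tfrac{\eps}{2}\, I, \]
which after conjugating by $\exp(A)^{1/2}$ yields $(1-\eps)\exp(A) \preceq p_K(A) \preceq (1+\eps)\exp(A)$, i.e.\ $p_K(A) \approx_\eps \exp(A)$ in the sense used throughout the paper.

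The only mildly delicate point is verifying that the chosen $K = O(\max(\ell, \log(2/\eps)))$ simultaneously satisfies $K \ge 2e\ell$ and $2^{-K} \le \eps/2$; both are immediate from the max, and the factor of $2e$ is absorbed into the big-$O$. The rest is routine and uses only $\normtwo{A^k} \le \ell^k$ together with $\exp(A) \succeq I$, so I expect no real obstacle.
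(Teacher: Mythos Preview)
Your proposal is correct. The paper itself does not prove Lemma~\ref{lem:taylor}; it simply cites \cite{AroraK16} for the result. Your truncated Taylor expansion argument is exactly the standard self-contained proof, and both steps are sound: the tail bound via $k! \ge (k/e)^k$ together with $K \ge 2e\ell$ gives the additive error $\le 2^{-K}$, and the conversion to a multiplicative guarantee via $\exp(A) \succeq I$ (hence $\|\exp(A)^{-1/2}\|_2 \le 1$) is valid since all the matrices involved commute and are symmetric. One minor remark: because $A$ is PSD, all terms in the Taylor series are PSD, so in fact $p_K(A) \preceq \exp(A)$ automatically and you only need the lower-bound side; your conjugation argument handles both sides uniformly, which is fine.
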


\begin{proof}[Proof of Proposition~\ref{prop:runtime-tensor}]
By Lemma~\ref{lem:ptz16}, we only need to show that the required oracle algorithm can be implemented in time $\tilde O(d^{3.26}/\eps^5)$.
We approximate $\tr(\exp(\Psi))$ and each $\exp(\Psi) \bullet A_i$ separately.
Given $\Psi = \sum_{i=1}^N w_i A_i$, we will compute the contribution from bottom-right block explicitly,
  and use Lemma~\ref{lem:mat-exp} for the top-left block.
The bottom-right block adds $\sum_{i=1}^N \exp(w_i (1-\eps) N)$ to $\tr(\exp(\Psi))$,
  and for every $i$, it adds $w_i \exp(w_i (1-\eps) N) $ to $\exp(\Psi) \bullet A_i$.
\end{proof}

\section{Evidence of Hardness}
\label{sec:hardness}
In this section, we provide some evidence which suggests that the running time of our algorithm has near-optimal dependence on $d$.
We start by noting that our sample complexity $N = \tilde \Omega(d^2/\eps^2)$ is tight up to polylogarithmic factors, and this holds even when there is no corruption.
For the rest of this section, we will assume both $\eps$ and $\kappa$ are constants, and focus on the dependence on $d$ in the running time.
Since the running time of our algorithm is dominated by $(d, d^2, d)$-matrix multiplication time,   faster matrix multiplication algorithms time will improve our running time.

In Section~\ref{sec:hardness-matrix}, we show that even when there are no corrupted samples, it is not known how to compute the empirical covariance matrix faster than $(d, d^2, d)$-matrix multiplication time.
We give a communication complexity lower bound that rules out all oblivious matrix sketching approaches.

In Section~\ref{sec:hardness-weights}, to circumvent the difficulty raised in Section~\ref{sec:hardness-matrix}, we consider a weaker problem where the algorithm only need to find a set of good weights (instead of a $d \times d$ matrix).
We give a reduction to show that this problem is still at least as hard as some basic matrix computation question, which we do not know how to solve faster than $(d, d^2, d)$-matrix multiplication time.

\subsection{Approximating the Empirical Covariance Matrix}
\label{sec:hardness-matrix}
Our algorithm matches the running time of the best non-robust covariance estimation algorithm.
When there are no corrupted samples and $N = \tilde \Omega(d^2 / \eps^2)$, with high probability, the empirical second-moment matrix $\frac{1}{N} \sum_{i=1}^N X_i X_i^\top$ is $\eps$-close to the true covariance matrix in Frobenius norm.
However, it is not known how to (approximately) compute this empirical second-moment matrix faster than $(d, d^2, d)$ matrix multiplication time.

\begin{problem}[Approximating Matrix Products]
\label{prob:empirical}
Let $d > 0$ and $N = \Omega(d^2)$.
Given $X \in \R^{N \times d}$ where each column of $X$ is drawn from $\NN(0, \Sigma)$ for some unknown $\Sigma \preceq I$,
  compute a matrix $\hat\Sigma$ such that
\[ \fnorm{\hat\Sigma - \frac{1}{N} X^\top X} = O(1). \]
\end{problem}

For approximate matrix product of an $N \times d$ matrix $A$ with $\normtwo{A} = O(1)$, we want to choose a sketching matrix $S$ so that $\fnorm{A^\top S^\top S A - A^\top A}^2 = O(1)$.
Known results for approximate matrix product state that if $S$ has $s$ rows, then $\fnorm{A^\top S^\top S A - A^\top A}^2 = O\left(\frac{\fnorm{A}^4}{s}\right)$ with probability at least $9/10$, see, e.g., Section 2.2 of~\cite{Woodruff14} for a survey.
In the context of Problem~\ref{prob:empirical}, letting $A = \frac{1}{\sqrt{N}} X$, we have $\normtwo{A} = O(1)$ and $\fnorm{A}^4 = O(d^2)$.
The error is then $O(d^2/s)$, and consequently $S$ must have $s = \Omega(d^2)$ rows for the error to be at most $O(1)$.

We can show that the argument above is almost tight for all oblivious sketches.
\begin{lemma}
\label{lem:noJL}
Let $N = d^2$.
There is no distribution over $t \times N$ matrices $S$, oblivious to the underlying input $N \times d$ matrix $A$, where $t = o(d^2/\log d)$, such that with probability at least $2/3$, it holds that $\|A^\top S^\top SA-A^\top A\|_F^2 \leq C_1 \frac{\|A\|_F^4}{d^2}$, where $C_1 = 4 \cdot 25 \cdot 2000^2$ is a positive constant.
\end{lemma}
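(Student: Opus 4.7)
The plan is by contradiction, embedding a batched Gap-Hamming Distance (GHD) problem into the hypothesized sketching distribution. Suppose toward a contradiction that a distribution over $t \times N$ matrices $S$, oblivious to the input, exists with $t = o(d^2/\log d)$ and satisfies $\fnorm{A^\top S^\top S A - A^\top A}^2 \le C_1 \fnorm{A}^4 / d^2$ with probability $\ge 2/3$ over $S$. I will use this sketch as a black box inside a two-party protocol that decides a constant fraction of many Gap-Hamming instances with too few bits, contradicting known randomized communication lower bounds.

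Reduction. Set $k = d/2$ and recall $N = d^2$. Alice independently samples $x_1, \ldots, x_k \in \{\pm 1\}^N$ and Bob independently samples $y_1, \ldots, y_k \in \{\pm 1\}^N$ from the hard distribution of Chakrabarti--Regev for single-instance Gap-Hamming on length-$N$ strings, so that on each pair the promise $|\inner{x_i, y_j}| \ge \sqrt{N}$ essentially always holds. Their joint task is, for as many of the $k^2$ pairs $(i,j) \in [k] \times [k]$ as possible, to decide whether $\inner{x_i, y_j} \ge \sqrt{N}$ or $\le -\sqrt{N}$. Using shared randomness both parties sample the same $S$. They form $A_A, A_B \in \R^{N \times d}$ by placing $x_1, \ldots, x_k$ into the first $k$ columns of $A_A$ (zeros elsewhere) and $y_1, \ldots, y_k$ into the last $k$ columns of $A_B$ (zeros elsewhere); then $A := A_A + A_B$ has columns $x_1, \ldots, x_k, y_1, \ldots, y_k$. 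Alice computes $S A_A$, rounds each entry to $O(\log(Nd))$ bits (enough to keep the quantization contribution to the Frobenius error subdominant), and sends it to Bob. Bob adds his own $S A_B$ to reconstruct $S A$, forms $(SA)^\top(SA)$ as an estimate of $A^\top A$, and reads off the cross-block entries $(A^\top A)_{i, k+j} = \inner{x_i, y_j}$ to make his GHD decisions.

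Accuracy. Each of the $d$ columns of $A$ has squared norm exactly $N$, so $\fnorm{A}^4 = d^2 N^2$. By the sketching hypothesis, with probability $\ge 2/3$ over $S$, $\fnorm{(SA)^\top(SA) - A^\top A}^2 \le C_1 N^2$. Restricting the sum to the $k^2 = d^2/4$ cross-block entries, the average squared error per GHD instance is at most $4 C_1 N^2/d^2 = 4 C_1 N$. Markov's inequality then guarantees that all but a small constant fraction of the $k^2$ instances have absolute error below $\sqrt{N}$, and the specific constant $C_1 = 4 \cdot 25 \cdot 2000^2$ is calibrated to make this Markov threshold comfortable even after discretization and conditioning on the sketch's success event. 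The protocol therefore solves at least a $1 - o(1)$ fraction of the $k^2$ GHD instances with constant probability, using Alice-to-Bob communication of only $O(tk \log(Nd)) = O(td \log d)$ bits.

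Contradiction via a batched GHD lower bound, and the main obstacle. To close the argument I need a lower bound of $\Omega(kN) = \Omega(d^3)$ on any protocol that solves a constant fraction of the $k^2$ instances of this batched Gap-Hamming problem; combined with the upper bound $O(td\log d)$ this yields $t = \Omega(d^2/\log d)$, contradicting the assumption $t = o(d^2/\log d)$. The plan is to first reduce batched GHD to the problem of solving a constant fraction of $k$ fully independent single GHD copies: the joint input distribution is invariant under independently permuting Alice's and Bob's string labels via shared randomness, so the expected success rate of any protocol on the $k$ diagonal pairs $\{(x_i, y_i)\}_{i=1}^k$ matches its overall success rate across all $k^2$ pairs, and these diagonal pairs are a genuine product of $k$ independent GHD instances. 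The technical heart is then a weak direct-sum theorem for Gap-Hamming under the constant-fraction success criterion: each of the $k$ independent copies has information cost $\Omega(N)$ by Chakrabarti--Regev, and one needs to aggregate these into a total information-cost lower bound of $\Omega(kN)$ via the Bar-Yossef--Jayram--Kumar--Sivakumar information-complexity framework, even though only a constant fraction of the $k$ copies must be answered. Making the step from ``average fraction correct'' to ``constant fraction correct with constant probability'' while preserving the linear-in-$k$ scaling of the information bound is the principal technical obstacle; the rest of the argument (reduction, symmetrization, and rounding accounting) is routine.
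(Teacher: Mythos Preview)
Your approach is the same high-level strategy as the paper's: assume such an oblivious sketch exists, form $A$ from sign vectors split between Alice and Bob, have Alice transmit $S A_L$ rounded to $O(\log d)$ bits per entry, and contradict a Gap-Hamming communication lower bound because the total message is only $O(td\log d)=o(d^3)$ bits.

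The meaningful difference is in how the communication lower bound is closed. You cast the task as \emph{batched} GHD---Bob outputs guesses for all $k^2$ cross inner products with a constant fraction correct---then symmetrize down to the $k$ independent diagonal instances and appeal to a direct-sum theorem under the constant-fraction-correct criterion, which you correctly flag as the principal technical obstacle. The paper sidesteps this entirely: rather than asking Bob to label many entries, it asks Bob to recover the sign of a \emph{single} uniformly random entry of the $(d/2)\times(d/2)$ cross block. That recasts the problem as Gap-Hamming composed with Index (Alice holds $d/2$ strings, Bob holds one index $i$ together with one string $y$, and Alice must speak once without knowing $i$), for which the one-way $\Omega((d/2)\cdot N)=\Omega(d^3)$ lower bound follows directly from the $\Omega(N)$ information cost of a single GHD instance (Braverman et al.) combined with the standard one-way direct sum of Braverman--Rao; this exact composition was already used by Pagh--Silvestri--Sivertsen--Woodruff. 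So the paper's route buys a clean invocation of off-the-shelf bounds, while your route---though morally equivalent---leaves a genuine lemma to prove. Note that your own Markov step already yields that a single \emph{random} cross entry is recovered correctly with constant probability; stopping there and making only that observation lands you in the paper's GHD$+$Index setting and eliminates the weak-direct-sum obstacle you identified.
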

\begin{proof} Suppose, to the contrary, there were such a distribution on matrices $S$ satisfying $t = o(d^2/\log d)$.

For $N = d^2$, consider a uniformly random $N \times d$ matrix $A \in \{-\frac{1}{d}, \frac{1}{d}\}^{N \times d}$. Then $\|A\|_F^2 = d$, and so for a random matrix $S$ from our family and a random input $A$ from this family of inputs, it holds that with probability at least $\frac{2}{3}$, $\|A^\top S^\top SA-A^\top A\|_F^2 \leq C_1 \frac{\|A\|_F^4}{d^2} = C_1$. By anti-concentration of the binomial distribution, with probability at least $\frac{99}{100}$, at least a $\frac{99}{100}$-fraction of the off-diagonal entries of $A^\top A$ have absolute value at least $\frac{1}{1000d}$.

Consequently, for at least a $\frac{24}{25}$-fraction of the entries in the bottom left $\frac{d}{2} \times \frac{d}{2}$ submatrix of $A^\top A$, we have the property that the entry has the same sign as in $A^\top S^\top SA$, and also the entries in $A^\top S^\top SA$ are at least $\frac{1}{2000d}$. Indeed, otherwise we would have $\|A^\top S^\top SA-A^\top A\|_F^2 > (\frac{d}{2})^2 \cdot \frac{1}{25} \cdot (\frac{1}{1000d} - \frac{1}{2000d})^2 > C_1$ with probability at least $\frac{99}{100}$ over the choice of $A$ and $S$, and in particular there exists a fixed $A$ for which this holds with probability at least $\frac{99}{100}$ over the choice of $S$, contradicting our assumption on the family of matrices $S$.

Now consider the following two-player communication game with public shared randomness. Alice has the first $\frac{d}{2}$ columns of $A$, denoted $A_L \in \{-1,1\}^{N \times d/2}$ while Bob has the remaining $\frac{d}{2}$ columns of $A$, denoted $A_R \in \{-1,1\}^{N \times d/2}$. The entries in the in the bottom left $\frac{d}{2} \times \frac{d}{2}$ submatrix of $A^\top A$ are exactly the inner products between all columns of Alice and all columns of Bob. Suppose there were such a family of matrices $S$ as described above. Alice and Bob use the public coin to agree upon $S$ with no communication. Alice then computes $S \cdot A_L$, and rounds each entry to the nearest power of $(1+\frac{1}{\poly(d)})$. Note all entries of $S \cdot A_L$ need to be at most $\poly(d)$ and rounding preserves $\|A^\top S^\top SA-A^\top A\|_F^2$ up to additive $\frac{1}{\poly(d)}$. Therefore, we maintain the property that, at least a $\frac{24}{25}$-fraction of the entries in the bottom left $\frac{d}{2} \times \frac{d}{2}$ submatrix of $A^\top A$ have the same signs in $A^\top A$ and $A^\top S^\top SA$. Alice sends each of the rounded entries of $S \cdot A_L$, which is $\Theta(t d \log d) = o(d^3)$ bits. Bob then computes $S \cdot A_R$ and thus forms $S \cdot A$, from which he can compute $A^\top S^\top SA$. At this point, Bob can recover the sign of a uniformly random entry in the bottom left $d/2 \times d/2$ submatrix of $A^\top A$ with probability at least $2/3-1/25-1/100 > 3/5$.

Notice that the sign of such an entry is the same as solving the Gap-Hamming communication problem under the uniform distribution: in this communication problem there are two players, Alice and Bob, who hold uniformly random vectors $x, y \in \{-1,1\}^N$, respectively, and wish to decide if $\langle x, y \rangle > 0$ or $\langle x, y \rangle < 0$. This problem requires $\Omega(N)$ randomized communication complexity \cite{ChakrabartiR12}. Moreover, as shown by Braverman et al. \cite{BravermanGPW16}, the information complexity of this problem is $\mathcal{I} = \Omega(N)$ bits. In our setting, we can think of Alice as having $\frac{d}{2}$ independent instances $x^1, \ldots, x^{d/2}$, and Bob having an index $i \in \{1, 2, \ldots, \frac{d}{2}\}$ as well as a vector $y$ and Bob wants to solve the Gap-Hamming problem on the pair $(x^i,y)$. However, only Alice is allowed to speak, and she sends a single message to Bob, without knowing $i$. By standard direct sum arguments in communication complexity \cite{BravermanR11} (see also \cite{PaghSW14} where Gap-Hamming composed with the Index problem was used), the randomized one-way communication complexity of this problem is $\Omega(d \cdot \mathcal{I}) = \Omega(d^3)$ bits. However, the communication cost of our protocol is $\Theta(t d \log d) = o(d^3)$ bits, which is a contradiction. Consequently, we must have $t = \Omega(d^2/\log d)$, as desired.
\end{proof}

It is worth noting that this lower bound   holds for any possible algorithm one can run on $S A$ (i.e., the algorithm can do more than just computing $A^\top S^\top S A$), so it is a stronger information-theoretic statement.

\subsection{Finding Good Weights}
\label{sec:hardness-weights}
To circumvent the difficulty of Problem~\ref{prob:empirical}, we could redefine our problem so that the algorithm does not need to output a $d \times d$ matrix,
  instead it outputs a set of good weights $w$ such that $\| \sum_{i=1}^N w_i X_i X_i^\top - \Sigma \|_F = O(\sqrt{\eps})$.
We will show that, even for this weaker problem of finding good weights, one still need to come up with faster algorithms for a basic matrix problem. 
\begin{problem}[Identifying Columns with Larger Norms in a Product Matrix]
\label{prob:apxnorm}
Let $d > 0$ and $N = \Omega(d^2)$.
Fix an arbitrary $U \in \R^{d \times \frac{d}{2}}$ with orthonormal columns.
Let $\Sigma'$ be defined as in Equation~\eqref{eqn:hardness-noise}.
Let $X \in \R^{d \times N}$ be a matrix with $(1-\eps) N$ columns drawn from $D = \NN(0, I)$ and $\eps N$ columns drawn from $D' = \NN(0, \Sigma')$.
Let $B$ be a set of $\eps N$ columns from $D'$.
Find a set $S \subset [N]$ such that $|S| \le 2\eps N$ and $|S \cap B| \ge \frac{|B|}{2}$.
\end{problem}

Consider the following instance.
Let $U$ be an arbitrary $d \times \frac{d}{2}$ matrix with orthonormal columns.
Let the good distribution be $D = \NN(0, \Sigma = I)$, and the noise distribution $D'$ is defined as
\begin{align}
\label{eqn:hardness-noise}
\textstyle D' &= \NN(0, \Sigma'), \quad \text{ where } \Sigma' = \frac{1}{1 + \frac{c}{\eps\sqrt{d}}} \left(I + \frac{2c}{\eps \sqrt{d}} U U^\top\right) \text{ for some } c = O(\log^{1/2} d) \; .
\end{align}
We draw $(1-\eps)N$ samples from $D$ and $\eps N$ samples from $D'$.
The empirical covariance matrix of the mixed distribution is
$\hat\Sigma = (1-\eps) \Sigma + \eps\Sigma' = \left(1-\frac{c}{\sqrt{d} + 1/\eps}\right) I + \left(\frac{2c}{\sqrt{d} + 1/\eps}\right) U U^\top$.
Observe that $\fnorm{\hat\Sigma - \Sigma}^2 = d \left(\frac{c}{\sqrt{d} + 1/\eps}\right)^2 = \Omega(1)$, so the bad samples are distorting the empirical covariance matrix by more than we could tolerate.

Observe that the good and bad samples have similar $\ell_2$-norm:
$\expect{X' \sim D'}{\normtwo{X'}} = \tr(\Sigma') = d = \expect{X \sim D}{\normtwo{X}}$.
However, the bad samples have slightly larger norm in the column space of $U$: \begin{align*}
\textstyle \expect{X \sim D}{\normtwo{U^\top X}^2} &= \tr(U U^\top) = \frac{d}{2} \; , \text{ and} \\
\textstyle \expect{X' \sim D'}{\normtwo{U^\top X'}^2} &= \tr(U U^\top) = \frac{d}{2} \cdot \frac{1 + \frac{2c}{\eps \sqrt{d}}}{1 + \frac{c}{\eps \sqrt{d}}} \ge \frac{d}{2} \left(1 + \frac{0.9 c}{\eps\sqrt{d}}\right) \; .
\end{align*}

Therefore, a natural way of distinguishing them is to compute $U^\top X$, which requires $(d, d^2, d)$-matrix multiplication time.  We could compute the column norms of $S U^\top A$, where $S$ is a Johnson-Lindenstrauss matrix.
However, $S$ must have $\frac{1}{\eps^2}$ rows to obtain $(1+\eps)$-approximation, and therefore $S$ must have $\tilde \Omega(d^2)$ rows.
Even if one uses a sparse matrix $S$, one has that $S U^\top$ is a dense matrix, and it is unclear how to compute $S U^\top A$ quickly.

Finally, we show that for this specific instance, any algorithm that can find a set of good weights $w \in \R^N$ must solve Problem~\ref{prob:apxnorm}.
\begin{lemma}
\label{lem:weights}
Consider the same setting as in Problem~\ref{prob:apxnorm}.
Given a set of weights $w$ such that $\normone{w} = 1$, $\norminf{w} \le \frac{1}{(1-\eps)N}$, and $\| \sum_{i=1}^N w_i X_i X_i^\top - I \|_F = O(1)$,
  we can solve Problem~\ref{prob:apxnorm} in $O(N)$ time.
\end{lemma}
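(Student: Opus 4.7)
The plan is to output the set $S$ of the $2\eps N$ indices with smallest weights $w_i$, which can be found in $O(N)$ time by a linear-time selection routine (e.g., median-of-medians quickselect). Let $G := [N] \setminus B$ denote the good-sample indices and $W_B := \sum_{i\in B} w_i$ the total weight on bad samples. Correctness reduces to showing that $W_B = o(\eps)$, because the capping constraint $\|w\|_\infty \le 1/((1-\eps)N)$ together with $\sum_i w_i = 1$ forces the $2\eps N$-th smallest weight to be at least $1/(2N)$ (the top $(1-2\eps)N$ weights contribute at most $(1-2\eps)/(1-\eps)$ to the total, so the bottom $2\eps N$ contribute at least $\eps/(1-\eps)$ in total and thus have maximum at least $1/(2N)$). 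If fewer than $|B|/2$ bad samples were placed in $S$, then more than $|B|/2 = \eps N/2$ bad samples would have weight exceeding this threshold, giving $W_B > (\eps N/2)\cdot(1/(2N)) = \eps/4$ and contradicting $W_B = o(\eps)$.

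To bound $W_B$, I extract the unit-Frobenius ``discriminating direction'' $M := (2UU^\top - I)/\sqrt{d}$ and take the Frobenius inner product of $\sum_i w_i X_i X_i^\top - I$ against $M$. Since $\langle M, I\rangle = \tr(M) = 0$ and $\|M\|_F = 1$, the hypothesis $\|\sum_i w_i X_i X_i^\top - I\|_F = O(1)$ yields the scalar identity
\[ \sum_{i \in G} w_i \, X_i^\top M X_i \;+\; \sum_{i \in B} w_i \, X_i^\top M X_i \;=\; O(1). \]
For good $X_i \sim \NN(0,I)$, the quadratic form $X_i^\top M X_i$ is mean-zero and sub-exponential of $O(1)$ scale (Hanson--Wright), while for bad $X_i \sim \NN(0,\Sigma')$ a direct trace calculation gives $\mathbb{E}[X_i^\top M X_i] = \tr(M\Sigma') = (1-o(1))\,c/\eps$ with the same concentration scale. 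Using $\|w\|_\infty \le 1/((1-\eps)N)$ and a Bernstein bound $\sum_{i \in G} |X_i^\top M X_i| = O(|G|)$, the good contribution is at most $\|w\|_\infty \cdot O(|G|) = O(1)$ with high probability. A uniform concentration over $i \in B$ gives the bad contribution as $W_B \cdot (c/\eps) \pm W_B \cdot O(\sqrt{\log N})$. Substituting back and picking the hidden constant in $c = \Theta(\log^{1/2} d)$ large enough so that $c/\eps \gg \sqrt{\log N}$, the identity rearranges to $W_B = O(\eps/c) = o(\eps)$, as needed.

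The main subtlety is ensuring that the concentration bounds hold uniformly over the $N = \tilde\Omega(d^2/\eps^2)$ samples while keeping $c$ within the allowed $O(\log^{1/2} d)$ range. Pointwise, $X_i^\top M X_i$ concentrates by Hanson--Wright with sub-exponential tails, and a union bound costs only a $\sqrt{\log N}$ factor in the deviation; the aggregate bound $\sum_{i \in G} |X_i^\top M X_i| = O(|G|)$ follows from Bernstein on i.i.d.~sub-exponentials with deviation $O(\sqrt{N \log N}) = o(N)$. These two events must be bookkept carefully so that the setup parameter $c$ remains $O(\log^{1/2} d)$ as required, but once they are in place, the rest of the argument is purely algebraic: extract the scalar identity by testing against $M$, separate good/bad contributions, and invoke the pigeonhole observation in the first paragraph.
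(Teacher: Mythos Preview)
Your proposal is correct and follows essentially the same approach as the paper: test the hypothesis matrix $\sum_i w_i X_i X_i^\top - I$ against the ``discriminating'' direction built from $UU^\top$, use concentration of quadratic forms to separate the good and bad contributions, deduce that the total bad weight $W_B$ is small, and then output the low-weight indices. The differences are cosmetic: the paper tests against $UU^\top$ (unnormalized) and uses pointwise chi-squared bounds to get $W_B \le \eps/4$, then defines $S = \{i : w_i \le 1/(2N)\}$ and bounds $|S|$ via Markov; you test against the normalized trace-free $M = (2UU^\top - I)/\sqrt{d}$, use Hanson--Wright/Bernstein for the aggregate to get the slightly stronger $W_B = O(\eps/c) = o(\eps)$, and take $S$ to be exactly the $2\eps N$ smallest weights (so the size bound is automatic).
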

\begin{proof}
Let $G$ and $B$ denote the set of good and bad samples respectively.
We have shown that $\expect{i \in G}{\normtwo{U^\top X_i}^2} = \frac{d}{2}$, and $\expect{i \in B}{\normtwo{U^\top X_i'}^2} \ge \frac{d}{2} \left(1 + \frac{0.9 c}{\eps\sqrt{d}}\right)$.
By standard concentration result of Chi-squared distributions, we know that there exists $c = O(\log^{1/2} d)$ such that with high probability,
\[ \forall i \in G, \quad \normtwo{U^\top X_i}^2 \ge \frac{d}{2}\left(1 - \frac{0.1 c}{\sqrt{d}}\right) \; , \text{ and} \]
\[ \forall i \in B, \quad \normtwo{U^\top X_i}^2 \ge \frac{d}{2}\left(1 + \frac{0.8 c}{\eps \sqrt{d}}\right) \; . \]
For the rest of proof we assume the samples meet these conditions.

Let $\Sigma_w = \sum_{i=1}^N w_i X_i X_i^\top$.
Let $w_G$ and $w_B$ denote the total weights on $G$ and $B$ respectively.
Since $\fnorm{\Sigma_w - I} = O(1)$, by Cauchy-Schwarz,
\[ U^\top (\Sigma_w - I) U = (U U^\top) \bullet (\Sigma_w - I) \le \fnorm{U U^\top} \fnorm{\Sigma_w - I} = O(\sqrt{d}) \le 0.05c \cdot \sqrt{d} \; . \]
On the other hand,
\begin{align*}
U^\top (\Sigma_w - I) U
&= \sum_{i=1}^N w_i \normtwo{U^\top X_i}^2 - \frac{d}{2} \\
&\ge w_B \left(\frac{d}{2} \cdot \frac{0.8c}{\eps\sqrt{d}}\right) - w_G \left(\frac{d}{2} \cdot \frac{0.1 c}{\sqrt{d}}\right) \\
&\ge \frac{w_B}{\eps} (0.4 c \cdot \sqrt{d}) - (0.05 c \cdot \sqrt{d}) \; .
\end{align*}
Putting these two inequalities together, we get that $w_B \le \frac{\eps}{4}$.
In other words, the average weight of a bad sample is $\frac{1}{4 N}$.

Let $S = \{i \in [N]: w_i \le \frac{1}{2N} \}$.
By Markov's inequality, we have $|S \cap B| \ge \frac{|B|}{2}$.
Since $\norminf{w} \le \frac{1}{(1-\eps)N}$ and $w_G = \normone{w} - w_B \ge 1 - \frac{\eps}{4}$, again by Markov's inequality, we get that $|S \cap G| \le \eps N$ and hence $|S| \le |B| + \eps N = 2 \eps N$.
\end{proof}

\section*{Acknowledgments}
This work was done in part while some of the authors were visiting the Simons Institute for the Theory of Computing.
Ilias Diakonikolas was supported by NSF Award CCF-1652862 (CAREER) and a Sloan Research Fellowship.
Rong Ge is supported by NSF Award CCF-1704656, CCF-1845171 (CAREER), a Sloan Research Fellowship, and a Google Faculty Research Award.
David Woodruff was supported in part by Office of Naval Research (ONR) grant N00014-18-1-2562.

\bibliographystyle{alpha}
\bibliography{allrefs}

\appendix

\makeatletter{}
\section{Omitted Proofs from Section~\ref{sec:prelim}}
\label{app:prelim}

\medskip
{\noindent \bf Lemma~\ref{lem:tensor-cov}~~}
{\em
Let $X \sim \NN(0, \Sigma)$ and $Z = X \otimes X$.  Let $\Sigma_Z \in \R^{d^2 \times d^2}$ be the covariance matrix of $Z$.
We have
\begin{enumerate}
\item[(i)] If $\Sigma \preceq I$, then $\Sigma_Z \preceq 2 I$.
\item[(ii)] If $0 \le \tau < 1$ and $\normtwo{\Sigma - I} \le \tau$, then $\normtwo{\Sigma_Z - 2 I} \le 6 \tau$.
\end{enumerate}
}
\medskip
\begin{proof}
Let $a \in \R^{d^2}$ be any unit vector.
Note that
\[ \normtwo{\Sigma_Z} = \max_{a \in \R^{d^2}, \normtwo{a} = 1} a^\top \Sigma_Z a \; . \]
Let $A$ be the unique matrix such that $A^\flat = v$.
We have
\[
v^\top \Sigma_Z v = \var{v^\top Z} = \var{X^\top A X} = \tr\left(A \Sigma (A + A^\top) \Sigma\right) \; .
\]
Note that $\Sigma$ is a covariance matrix, so it is always symmetric and PSD.
We can write $\Sigma$ as $\Sigma = \sum_{i=1}^d \lambda_i v_i v_i^\top$.
Let $\lambda_{\max}$ and $\lambda_{\min}$ denote the maximum and minimum eigenvalues of $\Sigma$.
Let $\hat A = \frac{A + A^\top}{2}$.
The right hand side is equal to
\begin{align*}
a^\top \Sigma_Z a
 = \tr\left(A \Sigma (A + A^\top) \Sigma\right)
&= 2\tr\left(\hat A \Sigma \hat A \Sigma\right) \\
&= 2 \sum_{i, j} \lambda_i \lambda_j \tr(\hat A v_i v_i^\top \hat A v_j v_j^\top) \\
&\le 2 (\lambda_{\max})^2 \sum_{i, j} (v_i^\top \hat A v_j)^2 \\
&\le 2 (\lambda_{\max})^2 \; .
\end{align*}
The last step uses the fact that $\sum_{i, j} (v_i^\top \hat A v_j)^2 = \fnorm{V \hat A V^\top}^2 = \|\hat A\|_F^2 \le \fnorm{A}^2 = \normtwo{a}^2 = 1$.
Since this holds for all unit vector $a \in \R^{d^2}$, we have $\Sigma_Z \preceq 2 (\lambda_{\max})^2 I$.
Similarly, we can prove that $\Sigma_Z \succeq 2 (\lambda_{\min})^2 I$.

For (i), by assumption $\lambda_{\max} = \normtwo{\Sigma} \le 1$, so we have $\normtwo{\Sigma_Z} \le 2$.

For (ii), we know $1 - \tau \le \lambda_{\min} \le \lambda_{\max} \le 1 + \tau$ and $0 < \tau < 1$.
It follows that $(1 - 2\tau) 2 I \preceq \Sigma \preceq (1 + 3\tau) 2 I$, and thus $\normtwo{\Sigma_Z - 2I} \le 6\tau$.
\end{proof}

\section{Omitted Proofs from Section~\ref{sec:cov}}
\label{app:cov}
\medskip
{\noindent \bf Lemma~\ref{lem:start}~~}
{\em
Let $D \sim \mathcal{N}(\mathbf{0}, \Sigma)$ be a zero-mean unknown covariance Gaussian on $\R^d$.
Let $\kappa$ denote the condition number of $\Sigma$.
Let $0 < \eps < \eps_0$ for some universal constant $\eps_0$.
Given an $\eps$-corrupted set of $N = \tilde \Omega(d^2 / \eps^2)$ samples drawn from $D$, we can compute a matrix $\Sigma_0$ in $\tilde O(d^{3.26}/\eps^2)$ time such that, with high probability, $\Sigma \preceq \Sigma_0 \preceq (\kappa \poly(d)) \Sigma$ and $\normtwo{\Sigma_0} \le \poly(d) \normtwo{\Sigma}$.
}
\medskip
\begin{proof}
Let $(G_i)_{i=1}^N$ be the original set of good samples drawn from $\NN(0, \Sigma)$, and let $(X_i)_{i=1}^N$ be the corrupted samples.
Let $S$ denote the set of $(1-\eps)N$ samples with the smallest norm $\normtwo{X_i}$.
We define $\Sigma_0 = 2 \left(\frac{1}{N} \sum_{i \in S} X_i X_i^\top\right)$.

We first show that $\Sigma_0 \succeq \Sigma$ with high probability.
Since the adversary corrupts at most $\eps N$ samples and we throw away $\eps N$ samples,
  we are left with at least $(1-2\eps)N$ good samples in $S$.
We will use the fact that removing any $(2\eps)$-fraction of the good samples will not change the empirical covariance too much.
Let $Y_i = \Sigma^{-1/2} G_i$ so that if $G_i \sim \NN(0, \Sigma)$ then $Y_i \sim \NN(0, I)$.
When $N = \tilde \Omega(d / \eps^2)$, for any $T \subset [N]$ with $|T| = (1-2\eps)N$, we have that with high probability,
\[
\normtwo{\frac{1}{N} \sum_{i \in T} Y_i Y_i^\top - I} \le O(\eps \log(1/\eps)) \; .
\]
We set $T \subseteq S$ to be a set of $(1-2\eps)N$ good samples in $S$, i.e., $G_i = X_i$ for all $i \in T$.
Let $M = \frac{1}{N} \sum_{i \in T} Y_i Y_i^\top$.
We know that $M = \frac{1}{N} \sum_{i \in T} \Sigma^{-1/2} G_i G_i^\top \Sigma^{-1/2}$ by definition,
  and $M \succeq (1 - O(\eps \log(1/\eps))) I \succeq \frac{I}{2}$ by the above concentration inequality.
Therefore,
\[
\Sigma_0
 = \frac{2}{N} \sum_{i \in S} X_i X_i^\top
 \succeq \frac{2}{N} \sum_{i \in T} X_i X_i^\top
 = \frac{2}{N} \sum_{i \in T} G_i G_i^\top
 = 2 \left(\Sigma^{1/2} M \Sigma^{1/2}\right)
 \succeq \Sigma \; .
\]

Next we show that $\normtwo{\Sigma_0} \le \poly(d) \normtwo{\Sigma}$ and $\Sigma_0 \preceq (\kappa \poly(d)) \Sigma$.
Let $\sigma^2$ denote the largest eigenvalue of $\Sigma$.
Again let $Y_i = \Sigma^{-1/2} G_i$, we know that when $N = \tilde \Omega(d / \eps^2)$, with high probability,
\[
\forall i \in [N], \quad \normtwo{Y_i} \le O(\sqrt{d \log d}) \; .
\]
We assume this condition holds for the rest of the proof.
As a result, $\normtwo{G_i} = \normtwo{\Sigma^{1/2} Y_i} \le O(\sigma \sqrt{d \log d})$ for all $i$.
Since only corrupted samples can have larger norm, and we remove the $\eps N$ samples with the largest norm,
  all samples in $S$ have norm at most $O(\sigma \sqrt{d \log d})$.
This gives an upper bound on the spectral norm of $\Sigma_0$,
\[
\normtwo{\Sigma_0} \le \frac{2}{N} \sum_{i \in S} \normtwo{X_i X_i^\top} \le \frac{2}{N} \cdot N \cdot \max_{i \in S} \normtwo{X_i}^2 = O(\sigma^2 d^2 \log^2 d) \; .
\]
This proves $\normtwo{\Sigma_0} \le \poly(d) \normtwo{\Sigma}$.
Moreover, by the definition of condition number we know that $\Sigma \succeq \frac{\sigma^2}{\kappa} I$,
  which implies $\Sigma_0 \preceq (\sigma^2 \poly(d)) I \preceq (\kappa \poly(d)) \Sigma$.

We conclude the proof by noting that $\Sigma_0$ can be computed by multiplying a $d \times |S|$ matrix with an $|S| \times d$ matrix. This can be done in time $\tilde O(d^{3.26}/\eps^2)$ by fast rectangular matrix multiplication (Lemma~\ref{lem:fast-mult}).
\end{proof}

\makeatletter{}
\subsection{Robust Covariance Estimation: Additive Approximations}
\label{sec:main-additive}
In this section, we prove Theorem~\ref{thm:main-additive}.
At a high level, we first use Lemma~\ref{lem:first-loop} to get a $O(\sqrt{\eps})$ additive approximation (Algorithm~\ref{alg:rough}).
We will then run this algorithm on subspaces that have much smaller eigenvalues in order to improve the guarantee (Algorithm~\ref{alg:additive}).
More precisely, we partition $\R^d$ into three disjoint subspaces $S_1$, $S_2$, and $S_3$, then we use Corollary~\ref{cor:add-m1}, Lemma~\ref{lem:add-s12},~and~Lemma~\ref{lem:add-s13} to learn the covariance in each component separately, and combine them together to get the final answer.
The fact that we only need an additive approximation 
is crucial for this approach.

For a subspace $S$, we use $\Pi_S$ to denote the projection matrix that maps $x \in \R^d$ onto $S$,
  and $S^\perp$ to denote the orthogonal complement of $S$.
Given a matrix $A$ and two subspaces $S_1$ and $S_2$, we use $\Sigma[S_1, S_2] = \Pi_{S_1} A \Pi_{S_2}$ to denote the projection of the rows and columns of $A$ onto $S_1$ and $S_2$ respectively. We write $A[S]$ for $A[S, S]$.

\begin{algorithm}[h]
  \caption{Robust Covariance Estimation for Gaussian Distributions (Additive Error)}
  \label{alg:additive}
  \SetKwInOut{Input}{Input}
  \SetKwInOut{Output}{Output}
  \Input{$0 < \eps < \eps_0$, and an $\eps$-corrupted set of $N = \tilde \Omega(d^2/\eps^2)$ samples $(X_i)_{i=1}^N$ drawn from $\NN(0, \Sigma)$.}
  \Output{A matrix $\hat \Sigma \in \R^{d \times d}$ such that with high probability, $\fnorm{\hat \Sigma - \Sigma} \le O(\eps\log(1/\eps))\normtwo{\Sigma}$.}
  Compute $M_0$ by running Algorithm~\ref{alg:rough} on input $(X_i)_{i=1}^N$. \\
  Compute eigendecomposition of $M_0$, let $S_1$ be the subspace of all eigenvalues at least $C_1\sqrt{\eps}$. \\
  Compute $M_1$ by running Algorithm~\ref{alg:rough} on input $(\Pi_{S_1^\perp} X_i)_{i=1}^N$.\\
  Compute eigendecomposition of $M_1[S_1^\perp]$, let $S_2$ be the subspace of all eigenvalues at least $C_2\eps$. \\
  Let $S_3$ be $(S_1\oplus S_2)^\perp$. \\
  Compute $M_2$ by calling Algorithm~\ref{alg:main} on inputs $\{\Pi_{S_1\oplus S_2} X_i\}$. \\
  Compute $M_3$ by calling Algorithm~\ref{alg:rough} on inputs $\{(\sqrt{\eps}\Pi_{S_1} + \eps^{1/4}\Pi_{S_2} + \Pi_{(S_1\oplus S_2)^\perp})X_i\}$. \\
  Let $\hat{\Sigma} = M_1+M_2 - M_2[S_2] + \frac{1}{\sqrt{\eps}}\left(M_3[S_1,S_3] + M_3[S_3,S_1]\right)$.\\
  \Return {$\hat{\Sigma}$.}
\end{algorithm}

\begin{algorithm}[h]
  \caption{Crude Robust Covariance Estimation}
  \label{alg:rough}
  \SetKwInOut{Input}{Input}
  \SetKwInOut{Output}{Output}
  \Input{$0 < \eps < \eps_0$, and an $\eps$-corrupted set of $N = \tilde \Omega(d^2/\eps^2)$ samples drawn from $\NN(0, \Sigma)$.}
  \Output{A matrix $\hat \Sigma \in \R^{d \times d}$ such that with high probability, $\fnorm{\hat \Sigma - \Sigma} \le O(\sqrt{\eps})\normtwo{\Sigma}$.}
  Compute an initial upper bound $\Sigma_0$ with $\Sigma \preceq \Sigma_0$ and $\normtwo{\Sigma_0} \le \poly(d) \normtwo{\Sigma}$ using Lemma~\ref{lem:start}. \\
  Let $T = O(\log d)$. \\
  \For{$t = 0$ {\bf to} $T - 1$}{
    Compute $\Sigma_{t+1} \in \R^{d \times d}$ with $\Sigma \preceq \Sigma_{t+1} \preceq \Sigma + O(\sqrt{\eps}) \Sigma_t$ using Lemma~\ref{lem:first-loop}  on $\Sigma_t$.
  }
  \Return {$\hat{\Sigma} = \Sigma_T$.}
\end{algorithm}

Let us first prove the guarantee for the crude $O(\sqrt{\eps})$ additive estimation.

\newcommand{\Cr}{C_0}

\begin{lemma} \label{lem:rough}
Under the same setting as Theorem~\ref{thm:main-additive}, there exists universal constant $\Cr$ such that Algorithm~\ref{alg:rough} outputs an estimate $\hat{\Sigma}$ that satisfies $\fnorm{\hat\Sigma - \Sigma} = O(\sqrt{\eps}) \normtwo{\Sigma}$ with high probability in time $\tilde O(d^{3.26})/\poly(\eps)$. 
\end{lemma}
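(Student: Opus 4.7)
The plan is to show that $O(\log d)$ iterations of Lemma~\ref{lem:first-loop} suffice to bring the spectral norm of the running upper bound $\Sigma_t$ from its initial value of $\poly(d)\normtwo{\Sigma}$ down to $O(\normtwo{\Sigma})$, at which point a single further application of Lemma~\ref{lem:first-loop} yields a hypothesis matrix satisfying the desired Frobenius bound. Concretely, I would return the hypothesis $\hat\Sigma$ produced by Lemma~\ref{lem:first-loop} in its final call (rather than the upper-bound matrix $\Sigma_T$ itself), since it is only the hypothesis that comes with a Frobenius guarantee.

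\textbf{Step 1: contraction recurrence.} Lemma~\ref{lem:start} gives $\Sigma \preceq \Sigma_0$ with $\normtwo{\Sigma_0} \le p(d)\normtwo{\Sigma}$ for some polynomial $p$. Each call to Lemma~\ref{lem:first-loop} on $\Sigma_t$ produces $\Sigma_{t+1}$ with $\Sigma \preceq \Sigma_{t+1} \preceq \Sigma + c\sqrt{\eps}\,\Sigma_t$ for an absolute constant $c$. Taking spectral norms,
\[
\normtwo{\Sigma_{t+1}} \;\le\; \normtwo{\Sigma} + c\sqrt{\eps}\,\normtwo{\Sigma_t}.
\]
Assuming $\eps < \eps_0$ is small enough that $c\sqrt{\eps} \le 1/2$, unrolling gives
\[
\normtwo{\Sigma_T} \;\le\; 2\normtwo{\Sigma} + (c\sqrt{\eps})^T \, p(d)\,\normtwo{\Sigma}.
\]
Choosing $T = O(\log d)$ sufficiently large so that $(c\sqrt{\eps})^T p(d) \le 1$ yields $\normtwo{\Sigma_T} \le 3\normtwo{\Sigma}$, and likewise $\normtwo{\Sigma_{T-1}} = O(\normtwo{\Sigma})$.

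\textbf{Step 2: extract the Frobenius-close hypothesis.} In the final application of Lemma~\ref{lem:first-loop} to $\Sigma_{T-1}$, the lemma simultaneously produces the upper bound $\Sigma_T$ and a hypothesis matrix $\hat\Sigma$ with $\fnorm{\hat\Sigma - \Sigma} \le O(\sqrt{\eps})\normtwo{\Sigma_{T-1}} = O(\sqrt{\eps})\normtwo{\Sigma}$. Return this $\hat\Sigma$.

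\textbf{Step 3: runtime and correctness.} Lemma~\ref{lem:start} costs $\tilde O(d^{3.26}/\eps^2)$, and each of the $T = O(\log d)$ calls to Lemma~\ref{lem:first-loop} costs $\tilde O(d^{3.26}/\eps^8)$, for a total of $\tilde O(d^{3.26})/\poly(\eps)$. A union bound over the high-probability events from Lemma~\ref{lem:start} and the $T$ invocations of Lemma~\ref{lem:first-loop} preserves a high-probability overall success guarantee.

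\textbf{Main obstacle.} The only delicate point is verifying that $T = O(\log d)$ really is enough to absorb the $\poly(d)$ blowup in the initial spectral norm; this is routine once one writes the scalar recurrence above, but it is crucial that the per-iteration contraction factor $c\sqrt{\eps}$ be a fixed constant less than one, which is where the hypothesis $\eps < \eps_0$ is used. A second subtle point worth flagging is that the upper-bound matrix $\Sigma_T$ by itself only yields $\normtwo{\Sigma_T - \Sigma} = O(\sqrt{\eps})\normtwo{\Sigma}$, which would give an unwanted $\sqrt d$ factor in Frobenius norm; this is why the proof must read out the hypothesis matrix from Lemma~\ref{lem:first-loop}, which carries the Frobenius bound directly.
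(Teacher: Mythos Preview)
Your proposal is correct and follows essentially the same approach as the paper: use Lemma~\ref{lem:start} for initialization, derive the scalar recurrence $\normtwo{\Sigma_{t+1}} \le \normtwo{\Sigma} + O(\sqrt{\eps})\normtwo{\Sigma_t}$ from Lemma~\ref{lem:first-loop}, run $O(\log d)$ iterations to drive $\normtwo{\Sigma_T}$ down to $O(\normtwo{\Sigma})$, and then read off the Frobenius-close hypothesis matrix from a final application of Lemma~\ref{lem:first-loop}. Your flagged subtlety---that one must return the hypothesis $\hat\Sigma$ rather than the upper-bound $\Sigma_T$ to get a Frobenius (not merely spectral) guarantee---is exactly right and matches what the paper does in its proof, even though the pseudocode of Algorithm~\ref{alg:rough} as written returns $\Sigma_T$.
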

\begin{proof}
By Lemma~\ref{lem:start}, in Algorithm~\ref{alg:rough}, we can compute $\Sigma_0 \succeq \Sigma$ such that $\normtwo{\Sigma_0} \le \poly(d) \normtwo{\Sigma}$.
Lemma~\ref{lem:first-loop} allows us to iteratively compute $\Sigma_{t+1}$ such that $\Sigma \preceq \Sigma_{t+1} \preceq \Sigma + O(\sqrt{\eps}) \Sigma_t$.
It follows that $\normtwo{\Sigma_{t+1}} \le \normtwo{\Sigma} + O(\sqrt{\eps}) \normtwo{\Sigma_t}$, and thus after $O(\log d)$ iterations we have a matrix $\Sigma_T$ with $\normtwo{\Sigma_T} \le 2 \normtwo{\Sigma}$.
Using Lemma~\ref{lem:first-loop} with $\Sigma_T$, we can get a matrix $\hat\Sigma$ with $\|\hat\Sigma - \Sigma\|_F = O(\sqrt{\eps}) \normtwo{\Sigma_T} = O(\sqrt{\eps})\Sigma$.
The running time follows from the running time of Lemmas~\ref{lem:start}~and~\ref{lem:first-loop}.
\end{proof}

Suppose the constant hiding in the $O(\cdot)$ notation in Lemma~\ref{lem:rough} is $\Cr$, we have the following immediate corollary of Lemma~\ref{lem:rough}.
\begin{corollary}
\label{cor:add-m1}
In Algorithm~\ref{alg:additive}, the matrix $M_0$ satisfies $\fnorm{M_0 - \Sigma} \le \Cr\sqrt{\eps} \normtwo{\Sigma}$.
\end{corollary}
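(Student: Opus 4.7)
The plan is essentially to observe that the corollary is a direct instantiation of Lemma~\ref{lem:rough}. In Algorithm~\ref{alg:additive}, the matrix $M_0$ is defined as the output of Algorithm~\ref{alg:rough} when applied to the original $\eps$-corrupted sample set $(X_i)_{i=1}^N$, which was itself drawn from $\NN(0,\Sigma)$ before corruption. Lemma~\ref{lem:rough} applied to this input guarantees, with high probability, that the output $\hat\Sigma$ satisfies $\fnorm{\hat\Sigma - \Sigma} = O(\sqrt{\eps})\normtwo{\Sigma}$. Since the corollary defines $\Cr$ to be precisely the universal constant hidden inside the $O(\cdot)$ of Lemma~\ref{lem:rough}, we immediately obtain $\fnorm{M_0 - \Sigma} \le \Cr\sqrt{\eps}\normtwo{\Sigma}$.

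There is essentially no technical obstacle here: the statement is a renaming of Lemma~\ref{lem:rough}'s conclusion with the implicit constant made explicit, so the proof reduces to verifying that the hypotheses of Lemma~\ref{lem:rough} hold for the input to the first call of Algorithm~\ref{alg:rough}, namely that the sample size $N = \tilde\Omega(d^2/\eps^2)$ and corruption level $\eps < \eps_0$ assumed in Theorem~\ref{thm:main-additive} match the hypotheses of Lemma~\ref{lem:rough} verbatim. The only thing I would be careful about is that later invocations of Algorithm~\ref{alg:rough} in Algorithm~\ref{alg:additive} (producing $M_1$ and $M_3$) operate on projected or rescaled samples, and one should not conflate those with $M_0$; but for the corollary at hand, no projection has yet been applied, so the lemma is used in its original form without any modification.
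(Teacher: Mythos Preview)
Your proposal is correct and matches the paper's approach exactly: the paper states Corollary~\ref{cor:add-m1} as ``an immediate corollary of Lemma~\ref{lem:rough}'' with $\Cr$ defined to be the constant hidden in that lemma's $O(\cdot)$, and gives no further argument.
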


In Algorithm~\ref{alg:additive} we will choose $C_1 = 20 C_r$ 
and define $S_1$ to be the subspace where the eigenvalues of $M_0$ are at least $C_1\sqrt{\eps}$. We can then show the following lemma:

\begin{lemma}
\label{lem:add-m2}
In Algorithm~\ref{alg:additive}, with high probability, the matrix $M_1$ satisfies
\[ \fnorm{M_1 - \Sigma[S_1^\perp]} \le (2C_1+\Cr)\Cr \eps \normtwo{\Sigma} \; .\]
\end{lemma}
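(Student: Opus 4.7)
The plan is to invoke Lemma~\ref{lem:rough} on the projected sample set $\{\Pi_{S_1^\perp} X_i\}_{i=1}^N$ and then leverage the fact that $\normtwo{\Sigma[S_1^\perp]}$ is itself of order $\sqrt{\eps}\,\normtwo{\Sigma}$ by how $S_1$ was chosen. The $\sqrt{\eps}$-factor loss from Lemma~\ref{lem:rough}, combined with the $\sqrt{\eps}$ upper bound on $\normtwo{\Sigma[S_1^\perp]}$, will produce the desired $\eps$-order error bound.

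First, observe that $\{\Pi_{S_1^\perp} X_i\}_{i=1}^N$ is an $\eps$-corrupted set of samples drawn from the (possibly rank-deficient) Gaussian $\NN(0,\Sigma[S_1^\perp])$: the adversary touches at most $\eps N$ indices, and the deterministic projection $\Pi_{S_1^\perp}$ preserves that count while sending a clean sample $G_i \sim \NN(0,\Sigma)$ to $\Pi_{S_1^\perp} G_i \sim \NN(0, \Pi_{S_1^\perp} \Sigma \Pi_{S_1^\perp}) = \NN(0,\Sigma[S_1^\perp])$. Applying Lemma~\ref{lem:rough} then yields
\[ \fnorm{M_1 - \Sigma[S_1^\perp]} \le \Cr\,\sqrt{\eps}\,\normtwo{\Sigma[S_1^\perp]}. \]

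Next, bound $\normtwo{\Sigma[S_1^\perp]}$ using the construction of $S_1$. By definition, every eigenvalue of $M_0$ whose eigenvector lies in $S_1^\perp$ is strictly less than $C_1\sqrt{\eps}$, so $\normtwo{M_0[S_1^\perp]} \le C_1 \sqrt{\eps}$. Combining this with Corollary~\ref{cor:add-m1}, and noting that $\Sigma[S_1^\perp] - M_0[S_1^\perp] = \Pi_{S_1^\perp}(\Sigma - M_0)\Pi_{S_1^\perp}$, one gets
\[ \normtwo{\Sigma[S_1^\perp]} \le \normtwo{M_0[S_1^\perp]} + \normtwo{\Sigma - M_0} \le C_1\sqrt{\eps} + \Cr\sqrt{\eps}\,\normtwo{\Sigma} \le (2C_1 + \Cr)\sqrt{\eps}\,\normtwo{\Sigma}, \]
where the last step uses $\normtwo{\Sigma} \ge 1$, which we may assume without loss of generality by rescaling the samples (the whole statement is scale-invariant in $\Sigma$). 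Chaining this with the previous displayed inequality delivers exactly $\fnorm{M_1 - \Sigma[S_1^\perp]} \le (2C_1+\Cr)\Cr\,\eps\,\normtwo{\Sigma}$.

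The main subtlety is that $S_1$ is itself computed from the samples (through $M_0$), so the projected samples are not literally i.i.d.\ draws from $\NN(0,\Sigma[S_1^\perp])$ --- they are projected by a data-dependent map. The standard resolution is either (a) to argue that the concentration facts used inside the proof of Lemma~\ref{lem:rough}, namely uniform control of the empirical second-moment matrix over all $(1-2\eps)$-fractions of samples and in every spectral direction, hold simultaneously for every fixed subspace with high probability, so they transfer to the data-dependent $S_1^\perp$; or (b) to split the sample set in half, using the first half to produce $M_0$ (and hence $S_1$) and the second half as input to Algorithm~\ref{alg:rough} when producing $M_1$. Either route incurs only a constant-factor blow-up in the sample complexity, and the proof then goes through verbatim.
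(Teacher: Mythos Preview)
Your proof follows essentially the same route as the paper: apply Lemma~\ref{lem:rough} to the projected samples to get $\fnorm{M_1 - \Sigma[S_1^\perp]} \le \Cr\sqrt{\eps}\,\normtwo{\Sigma[S_1^\perp]}$, then bound $\normtwo{\Sigma[S_1^\perp]}$ via the triangle inequality using $\normtwo{M_0[S_1^\perp]}$ and Corollary~\ref{cor:add-m1}.

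The one place you diverge is in handling the scale. You read the threshold $C_1\sqrt{\eps}$ in Algorithm~\ref{alg:additive} literally (so $\normtwo{M_0[S_1^\perp]} \le C_1\sqrt{\eps}$) and then normalize to $\normtwo{\Sigma}\ge 1$ ``by rescaling.'' But with an absolute threshold the algorithm is not scale-invariant: rescaling $\Sigma\mapsto c\Sigma$ rescales the eigenvalues of $M_0$ and hence changes $S_1$, so the WLOG step does not go through as written. The paper's proof instead treats the threshold as relative, i.e.\ $C_1\sqrt{\eps}\,\normtwo{M_0}$, giving $\normtwo{M_0[S_1^\perp]}\le C_1\sqrt{\eps}\,\normtwo{M_0}$, and then uses $\normtwo{M_0}\le 2\normtwo{\Sigma}$ (immediate from Corollary~\ref{cor:add-m1} once $\eps_0$ is small enough) to obtain $(2C_1+\Cr)\sqrt{\eps}\,\normtwo{\Sigma}$ with no rescaling needed. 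This is a minor fix to your argument, and arguably reflects an ambiguity in how the algorithm states the threshold. Your closing paragraph on the data-dependence of $S_1$ raises a legitimate subtlety that the paper's own proof does not address.
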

\begin{proof}
We assume the calls to compute $M_0, M_1$ are successful, which happens with high probability.

By Lemma~\ref{lem:rough}, we know 
\[ \fnorm{M_1 - \Sigma[S_1^\perp]} \le \Cr \sqrt{\eps} \normtwo{\Sigma[S_1^\perp]} \; .\]
We continue to bound $\normtwo{\Sigma[S_1^\perp]}$.
Notice that by Corollary~\ref{cor:add-m1},
\begin{align*}
\normtwo{\Sigma[S_1^\perp]} & \le \normtwo{M_0[S_1^\perp]} + \Cr\sqrt{\eps}\normtwo{\Sigma}  \\
& \le C_1\sqrt{\eps}\normtwo{M_0}+\Cr\sqrt{\eps}\normtwo{\Sigma}\\
& \le (2C_1+\Cr)\sqrt{\eps}\normtwo{\Sigma} \; .
\end{align*}

Here the last step uses the fact when $\eps_0$ is small enough $\normtwo{M_0} \le 2\norm{\Sigma}$. 
\end{proof}

Let $S_2$ denote the subspace of $S_1^\perp$ where the eigenvalues of $M_1[S_1^\perp]$ is at least $C_2\eps$ where $C_2 =20(2C_1+\Cr)\Cr$.
Let $S_3$ denote the orthogonal subspace of $S_1\oplus S_2$ (which corresponds to the eigenvectors of $M_1[S_1^\perp]$ that are smaller than $C_2\eps$). Note that $S_1$, $S_2$, and $S_3$ form a disjoint partition of $\R^d$.
We will learn $\Sigma$ separately on the product of these subspaces, and combine them together to get the final answer $\hat \Sigma$.
We use $S_{12}$ to denote the subspace $S_1 \oplus S_2$.

We will now show that $M_2$ computed by Algorithm~\ref{alg:additive} has low additive error in the subspace $S_{12}$.

\begin{lemma}
\label{lem:add-s12}
In Algorithm~\ref{alg:additive}, with high probability, the matrix $M_2$ satisfies
\[ \fnorm{M_2 - \Sigma[S_{12}]} \le O(\eps\log(1/\eps)\normtwo{\Sigma} \; .\]
Moreover, if we consider $\Pi_{S_{12}}X_i$ as vectors of dimension equal to the dimension of $S_{12}$, the algorithm runs in time $\tilde O(d^{3.26})/\poly(\eps)$.
\end{lemma}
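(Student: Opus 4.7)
My plan is to reduce the problem to Theorem~\ref{thm:main} applied inside the subspace $S_{12}$, and then convert the multiplicative error guarantee it gives into the claimed additive bound.

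First, I will observe that if $X \sim \NN(0,\Sigma)$ then $\Pi_{S_{12}}X$ is a centered Gaussian on $S_{12}$ with covariance $\Sigma[S_{12}]$, and the $\eps$-corruption is preserved under projection (removing $\eps N$ original samples still gives a valid $\eps$-corrupted sample after projecting). Since $\dim(S_{12}) \le d$ and $N = \tilde\Omega(d^2/\eps^2) \ge \tilde\Omega(\dim(S_{12})^2/\eps^2)$, the sample-complexity hypothesis of Theorem~\ref{thm:main} is met. Invoking Algorithm~\ref{alg:main} on $\{\Pi_{S_{12}}X_i\}$ then produces, with high probability, a matrix $M_2$ (supported on $S_{12}$) satisfying
\[\fnorm{\Sigma[S_{12}]^{-1/2} M_2 \Sigma[S_{12}]^{-1/2} - I} \le O(\eps \log(1/\eps)).\]

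Next, I will translate this into the desired additive bound via the factorization $M_2 - \Sigma[S_{12}] = \Sigma[S_{12}]^{1/2}\bigl(\Sigma[S_{12}]^{-1/2} M_2 \Sigma[S_{12}]^{-1/2} - I\bigr)\Sigma[S_{12}]^{1/2}$. The standard inequality $\fnorm{ABC} \le \normtwo{A}\fnorm{B}\normtwo{C}$, combined with $\normtwo{\Sigma[S_{12}]} \le \normtwo{\Sigma}$ (projection never increases the spectral norm), yields
\[\fnorm{M_2 - \Sigma[S_{12}]} \le O(\eps\log(1/\eps))\normtwo{\Sigma},\]
which is exactly the claimed error bound.

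The step I expect to demand the most care is the running time. Theorem~\ref{thm:main} costs $\tilde O(d^{3.26} \log \kappa(\Sigma[S_{12}]))/\poly(\eps)$, so to meet the target $\tilde O(d^{3.26})/\poly(\eps)$ I will need $\log \kappa(\Sigma[S_{12}]) = O(\log(d/\eps))$. By construction of $S_1$ together with Corollary~\ref{cor:add-m1}, any unit $v \in S_1$ obeys $v^\top \Sigma v = \Omega(\sqrt{\eps}\normtwo{\Sigma})$, and by construction of $S_2$ together with Lemma~\ref{lem:add-m2}, any unit $v \in S_2 \subseteq S_1^\perp$ obeys $v^\top \Sigma v = \Omega(\eps\normtwo{\Sigma})$. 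The delicate case is a mixed $v = v_1 + v_2$ with $v_1 \in S_1$, $v_2 \in S_2$, since the cross term $v_1^\top \Sigma v_2$ could in principle cause cancellation; I plan to control this by combining the Cauchy--Schwarz bound $|v_1^\top \Sigma v_2| \le \sqrt{v_1^\top \Sigma v_1 \cdot v_2^\top \Sigma v_2}$ with the different orders $\sqrt{\eps}$ versus $\eps$ of the two diagonal blocks, concluding $\lambda_{\min}(\Sigma[S_{12}]) = \Omega(\eps)\normtwo{\Sigma}$ and hence $\log\kappa = O(\log(1/\eps))$. Given that, the running time of Algorithm~\ref{alg:main} in the subspace is $\tilde O(\dim(S_{12})^{3.26})/\poly(\eps) \le \tilde O(d^{3.26})/\poly(\eps)$, completing the proof.
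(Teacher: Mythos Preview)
Your reduction to Theorem~\ref{thm:main} and the conversion of its multiplicative guarantee into the additive Frobenius bound are correct and essentially the same as the paper's (the paper multiplies out by $M_2^{1/2}$ rather than $\Sigma[S_{12}]^{1/2}$, but either works).

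The gap is in your condition-number argument, specifically the cross-term step. Cauchy--Schwarz gives
\[
v^\top\Sigma v \;\ge\; v_1^\top\Sigma v_1 + v_2^\top\Sigma v_2 - 2\sqrt{(v_1^\top\Sigma v_1)(v_2^\top\Sigma v_2)} \;=\; \bigl(\sqrt{a}-\sqrt{b}\bigr)^2,
\]
but the ``different orders $\sqrt{\eps}$ versus $\eps$'' you invoke are \emph{lower bounds} on $a$ and $b$, whereas the right-hand side depends on their actual values. Since $v_2\in S_1^\perp$, the quantity $b=v_2^\top\Sigma v_2$ is only constrained to lie in $[\Omega(\eps)\|v_2\|^2,\,O(\sqrt{\eps})\|v_2\|^2]\cdot\normtwo{\Sigma}$, and $a=v_1^\top\Sigma v_1$ can be anywhere in $[\Omega(\sqrt{\eps})\|v_1\|^2,\,\|v_1\|^2]\cdot\normtwo{\Sigma}$. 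These ranges overlap, so nothing prevents $a\approx b$ and hence $(\sqrt{a}-\sqrt{b})^2\approx 0$. Playing with the constants $C_1,C_2$ does not save this: the upper bound on $b$ scales like $C_1\sqrt{\eps}$ while the lower bound on $a$ scales like $(C_1-C_0)\sqrt{\eps}$, so their ratio stays bounded.

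The paper avoids this by arguing on the $\Sigma$-eigenbasis instead. Let $S^\star$ be the span of eigenvectors of $\Sigma$ with eigenvalue at most $\eps\normtwo{\Sigma}$. For any unit $v\in S^\star$ one has $v^\top\Sigma v\le\eps\normtwo{\Sigma}$, hence $v^\top M_0 v\le 2C_0\sqrt{\eps}\normtwo{\Sigma}$ by Corollary~\ref{cor:add-m1}; since $M_0\succeq 0$ and every eigenvalue of $M_0$ on $S_1$ is at least $C_1\sqrt{\eps}\normtwo{\Sigma}$, this forces $\|\Pi_{S_1}v\|^2\le 2C_0/C_1=1/10$, and the same reasoning with $M_1$ gives $\|\Pi_{S_2}v\|^2\le 1/10$. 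So $\|\Pi_{S_{12}}v\|^2\le 1/5$ for all unit $v\in S^\star$. By symmetry of principal angles this flips: every unit $v\in S_{12}$ has $\|\Pi_{S^\star}v\|^2\le 1/5$, whence $v^\top\Sigma v\ge \eps\normtwo{\Sigma}\cdot\|\Pi_{(S^\star)^\perp}v\|^2\ge \tfrac{4}{5}\eps\normtwo{\Sigma}$, giving $\kappa(\Sigma[S_{12}])=O(1/\eps)$. You should replace the Cauchy--Schwarz step with an argument of this type.
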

\begin{proof}
We assume the calls for computing matrices $M_0,M_1, M_2$ are all successful, which happens with high probability.

Let $Y_i = \Pi_{S_{12}}X_i$, we know the covariance of these samples are exactly equal to $\Sigma[S_{12}]$. 

In this case, by the guarantee of Theorem~\ref{thm:main} we know
\[
\fnorm{M_2^{-1/2}\Sigma[S_{12}]M_2^{-1/2}-I} \le O(\eps\log(1/\eps)).
\]
We can left and right multiply by $M_2^{1/2}$ and get
\[
\fnorm{\Sigma[S_{12}] - M_2} \le O(\eps\log(1/\eps))\normtwo{M_2}.
\]
When $\eps$ is small enough this implies $\normtwo{M_2} \le 2 \normtwo{\Sigma[S_{12}]} \le 2 \normtwo{\Sigma}$, therefore as desired we have 
\[
\fnorm{\Sigma[S_{12}] - M_2} \le O(\eps\log(1/\eps))\normtwo{\Sigma}.
\]

The only thing left to establish is the running time. 
To bound the running time we will show $\kappa(\Sigma[S_{12}]) = O(1/\eps)$. Here we restrict the attention to the subspace $S_{12}$, so if $S_{12}$ as dimension $k$, $\kappa$ is the ratio of the largest eigenvalue and the $k$-th eigenvalue.

Let $S^\star$ be the subspace of eigenvectors of $\Sigma$ with eigenvalue at most $\eps\normtwo{\Sigma}$. We first show the following claim:

\newtheorem*{claim}{Claim}

\begin{claim}
For any unit vector $v \in S^\star$, $\normtwo{\Pi_{S_{12}}v}^2 \le 1/5$. 
\end{claim}

\begin{proof}
Since $\normtwo{\Pi_{S_{12}}v}^2 = \normtwo{\Pi_{S_1}v}^2+\normtwo{\Pi_{S_2}v}^2$, we will bound the contributions separately. Notice that for any $v\in S^\star$, we have $v^\top \Sigma v \le \eps$, therefore by Corollary~\ref{cor:add-m1},
\[
v^\top M_0 v \le v^\top \Sigma v + \normtwo{\Sigma - M_0} \le (\Cr\sqrt{\eps}+\eps)\normtwo{\Sigma}\le 2\Cr\sqrt{\eps}\normtwo{\Sigma}.
\] 
On the other hand, $v^\top M_0 v \ge C_1\sqrt{\eps}\normtwo{\Sigma} \normtwo{\Pi_{S_1}v}^2$. Combining the two equations we get  $\normtwo{\Pi_{S_1}v}^2\le 2C_r/C_1 = 1/10$. The proof for $\normtwo{\Pi_{S_2}v}^2$ is exactly the same except we use Lemma~\ref{lem:add-m2}.
\end{proof}

For any two subspaces $U$ and $V$, one can check that
\[
\sup_{u\in U,\|u\|_2=1} \normtwo{\Pi_Vu}^2 = \sup_{v\in V,\|v\|_2=1} \normtwo{\Pi_Uv}^2 = \cos^2 \inf_{u\in U,v\in V,\|u\|=\|v\|=1}\angle(u,v),
\]
where $\angle(u,v)$ is the angle between $u,v$. Therefore we know for any vector $v\in S_{12}$, $\normtwo{\Pi_{S^\star} v}^2 \le 1/5$. This implies for every $v\in S_{12}$,
\[
v^\top \Sigma v \ge \eps\normtwo{\Sigma} \normtwo{\Pi_{Z^\perp}v}^2 \ge \frac{4\eps}{5}.
\]

This shows $\lambda_k(\Sigma_{S_{12}}) \ge \frac{4\eps}{5} \normtwo{\Sigma}$, so $\kappa \le \frac{5}{4\eps} = O(1/\eps)$. \end{proof}

Finally we give the guarantee for $M_3$.

\begin{lemma}
\label{lem:add-s13}
In Algorithm~\ref{alg:additive}, with high probability, the matrix $M_3$ satisfies
\[ \fnorm{\frac{1}{\sqrt{\eps}}M_3[S_1,S_3] - \Sigma[S_1,S_3]} \le O(\eps)\normtwo{\Sigma} \; .\]
\end{lemma}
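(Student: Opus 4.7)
The plan is to reduce the lemma to the crude estimator guarantee (Lemma~\ref{lem:rough}) applied to a rescaled version of the input. Write $R = \sqrt{\eps}\,\Pi_{S_1} + \eps^{1/4}\,\Pi_{S_2} + \Pi_{S_3}$, so that the samples fed into $M_3$ are exactly $R X_i$. Since $X \sim \NN(0,\Sigma)$ implies $R X \sim \NN(0, R\Sigma R)$, the set $\{R X_i\}$ is $\eps$-corrupted from $\NN(0, R\Sigma R)$. Lemma~\ref{lem:rough} thus yields $\fnorm{M_3 - R\Sigma R} \le \Cr\sqrt{\eps}\,\normtwo{R\Sigma R}$. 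Noting that $(R\Sigma R)[S_1,S_3] = \sqrt{\eps}\cdot \Sigma[S_1,S_3]$ (because the $S_1$-rescaling is $\sqrt{\eps}$ and the $S_3$-rescaling is $1$), Frobenius projection onto the $[S_1,S_3]$ block gives $\fnorm{M_3[S_1,S_3] - \sqrt{\eps}\,\Sigma[S_1,S_3]} \le \Cr\sqrt{\eps}\,\normtwo{R\Sigma R}$, and dividing by $\sqrt{\eps}$ reduces the lemma to proving $\normtwo{R\Sigma R} = O(\eps)\normtwo{\Sigma}$.

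To bound $\normtwo{R\Sigma R}$, I would decompose it into the nine blocks $c_i c_j\,\Sigma[S_i,S_j]$ with $c_1=\sqrt{\eps}$, $c_2=\eps^{1/4}$, $c_3=1$, and show each block has spectral norm $O(\eps)\normtwo{\Sigma}$; summing the constantly many blocks then gives the claim. The key ingredients are (a) the diagonal bounds $\normtwo{\Sigma[S_2]} = O(\sqrt{\eps})\normtwo{\Sigma}$ and $\normtwo{\Sigma[S_3]} = O(\eps)\normtwo{\Sigma}$, both obtained by combining the spectral thresholds defining $S_2$ and $S_3$ with Corollary~\ref{cor:add-m1} and Lemma~\ref{lem:add-m2}, respectively; and (b) the PSD off-diagonal inequality $\normtwo{\Sigma[U,V]} \le \sqrt{\normtwo{\Sigma[U]}\,\normtwo{\Sigma[V]}}$, which follows from writing $\Sigma[U,V] = \Sigma[U]^{1/2} K \Sigma[V]^{1/2}$ for some contraction $K$. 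Plugging these in gives, for instance, $\normtwo{\sqrt{\eps}\,\Sigma[S_1,S_3]} \le \sqrt{\eps}\cdot\sqrt{1\cdot O(\eps)} = O(\eps)$ and $\normtwo{\eps^{3/4}\,\Sigma[S_1,S_2]} \le \eps^{3/4}\cdot\sqrt{1\cdot O(\sqrt{\eps})} = O(\eps)$, with the remaining four nontrivial blocks handled identically.

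The main obstacle I expect is a pair of technical points rather than a deep argument. First, the rescaling matrix $R$ depends on the data through $M_0$ and $M_1$, so strictly speaking the invocation of Lemma~\ref{lem:rough} on $\{R X_i\}$ is adaptive; I would handle this either by using a fresh batch of $\tilde\Omega(d^2/\eps^2)$ samples for this stage or by observing that the concentration events underlying Lemma~\ref{lem:rough} hold uniformly over the finitely-many possible projections. Second, one must be careful that Lemma~\ref{lem:add-m2} bounds $\fnorm{M_1 - \Sigma[S_1^\perp]}$ rather than the spectral norm, but since Frobenius dominates spectral this only loses a constant and does not affect the analysis. Once these are dispatched, combining the steps above yields $\fnorm{\tfrac{1}{\sqrt{\eps}} M_3[S_1,S_3] - \Sigma[S_1,S_3]} \le \Cr\,\normtwo{R\Sigma R} = O(\eps)\normtwo{\Sigma}$, as required.
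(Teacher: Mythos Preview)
Your proposal is correct and follows essentially the same route as the paper: rescale by $R$, apply Lemma~\ref{lem:rough} to get $\fnorm{M_3 - R\Sigma R} \le O(\sqrt{\eps})\,\normtwo{R\Sigma R}$, show $\normtwo{R\Sigma R} = O(\eps)\normtwo{\Sigma}$ using the same diagonal bounds on $\Sigma[S_2]$ and $\Sigma[S_3]$, and then project to the $[S_1,S_3]$ block and divide by $\sqrt{\eps}$. The only cosmetic difference is that the paper bounds $\normtwo{R\Sigma R}$ in one line via the PSD inequality $R\Sigma R \preceq 3\bigl(\eps\,\Sigma[S_1] + \sqrt{\eps}\,\Sigma[S_2] + \Sigma[S_3]\bigr)$ rather than your nine-block decomposition, and the paper (like your final paragraph anticipates) simply does not address the adaptivity of $R$.
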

\begin{proof}
We assume the calls to compute $M_0, M_1, M_3$ are successful, which happens with high probability.

Set $Y_i = (\eps^{1/2} \Pi_{S_1} + \eps^{1/4} \Pi_{S_2} + \Pi_{S_3}) X_i$.
Let $\Sigma_Y$ denote the covariance of $Y$. It is easy to check that $\Sigma_Y \preceq 3\left(\eps\Sigma[S_1] + \sqrt{\eps}\Sigma[S_2] + \Sigma[S_3]\right)$. By Lemma~\ref{cor:add-m1}, 
\[
\normtwo{\Sigma[S_2]}\le \normtwo{\Sigma[S_1^\perp]} \le O(\sqrt{\eps})\normtwo{\Sigma}+\normtwo{M_0[S_1^\perp]} = O(\sqrt{\eps})\normtwo{\Sigma}.
\]
Similarly by Lemma~\ref{lem:add-m2},
\[
\normtwo{\Sigma[S_3]}\le O(\eps)\normtwo{\Sigma}+\normtwo{M_1[S_3]} = O(\eps)\normtwo{\Sigma}.
\]
Combining these we have $\normtwo{\Sigma_Y} \le O(\eps)\normtwo{\Sigma}$. Therefore by Lemma~\ref{lem:rough} we know the estimation $M_3$ satisfies $\fnorm{M_3 - \Sigma_Y} \le O(\eps^{1.5})\normtwo{\Sigma}$. 

On the other hand, it is easy to check that $\Sigma_Y[S_1,S_3] = \frac{1}{\sqrt{\eps}}\Sigma[S_1,S_3]$, therefore we have
\begin{align*}
\fnorm{\frac{1}{\sqrt{\eps}}M_3[S_1,S_3] - \Sigma[S_1,S_3]}
&=\frac{1}{\sqrt{\eps}}\fnorm{M_3[S_1,S_3] - \Sigma_Y[S_1,S_3]} \\
&\le \frac{1}{\sqrt{\eps}}\fnorm{M_3 - \Sigma_Y} = O(\eps)\normtwo{\Sigma}. \qedhere
\end{align*}
\end{proof}

Finally we are ready to combine all the steps.

\begin{proof}[Proof of Theorem~\ref{thm:main-additive}]
We will assume the four calls to Algorithm~\ref{alg:main} and Algorithm~\ref{alg:rough} are all successful, which happens with high probability. The running time follows from Lemma~\ref{lem:rough} and Lemma~\ref{lem:add-s12}. Now the resulting matrix looks like
\[
\left[
\begin{array}{ccc}
M_2[S_1] & M_2[S_1,S_2] & \frac{1}{\sqrt{\eps}}M_3[S_1,S_3] \\
M_2[S_2, S_1] & M_1[S_2] & M_1[S_2, S_3] \\
\frac{1}{\sqrt{\eps}}M_3[S_3,S_1] & M_1[S_3, S_2] & M_1[S_3]
\end{array}
\right]
\]
By Lemmas~\ref{lem:add-m2}, \ref{lem:add-s12}, \ref{lem:add-s13} we know for each one of these nine blocks the error is bounded by $O(\eps\log(1/\eps))\normtwo{\Sigma}$. Therefore, the entire matrix also has error at most $O(\eps\log(1/\eps))\normtwo{\Sigma}$.
\end{proof}

\subsection{Robust Mean Estimation for Bounded-Covariance Distributions}
\label{app:boundedcov}

We use the robust mean estimation algorithm for bounded-covariance distributions from~\cite{ChengDR19} to achieve Lemma~\ref{lem:mean-boundedcov}.

We state this algorithm (Algorithm~\ref{alg:boundedcov}) to be self-contained.

\begin{algorithm}[h]
  \caption{Robust Mean Estimation for Bounded Covariance Distributions}
  \label{alg:boundedcov}
  \SetKwInOut{Input}{Input}
  \SetKwInOut{Output}{Output}
  \Input{$0 < \eps < \eps_0$, and an $\eps$-corrupted set of $N = \tilde \Omega(d/\eps)$ samples $(Z_i)_{i=1}^N$ drawn from $D$. \\
    $D$ is the ground-truth distribution supported on $\R^d$ with mean $\mus$ and covariance $\Sigma \preceq I$.}
  \Output{A vector $\hat \mu \in \R^d$ such that, with high probability, $\normtwo{\hat \mu - \mus} \le O(\sqrt{\eps})$.}
  Let $\nu \in \R^d$ be an initial guess with $\normtwo{\nu - \mus} \le \poly(d)$. \\
  \For{$i = 1$ {\bf to} $O(\log d)$}{
   Compute a near-optimal solution $w \in \R^N$ to the primal SDP~\eqref{eqn:primal-sdp} with parameters $\nu$ and $2\eps$. \\
   Compute a near-optimal solution $M \in \R^{d \times d}$ for the dual SDP~\eqref{eqn:dual-sdp} with parameters $\nu$ and $\eps$.\\
   \eIf{the objective value of $w$ in SDP~\eqref{eqn:primal-sdp} is at most $c$ ($c$ is a universal constant)}{
     \Return{the weighted empirical mean $\hat \mu_w = \sum_{i=1}^N w_i Z_i$}.\\
   }{
     Move $\nu$ closer to $\mus$ using the top eigenvector of $M$. }
  }
\end{algorithm}

Notice that Algorithm~\ref{alg:boundedcov} is almost identical to Algorithm~\ref{alg:apxcov}, except the stopping criteria in the ``if'' statement.
Therefore, we can speed up Algorithm~\ref{alg:boundedcov} using Proposition~\ref{prop:runtime-tensor}, 
as we do for Algorithm~\ref{alg:apxcov}.

\subsection{Robust Mean Estimation with Approximately Known Covariance}
\label{app:mean}
In this section, we prove the error guarantee part of Lemma~\ref{lem:mean-apx-cov}, i.e., correctness of Algorithm~\ref{alg:apxcov}.
Note that we will not worry about running time here, so we can use the naive implementation of Algorithm~\ref{alg:apxcov} which runs in time $\tilde O(d^4)/\poly(\eps)$.
For the same reason, we ignore the additional structure in our input and focus on the mean estimation problem.
For the rest of this section, we use $d$ to denote the dimensionality of the problem, and $N = \tilde \Omega(d / \eps^2)$ to denote the number of samples.
(We have $d = (d')^2$ if we are trying to estimate the covariance matrix of a $(d')$-dimensional Gaussian.)

We use $(X_i)_{i=1}^N$ to denote the input, which is a set of $d$-dimensional $\eps$-corrupted samples drawn from some ground-truth distribution $D$.
We know $D$ has covariance matrix $\Sigma$ with $\normtwo{\Sigma - I} \le \tau$, and the goal is to estimate the unknown mean $\mus$ of $D$.
We first restate Lemma~\ref{lem:mean-apx-cov}.

\bigskip
{\bf \noindent Lemma~\ref{lem:mean-apx-cov}~~}
{\em
Let $D$ be a distribution supported on $\R^d$ with unknown mean $\mus$ and covariance $\Sigma$.
Let $0 < \eps < \eps_0$ for some universal constant $\eps_0$, $\tau \le O(\sqrt{\eps})$, and $\delta = O(\sqrt{\tau\eps} + \eps \log(1/\eps))$.
Suppose that $D$ has exponentially decaying tails, and $\Sigma$ is close to the identity matrix $\normtwo{\Sigma - I} \le \tau$.
Given an $\eps$-corrupted set of $N = \tilde \Omega(d / \delta^2)$ samples drawn from $D$,
  Algorithm~\ref{alg:apxcov} outputs a hypothesis vector $\hat \mu$ such that, with high probability, $\normtwo{\mu - \mus} \le O(\delta)$.
}
\medskip

We use $G^\star$ for the original set of $N$ good samples drawn from $D$.
After $\eps$-fraction of the samples are corrupted, we use $G \subseteq G^\star$ for the remaining good samples and $B$ for the corrupted samples.
The input to the algorithm is $G \cup B$.  We have $|G| \ge (1-\eps)N$ and $|B| \le \eps N$.
Let $\Delta_{N,\eps}$ denote the convex hull of all uniform distributions over subsets $S \subseteq [N]$ of size $|S| = (1-\eps)N$:
\[
\Delta_{N,\eps} = \left\{ w \in \R^N : \sum_{i=1}^N w_i = 1 \text{ and } 0 \le w_i \le \frac{1}{(1-\eps) N} \text{ for all } i \right\} \; .
\]
Every weight vector $w \in \Delta_{N,\eps}$ correspond to a fractional set of $(1-\eps)N$ samples.

By standard concentration results, we know that degree-$2$ polynomials of Gaussian random variables are exponentially concentrated around their mean.

\begin{definition}[Exponentially Decaying Tails]
\label{def:exp-decay}
We say a distribution $D$ supported on $\R^d$ has exponentially decaying tails iff, for any unit vector $v \in \R^d$, we have $\prob{Z \sim D}{\inner{v, Z - \mus} \ge t} \le \exp(-\Omega(t))$.
\end{definition}

To avoid dealing with the randomness of the good samples, we require the following deterministic conditions on the original set of $N$ good samples $G^\star$ (which hold with high probability when $N = \tilde \Omega(d/\eps^2)$ when $D$ satisfies Definition~\ref{def:exp-decay}).
For all $w \in \Delta_{N, 2\eps}$, we require the following conditions to hold for $\delta_1 = O(\eps \log(1/\eps))$ and $\delta_2 = O(\tau + \eps \log^2(1/\eps))$:
\begin{align}
\label{eqn:good-sample-moments}
\normtwo{\sum_{i \in G^\star} w_i (X_i - \mus)} \le \delta_1 \; , \; \quad
\normtwo{\sum_{i \in G^\star} w_i (X_i - \mus) (X_i - \mus)^\top - I} \le \delta_2 \; , \\
\label{eqn:good-pruning}
\forall i\in G^\star, \; \normtwo{X_i-\mus} \le O(\sqrt{d\log(d)}) \; .
\end{align}
At a high level, they state that with high probability, the good samples are never too far from $\mus$, and the empirical first and second moments of the good samples behave as we expect them to.
More specifically, $\delta_1$ upper bounds the change in the mean when we remove \emph{any} $\eps$-fraction of the samples,
  and $\delta_2$ upper bounds the change in the second-moment matrix.
The second-order condition follows from the fact that $\normtwo{\Sigma - I} \le \tau$, the triangle inequality for the spectral norm, and with high probability for our choice of $N$,
\[
\normtwo{\sum_{i \in G^\star} w_i (X_i - \mus) (X_i - \mus)^\top - \Sigma} \le O(\eps \log^2(1/\eps)) \normtwo{\Sigma} = O(\eps \log^2(1/\eps)) \; .
\]

We adapt the proof of~\cite{ChengDR19} to prove the following lemma, which holds for general distributions that satisfy the concentration bounds above.
\begin{lemma}
\label{lem:general-delta}
Assume the concentration bounds (Conditions~\eqref{eqn:good-sample-moments}~and~\eqref{eqn:good-pruning}) hold for the good samples with parameters $\delta_1$ and $\delta_2$ where $\delta_2 \ge \delta_1^2$.
Let $\delta = \sqrt{\eps \delta_2}$.
Then Algorithm~\ref{alg:apxcov}, with threshold $(1 + O(\delta_2))$ in the ``if'' statement, will output a weight vector $w$ such that the weighted empirical mean $\hat \mu_w = \sum_{i=1}^N w_i X_i$ satisfies $\normtwo{\mu - \hat \mu} \le O(\delta)$ for $\delta = O(\delta)$.
\end{lemma}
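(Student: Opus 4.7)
The plan is to follow the primal--dual template of \cite{ChengDR19}, tightening their analysis to exploit the improved second-moment concentration in this generalized setting. Throughout, let $w \in \Delta_{N,2\eps}$ denote the primal solution returned by the algorithm, $W_G = \sum_{i \in G} w_i$, $W_B = 1 - W_G \le O(\eps)$, and $\mu_G^w, \mu_B^w$ the conditional weighted means on good and bad samples, so that $\hat\mu_w = W_G \mu_G^w + W_B \mu_B^w$.

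First I would analyze the ``return'' branch, i.e., the iteration in which the primal objective $\lambda_{\max}(\sum_i w_i (X_i - \nu)(X_i - \nu)^\top)$ is at most $1 + c \delta_2$. The key tool is the conditional variance decomposition
\[
\sum_i w_i (X_i - \hat\mu_w)(X_i - \hat\mu_w)^\top \;=\; \Sigma_G^{\mathrm{frac}} + \Sigma_B^{\mathrm{frac}} + W_G W_B (\mu_G^w - \mu_B^w)(\mu_G^w - \mu_B^w)^\top,
\]
where $\Sigma_G^{\mathrm{frac}} = \sum_{i \in G} w_i (X_i - \mu_G^w)(X_i - \mu_G^w)^\top$ and analogously for $B$. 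Since $\sum_i w_i (X_i - \nu)(X_i - \nu)^\top$ equals $\sum_i w_i (X_i - \hat\mu_w)(X_i - \hat\mu_w)^\top + (\hat\mu_w - \nu)(\hat\mu_w - \nu)^\top$ and both summands are PSD, the primal bound transfers to $\sum_i w_i (X_i - \hat\mu_w)(X_i - \hat\mu_w)^\top \preceq (1 + O(\delta_2)) I$. Condition \eqref{eqn:good-sample-moments} (after extending $w$ from $G$ to $G^\star$) together with $\|\mu_G^w - \mus\| = O(\delta_1)$ yields $\Sigma_G^{\mathrm{frac}} \succeq W_G(1 - O(\delta_2)) I$, using $\delta_2 \ge \delta_1^2$. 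Dropping $\Sigma_B^{\mathrm{frac}} \succeq 0$ and comparing spectral norms gives
\[
W_G W_B \|\mu_G^w - \mu_B^w\|^2 \;\le\; W_B + O(\delta_2),
\]
so $W_B \|\mu_G^w - \mu_B^w\| \le O(\eps + \sqrt{\eps \delta_2})$. Combining with the identity $\hat\mu_w - \mus = (\mu_G^w - \mus) + W_B (\mu_B^w - \mu_G^w)$ finally gives $\|\hat\mu_w - \mus\| = O(\delta_1 + \eps + \sqrt{\eps\delta_2}) = O(\delta)$ in the intended regime where $\delta_2 \gtrsim \eps$.

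Next I would handle the ``update'' branch, in which the primal objective exceeds $1 + c \delta_2$. Approximate SDP duality provides a dual certificate $M \succeq 0$ with $\tr(M) \le 1$ such that the average of the smallest $(1-\eps)$-fraction of $(X_i - \nu)^\top M (X_i - \nu)$ is at least $1 + \Omega(\delta_2)$. Since \eqref{eqn:good-sample-moments} upper-bounds the good samples' contribution in \emph{any} direction of this form by $1 + O(\delta_2)$, the excess must be driven by bad samples correlated with the principal component of $M$, and the standard update step of \cite{ChengDR19} --- moving $\nu$ along that principal direction toward $\hat\mu_w$ --- yields a new $\nu'$ with $\|\nu' - \mus\|^2$ geometrically smaller than $\|\nu - \mus\|^2$. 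Since the ``good weights'' supported on $G$ realize primal value at most $1 + O(\delta_2) + \|\nu - \mus\|^2$, once $\|\nu - \mus\| = O(\sqrt{\delta_2})$ (which happens after $O(\log d)$ iterations, given the initial $\|\nu - \mus\| \le \poly(d)$) the algorithm must enter the return branch.

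The main obstacle is the tightening in the return branch: the bounded-covariance analysis of \cite{ChengDR19} tolerates an $O(1)$ slack in the primal objective and only delivers $O(\sqrt{\eps})$ error, whereas here we must extract cancellation between the upper bound $(1 + O(\delta_2)) I$ and the matching lower bound on $\Sigma_G^{\mathrm{frac}}$ to reach the $O(\sqrt{\eps\delta_2})$ scale. A secondary subtlety is converting \eqref{eqn:good-sample-moments}, stated for $w \in \Delta_{N,2\eps}$ supported on the original good set $G^\star$, to weights supported only on the retained good samples $G \subseteq G^\star$; the extension introduces an additive $O(\eps)$ slack that is harmlessly absorbed into $\delta_2$ for the intended parameters $\delta_1 = O(\eps \log(1/\eps))$ and $\delta_2 = O(\tau + \eps \log^2(1/\eps))$. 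The update-branch analysis is a routine adaptation of \cite{ChengDR19} once the termination threshold is set to $1 + c \delta_2$.
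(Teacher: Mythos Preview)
Your return-branch analysis is correct and takes a genuinely different route from the paper. The paper argues by contrapositive (Lemma~\ref{lem:wrong-mean-primal-nosol}): assuming $\|\hat\mu_w - \mus\| = \Omega(\delta)$, it exhibits the specific direction $y = (\hat\mu_w - \mus)/\|\hat\mu_w - \mus\|$ and shows, via Cauchy--Schwarz on the bad samples and a direct expansion on the good ones, that the quadratic form in direction $y$ is at least $1 - O(\delta_2) + \Omega(\delta^2/\eps) = 1 + \Omega(\delta_2)$. Your law-of-total-variance decomposition is structurally cleaner and avoids the case split on $\|\nu - \mus\|$ (the recentering $\nu \to \hat\mu_w$ handles it automatically), but it produces the extra additive $O(\eps)$ term that the paper's one-direction Cauchy--Schwarz avoids. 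As you note, this is harmless in the intended regime $\delta_2 \gtrsim \eps\log^2(1/\eps)$, but it does mean the paper's argument proves the lemma under the slightly weaker hypothesis it actually states.

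Your update-branch intuition is off, though the plan survives. The excess in the dual objective is \emph{not} driven by bad samples: the dual value is the average of the smallest $(1-\eps)$-fraction of the $(X_i - \nu)^\top M (X_i - \nu)$, which is upper-bounded by the average over $G$ alone. Expanding that average around $\mus$, Condition~\eqref{eqn:good-sample-moments} bounds the second-moment piece by $1 + \delta_2$, so the excess must live in the term $(\mus - \nu)^\top M (\mus - \nu)$. That is what forces $M$ to align with $\mus - \nu$ and makes the top eigenvector of $M$ a useful update direction (Lemma~\ref{lem:good-dual-better-nu} in the paper). Your deferral to the standard step from \cite{ChengDR19} is fine, and the termination argument via Lemma~\ref{lem:ub-lb-opt} is exactly what the paper uses, but the mechanism is the offset $\nu - \mus$, not the bad samples.
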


Lemma~\ref{lem:mean-apx-cov} follows immediately from Lemma~\ref{lem:general-delta}, because the output of Algorithm~\ref{alg:apxcov} has error $\delta = O(\sqrt{\eps \delta_2}) = O(\sqrt{\eps (\tau + \eps \log^2(1/\eps))}) = O(\sqrt{\eps \tau} + \eps \log(1/\eps))$ as needed.

Algorithm~\ref{alg:apxcov} is based on the primal-dual approach proposed by~\cite{ChengDR19} for robust mean estimation.
Their algorithm starts with a guess $\nu \in \R^{d}$, and then in each iteration solves the primal and dual SDPs~\eqref{eqn:primal-sdp}~and~\eqref{eqn:dual-sdp}.
They gave a win-win analysis: either a good primal solution gives weights $w \in \R^N$ such that $\hat \mu_w$ is close to the true mean;
  or a good dual solution must identify a direction of improvement that allows the algorithm to move $\nu$ much closer to the true mean.

\begin{lp}\tag*{(\ref{eqn:primal-sdp})}
\mini{\lambda_{\max} \left(\sum_{i=1}^N w_i (X_i - \nu) (X_i - \nu)^\top\right)} 
\st \con{w \in \Delta_{N, \eps}}
\end{lp}
\begin{lp}\tag*{(\ref{eqn:dual-sdp})}
\maxi{\text{average of the smallest $(1-\eps)$-fraction of $\left((X_i - \nu)^\top M (X_i - \nu)\right)_{i=1}^N$}}
\st \con{M \succeq 0, \tr(M) \le 1}
\end{lp}

To prove Lemma~\ref{lem:general-delta}, we will show that the win-win analysis still holds in our setting by proving two structural lemmas.
Lemma~\ref{lem:wrong-mean-primal-nosol} proves that a good primal solution for {\em any} guess $\nu$ will give an accurate weighted empirical mean.
Lemma~\ref{lem:good-dual-better-nu} shows that we can use the top eigenvector of a near-optimal dual solution to move $\nu$ closer to $\mus$ by a constant factor.

First we prove a helper lemma.
Lemma~\ref{lem:ub-lb-opt} gives upper and lower bounds on the optimal value of the SDPs \eqref{eqn:primal-sdp} and \eqref{eqn:dual-sdp}.
For example, Lemma~\ref{lem:ub-lb-opt} allows us to estimate how far $\nu$ is from $\mus$ from the optimal value of the SDPs.

\begin{lemma}[Optimal Value of the SDPs]
\label{lem:ub-lb-opt}
Fix $0 < \eps < \eps_0$, $\delta_1$, $\delta_2 \ge \delta_1^2$, and $\nu \in \R^d$.
Let $\delta = \sqrt{\eps \delta_2}$.
Let $\{X_i\}_{i=1}^N$ be an $\eps$-corrupted set of $N$ samples that satisfy Condition~\eqref{eqn:good-sample-moments}.
Let $\OPT_{\nu,\eps}$ denote the optimal value of the SDPs~\eqref{eqn:primal-sdp}~and~\eqref{eqn:dual-sdp} with parameters $\nu$ and $\eps$.
Let $r = \normtwo{\nu - \mus}$.
Then, we have:
\begin{align*}
(1-2\eps) \left((1 - \delta_2) + r^2 - 2 \delta_1 r \right) & \le \OPT_{\nu,\eps} \le (1 + \delta_2) + r^2 + 2\delta_1 r \; .
\end{align*}
In particular, when $\eps_0 < 1/20$ and $r = \Omega(\sqrt{\delta_2})$, we can simplify the above as
\[
1 + 0.9 r^2 \le \OPT_{\nu,\eps} \le 1 + 1.1 r^2 \; .
\]
\end{lemma}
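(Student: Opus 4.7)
\textbf{Proof plan for Lemma~\ref{lem:ub-lb-opt}.}
Since the primal and dual SDPs are related by strong duality (the primal is a convex program with Slater's condition), it suffices to bound the primal optimum. The plan is to establish the upper bound by exhibiting a specific feasible $w$ and the lower bound by analyzing the objective value of an arbitrary feasible $w$ in the direction $v \eqdef (\mus - \nu)/r$. Throughout, I expand $(X_i - \nu) = (X_i - \mus) + (\mus - \nu)$ and apply the deterministic concentration in~\eqref{eqn:good-sample-moments}.

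\textbf{Upper bound.} Take $w$ to put weight $1/((1-\eps)N)$ uniformly over the set $G$ of surviving good samples (so $w \in \Delta_{N,\eps}$ and the restriction of $w$ to $G^\star$ lies in $\Delta_{N,\eps} \subseteq \Delta_{N,2\eps}$). Expanding,
\begin{align*}
\sum_i w_i (X_i - \nu)(X_i - \nu)^\top
&= \sum_{i\in G} w_i (X_i-\mus)(X_i-\mus)^\top \\
&\quad + \sum_{i\in G} w_i (X_i-\mus)(\mus-\nu)^\top + \text{transpose} \\
&\quad + (\mus-\nu)(\mus-\nu)^\top.
\end{align*}
By~\eqref{eqn:good-sample-moments}, the first term has spectral norm at most $1 + \delta_2$, the cross term is bounded in spectral norm by $2\delta_1 r$, and the last term contributes $r^2$. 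Summing yields $\OPT_{\nu,\eps} \le (1+\delta_2) + 2\delta_1 r + r^2$.

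\textbf{Lower bound.} Let $w \in \Delta_{N,\eps}$ be arbitrary and set $w_G \eqdef \sum_{i \in G} w_i$. Since $\|w\|_\infty \le 1/((1-\eps)N)$ and $|B|\le\eps N$, we get $w_G \ge 1-2\eps$, and moreover the renormalized weights $w/w_G$ supported on $G \subseteq G^\star$ lie in $\Delta_{N,2\eps}$, so the concentration in~\eqref{eqn:good-sample-moments} applies with $\delta_1,\delta_2$. Projecting onto $v$ and expanding as above,
\begin{align*}
v^\top \!\!\sum_{i\in G} w_i (X_i-\nu)(X_i-\nu)^\top v
&= \sum_{i\in G} w_i \bigl(v^\top(X_i-\mus)\bigr)^2 \\
&\quad + 2r \sum_{i\in G} w_i\, v^\top(X_i-\mus) + r^2 w_G.
\end{align*}
The first sum is at least $w_G(1-\delta_2)$ (from the second-moment bound applied to $w/w_G$), while the middle sum has absolute value at most $2 r w_G \delta_1$ (from the first-moment bound). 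Since dropping the $i\in B$ terms only decreases the Rayleigh quotient (they contribute PSD matrices), we obtain $\OPT_{\nu,\eps} \ge (1-2\eps)\bigl[(1-\delta_2) + r^2 - 2\delta_1 r\bigr]$.

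\textbf{Simplification.} When $\eps_0 < 1/20$ and $r = \Omega(\sqrt{\delta_2})$ with the implied constant chosen large, the constraint $\delta_2 \ge \delta_1^2$ gives $\delta_1 \le \sqrt{\delta_2} \le O(r)$, so both $\delta_2$ and $2\delta_1 r$ are absorbed into $0.1 r^2$, yielding the advertised $1 + 0.9 r^2 \le \OPT_{\nu,\eps} \le 1 + 1.1 r^2$. The only place requiring care is matching the $(1-2\eps)$ prefactor in the lower bound with the claimed constants; this is where I expect the arithmetic to be slightly fussy but entirely routine. The main conceptual step is the bound $w_G \ge 1 - 2\eps$ together with the observation that $w/w_G$ is a legitimate weighting in $\Delta_{N,2\eps}$, which is what lets us invoke the good-sample concentration in the lower bound.
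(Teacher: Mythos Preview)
Your proposal is correct and follows essentially the same line as the paper. The upper bound is identical (uniform weights on $G$ and expand $X_i-\nu=(X_i-\mus)+(\mus-\nu)$); for the lower bound the paper instead exhibits the dual feasible point $M=vv^\top$ and lower-bounds the dual objective by restricting to a worst $(1-2\eps)N$-subset of $G$, whereas you lower-bound the primal $\lambda_{\max}$ directly via the Rayleigh quotient at $v$ and drop the PSD bad-sample terms---both routes reduce to the same application of~\eqref{eqn:good-sample-moments} on a renormalized good-sample weighting in $\Delta_{N,2\eps}$, so the difference is purely presentational.
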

\begin{proof}
Let $\OPT = \OPT_{\nu, \eps}$.

One feasible primal solution is to set $w_i = \frac{1}{|G|}$ for all $i \in G$ (and $w_i = 0$ for all $i \in B$):
\begin{align*}
\OPT & \le \lambda_{\max} \left( \sum_{i=1}^N w_i (X_i - \nu) (X_i - \nu)^\top \right)   = \max_{y \in \R^d, \normtwo{y}=1} \sum_{i\in G} w_i \inner{X_i - \nu, y}^2 \\
  & = \max_{y \in \R^d, \normtwo{y}=1} \left(\sum_{i\in G} w_i \inner{X_i - \mus, y}^2 + \inner{\mus - \nu, y}^2 + 2\inner{\sum_{i\in G} w_i (X_i - \mus), y}\inner{\mus - \nu, y}\right) \\
  & \le \max_{y \in \R^d, \normtwo{y}=1} \left((1+\delta_2) + \inner{\mus - \nu, y}^2 + 2\delta_1 \inner{\mus - \nu, y}\right) \\
  & = (1 + \delta_2) + \normtwo{\mus - \nu}^2 + 2\delta_1 \normtwo{\mus - \nu} \; .
\end{align*}
We used Condition~\eqref{eqn:good-sample-moments}, since $w$ can be viewed as a weight vector on $G^\star$ where $w \in \Delta_{N,\eps}$.

One feasible dual solution is $M = yy^\top$ where $y = \frac{\mus - \nu}{\normtwo{\mus - \nu}}$.
The dual objective value is the mean of the smallest $(1-\eps)$-fraction of $\left((X_i-\nu)^\top M (X_i-\nu)\right)_{i=1}^N$, which is at least
\[
\frac{1}{(1-\eps)N} \min_{S \subset G, |S| = (1-2\eps)N} \sum_{i\in S} (X_i-\nu)^\top M (X_i-\nu) \; .
\]
This is because the smallest $(1-\eps)N$ entries in $G$ must include the smallest $(1-2\eps)N$ entries.
Let $w'_i = \frac{1}{|S|}$ for all $i \in S$ and $w'_i = 0$ otherwise.
Note that $S \subset G$ and $|S| = (1-2\eps)N$, so $w'$ can be viewed as a weight vector on $G^\star$ with $w' \in \Delta_{N,2\eps}$.
Therefore we have \begin{align*}
\OPT & \ge \sum_{i\in S} \frac{1}{(1-\eps) N} (X_i-\nu)^\top M (X_i-\nu) = \frac{|S|}{(1-\eps)N} \sum_{i \in G} w'_i \inner{X_i - \nu, y}^2 \\
  & = \frac{1-2\eps}{1-\eps} \left( \sum_{i \in G} w'_i \inner{X_i - \mus, y}^2 + w'_G \normtwo{\mus - \nu}^2 + 2 \sum_{i \in G} w'_i \inner{X_i - \mus, y} \normtwo{\mus - \nu} \right) \\
  & \ge (1-2\eps) \left((1 - \delta_2) + \normtwo{\mus - \nu}^2 - 2 \delta_1 \normtwo{\mus - \nu} \right)\; . \end{align*}

To obtain the simpler upper and lower bounds, we note when $r = \Omega(\sqrt{\delta_2})$, the error term is $\delta_2 + 2\delta_1 r = O(r^2)$.
Therefore, by increasing the constant in $r = \Omega(\sqrt{\delta_2})$, we can get $1 + 0.9 r^2 \le \OPT \le 1 + 1.1 r^2$.
\end{proof}

Next we show that a good primal solution $w$ for any guess $\nu$ will give an accurate estimate $\hat \mu_w$.
Lemma~\ref{lem:wrong-mean-primal-nosol} proves the contrapositive statement: if the weighted empirical mean $\hat \mu_w$ is far from $\mus$, then no matter what our current guess $\nu$ is, $w$ cannot be a good solution to the primal SDP.

\begin{lemma}[Good Primal Solution $\Rightarrow$ Correct Mean]
\label{lem:wrong-mean-primal-nosol}
Fix $0 < \eps < \eps_0$, $\delta_1$, and $\delta_2 \ge \delta_1^2$.
Let $\delta = \sqrt{\eps \delta_2}$.
Let $\{X_i\}_{i=1}^N$ be an $\eps$-corrupted set of $N$ samples that satisfy Condition~\eqref{eqn:good-sample-moments}.
For all $w \in \Delta_{N,2\eps}$, if $\normtwo{\hat \mu_w - \mus} = \Omega(\delta)$ where $\hat \mu_w = \sum_{i=1}^N w_i X_i$, then for all $\nu \in \R^d$,
\[
\lambda_{\max} \left( \sum_{i=1}^N w_i (X_i - \nu) (X_i - \nu)^\top \right) \ge 1 + \Omega(\delta_2) \; . \]
\end{lemma}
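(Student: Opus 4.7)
The plan is to exhibit a single unit vector $y$ along which $y^\top\!\left(\sum_i w_i(X_i-\nu)(X_i-\nu)^\top\right)y \ge 1+\Omega(\delta_2)$, simultaneously for every $\nu\in\R^d$. First I would eliminate the dependence on $\nu$ via the pivot identity
\[
\sum_i w_i(X_i-\nu)(X_i-\nu)^\top = \sum_i w_i(X_i-\hat\mu_w)(X_i-\hat\mu_w)^\top + (\hat\mu_w-\nu)(\hat\mu_w-\nu)^\top,
\]
which follows from $\sum_i w_i = 1$ and $\hat\mu_w = \sum_i w_i X_i$. The second term is PSD, so it suffices to lower bound the first. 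I would then choose $y = (\hat\mu_w-\mus)/r$ with $r = \normtwo{\hat\mu_w-\mus} = \Omega(\delta) = \Omega(\sqrt{\eps\delta_2})$ by hypothesis. Using $\sum_i w_i \inner{X_i-\mus,y} = \inner{\hat\mu_w-\mus,y}=r$, a direct expansion collapses the cross terms and yields
\[
\sum_i w_i\inner{X_i-\hat\mu_w,y}^2 = \sum_i w_i\inner{X_i-\mus,y}^2 - r^2.
\]

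Next I would split the remaining sum into good ($G$) and bad ($B$) indices. For the bad part, Cauchy--Schwarz together with $w_B = \sum_{i\in B} w_i \le 2\eps$ (using $w \in \Delta_{N,2\eps}$ and $|B| \le \eps N$) gives
\[
\sum_{i\in B} w_i\inner{X_i-\mus,y}^2 \ge \frac{\left(\sum_{i\in B} w_i\inner{X_i-\mus,y}\right)^2}{w_B} \ge \frac{(r-O(\delta_1))^2}{2\eps},
\]
where the inner sum equals $r - \sum_{i\in G} w_i\inner{X_i-\mus,y} \ge r - O(\delta_1)$ by the first-moment bound in~\eqref{eqn:good-sample-moments}, applied to a suitable extension of $w|_G$ to a probability vector on $G^\star$ lying in $\Delta_{N,O(\eps)}$. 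The second-moment bound in~\eqref{eqn:good-sample-moments}, used the same way, gives $\sum_{i\in G} w_i\inner{X_i-\mus,y}^2 \ge 1 - O(\delta_2)$.

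Putting these together, the quadratic form along $y$ is at least $(1-O(\delta_2)) + (r-O(\delta_1))^2/(2\eps) - r^2$. Because $\delta_2 \ge \delta_1^2$, taking the hidden constant in $r = \Omega(\delta)$ large enough forces $r \gg \delta_1$, so $(r-O(\delta_1))^2 \ge r^2/2$; then for $\eps$ small the surplus $r^2/(4\eps) - r^2 \ge r^2/(8\eps)$ is $\Omega(\delta_2)$ by the choice $r^2 = \Omega(\eps\delta_2)$, and swamps the $O(\delta_2)$ error from the good-sample second-moment bound. The main obstacle I anticipate is the technicality in the previous paragraph: condition~\eqref{eqn:good-sample-moments} is stated for probability vectors on the \emph{original} good set $G^\star$, whereas our $w$ is a vector on the observed indices $[N] = G \sqcup B$. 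Extending $w|_G$ to a valid vector on $G^\star$ while preserving the $\ell_\infty$ cap is delicate when $|G^\star \setminus G|$ is much smaller than $\eps N$, but is a routine maneuver (at worst, one absorbs the slack by using condition~\eqref{eqn:good-sample-moments} with an $O(\eps)$ parameter instead of $2\eps$, changing only constants in $\delta_1,\delta_2$).
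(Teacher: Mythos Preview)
Your argument is correct and in fact slightly cleaner than the paper's. The paper proves the same lemma by first splitting into two cases on $\normtwo{\nu-\mus}$: if $\normtwo{\nu-\mus}=\Omega(\sqrt{\delta_2})$, it invokes the separate bound Lemma~\ref{lem:ub-lb-opt} on $\OPT_{\nu,\eps}$; otherwise it assumes $\normtwo{\nu-\mus}=O(\sqrt{\delta_2})$ and lower-bounds $\sum_i w_i\inner{X_i-\nu,y}^2$ directly along the same test direction $y=(\hat\mu_w-\mus)/r$, with the same Cauchy--Schwarz step on the bad samples. Your pivot identity
\[
\sum_i w_i(X_i-\nu)(X_i-\nu)^\top \;=\; \sum_i w_i(X_i-\hat\mu_w)(X_i-\hat\mu_w)^\top \;+\; (\hat\mu_w-\nu)(\hat\mu_w-\nu)^\top
\]
removes $\nu$ in one stroke, so you avoid both the case split and the auxiliary lemma; the price is the extra $-r^2$ in your expression, but you correctly observe that the bad-sample term $\Omega(r^2/\eps)$ dominates it. The technicality you flag about lifting $w|_G$ to a probability vector on $G^\star$ is exactly the same looseness present in the paper's proof and is handled identically (renormalize and absorb the $O(\eps)$ slack into constants). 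One small remark: your claim that ``$\delta_2\ge\delta_1^2$ forces $r\gg\delta_1$'' is not literally implied by that inequality alone (you really need $\eps\delta_2\gtrsim\delta_1^2$, i.e.\ $\delta\gtrsim\delta_1$); but the paper makes the very same implicit use of $\delta\ge\delta_1$ in its line ``$\normtwo{\hat\mu_w-\mus}-\delta_1-2\eps\cdot O(\sqrt{\delta_2})=\Omega(\delta)$'', and it holds in the intended instantiation $\delta_1=O(\eps\log(1/\eps))$, $\delta_2=\Omega(\eps\log^2(1/\eps))$.
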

\begin{proof}
Fix any $w \in \Delta_{N, 2\eps}$.
If $\normtwo{\mus - \nu} = \Omega(\sqrt{\delta_2})$, then because $w$ is feasible and by Lemma~\ref{lem:ub-lb-opt},
\[
\lambda_{\max} \left( \sum_{i=1}^N w_i (X_i - \nu) (X_i - \nu)^\top \right) \ge \OPT_{\nu,2\eps} \ge 1 + 0.9 \normtwo{\mus - \nu}^2 \ge 1 + \Omega(\delta_2) \; .
\]
Therefore, for the rest of this proof, we can assume $\normtwo{\mus - \nu} = O(\sqrt{\delta_2})$.

We project the samples along the direction of $(\hat \mu_w - \mus)$.
Consider the unit vector $y = (\hat \mu_w - \mus) / \normtwo{\hat \mu_w - \mus}$.
To bound from below the maximum eigenvalue, it is sufficient to show that
\[
y^\top \left( \sum_{i=1}^N w_i (X_i - \nu) (X_i - \nu)^\top \right) y = \sum_{i=1}^N w_i \inner{X_i - \nu, y}^2 \ge 1 + \Omega(\delta_2) \; .
\]
We first bound from below the contribution of the bad samples by $\Omega(\delta_2)$.
By triangle inequality, 
\begin{align*}
\abs{\sum_{i\in B} w_i \inner{X_i - \nu, y}}
  & \ge \abs{\sum_{i\in B} w_i \inner{X_i - \mus, y}} - w_B \abs{\inner{\mus - \nu, y}} \\
  & \ge \abs{\sum_{i=1}^N w_i \inner{X_i - \mus, y}} - \abs{\sum_{i\in G} w_i \inner{X_i - \mus, y}} - 2\eps \normtwo{\mus - \nu} \\
  & \ge \normtwo{\hat \mu_w - \mus} - \delta_1 - 2 \eps \cdot O(\sqrt{\delta_2}) = \Omega(\delta) \; .
\end{align*}
The last line follows from our choice of $y$, $\delta \ge \max(\delta_1, \eps\sqrt{\delta_2})$, and the good samples satisfy Condition~\eqref{eqn:good-sample-moments}.
By Cauchy-Schwarz,
\[
\left(\sum_{i\in B} w_i \inner{X_i - \nu, y}^2\right) \left(\sum_{i\in B} w_i\right) \ge \left(\sum_{i\in B} w_i \inner{X_i - \nu, y} \right)^2 = \Omega(\delta^2) \; .
\]
Since $w_B \le 2\eps$, we have $\sum_{i\in B} w_i \inner{X_i - \nu, y}^2 = \Omega(\delta^2/\eps) = \Omega(\delta_2)$.

We continue to lower bound the contribution of the good samples to the quadratic form by $1 - O(\delta_2)$.
By Condition~\eqref{eqn:good-sample-moments},
\begin{align*}
\sum_{i\in G} w_i \inner{X_i - \nu, y}^2
  & = \sum_{i\in G} w_i \left(\inner{X_i - \mus, y}^2 + \inner{\mus - \nu, y}^2 + 2\inner{X_i - \mus, y}\inner{\mus - \nu, y}\right) \\
  & \ge \sum_{i\in G} w_i \inner{X_i - \mus, y}^2 + 2 \inner{\mus - \nu, y} \inner{\sum_{i\in G} w_i (X_i - \mus), y} \\
  & \ge (1 - \delta_2) - 2\delta_1 \normtwo{\mus - \nu} = 1 - O(\delta_2) \; .
\end{align*}
In the last step, we used $\delta_1 \normtwo{\mus - \nu} \le 2 \delta_1 \sqrt{\delta_2} = O(\delta_2)$.
Putting together the contribution of good and bad samples, we have $\sum_{i=1}^N w_i \inner{X_i - \nu, y}^2 \ge 1 - O(\delta_2) + \Omega(\delta_2) = 1 + \Omega(\delta_2)$.
\end{proof}

Lemma~\ref{lem:wrong-mean-primal-nosol} guarantees that, any solution to the primal SDP whose objective value is at most $1 + O(\delta_2)$ will give good weights, and this is independent of our current guess $\nu$.

We now deal with the other possibility: the primal SDP has no good solution.
Lemma~\ref{lem:good-dual-better-nu} shows that in this case, we can solve the dual SDP~\eqref{eqn:dual-sdp} and move $\nu$ closer to $\mus$ by a constant factor.
We simplify the proof by assuming that we can solve the dual SDP exactly.
This assumption is wlog as shown in~\cite{ChengDR19}.

\begin{lemma}[Good Dual Solution $\Rightarrow$ Better $\nu$]
\label{lem:good-dual-better-nu}
Fix $0 < \eps < \eps_0$, $\delta_1$, and $\delta_2 \ge \delta_1^2$.
Let $\delta = \sqrt{\eps \delta_2}$.
Let $\{X_i\}_{i=1}^N$ be an $\eps$-corrupted set of $N$ samples that satisfy Condition~\eqref{eqn:good-sample-moments}.
If the optimal solution $M \in \R^{d \times d}$ to the dual SDP~\eqref{eqn:dual-sdp} (with parameters $\nu$ and $\eps$) has objective value at least $1 + \Omega(\delta_2)$, then we can efficiently find a vector $\nu' \in \R^d$, such that $\normtwo{\nu' - \mus} \le \frac{3}{4} \normtwo{\nu - \mus}$.
\end{lemma}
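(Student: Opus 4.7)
The plan is to adapt the ``dual witness gives an improved $\nu$'' step of the primal--dual framework in \cite{ChengDR19} to this approximate-identity covariance setting, combining the sharp bounds on the SDP optimum from Lemma~\ref{lem:ub-lb-opt} with a one-dimensional robust mean estimate along the top direction of $M$. Throughout let $r=\|\nu-\mu^\star\|$ and, by strong duality, identify the dual optimum with $\mathrm{OPT}_{\nu,\eps}$.

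First, the hypothesis $\mathrm{OPT}_{\nu,\eps}\ge 1+\Omega(\delta_2)$ together with the upper bound $\mathrm{OPT}_{\nu,\eps}\le 1+\delta_2+r^2+2\delta_1 r$ from Lemma~\ref{lem:ub-lb-opt} forces $r=\Omega(\sqrt{\delta_2})$, with a constant we can make as large as we wish by tuning the implicit constant in the $\Omega(\delta_2)$ hypothesis. This places us in the regime where Lemma~\ref{lem:ub-lb-opt} also yields $\mathrm{OPT}_{\nu,\eps}\ge 1+0.9\,r^2$. Next, I reduce to a rank-one witness: writing $M=\sum_i\lambda_i v_iv_i^\top$ with $\sum_i\lambda_i\le 1$, and noting that the dual objective $\Phi(M)=\min_{|S|=(1-\eps)N}\tfrac{1}{|S|}\sum_{i\in S}(X_i-\nu)^\top M(X_i-\nu)$ is concave in $M$, one gets $\Phi(M)\le\sum_i\lambda_i\Phi(v_iv_i^\top)\le\max_i\Phi(v_iv_i^\top)$. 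Therefore some eigenvector $v$ of $M$ already attains $\Phi(vv^\top)\ge 1+\Omega(\delta_2)$, and it can be found efficiently by testing each eigenvector.

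Second, I show that this $v$ is nearly aligned with $\mu^\star-\nu$. Set $\alpha=\langle\mu^\star-\nu,\,v\rangle$ and $Y_i=\langle X_i-\mu^\star,\,v\rangle$, and let $S$ be the $(1-\eps)N$ samples in $G^\star$ with the smallest values of $(Y_i+\alpha)^2$. The uniform distribution on $S$ rescaled induces a weight vector on $G^\star$ that lies in $\Delta_{N,2\eps}$, so Condition~\eqref{eqn:good-sample-moments} gives $\tfrac{1}{|S|}\sum_{i\in S}Y_i^2\le 1+\delta_2$ and $|\tfrac{1}{|S|}\sum_{i\in S}Y_i|\le\delta_1$. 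Expanding $(Y_i+\alpha)^2$ yields
\[
\mathrm{OPT}_{\nu,\eps}=\Phi(vv^\top)\le\tfrac{1}{|S|}\sum_{i\in S}(Y_i+\alpha)^2\le 1+\delta_2+2|\alpha|\delta_1+\alpha^2.
\]
Comparing with $\mathrm{OPT}_{\nu,\eps}\ge 1+0.9\,r^2$ and using $\delta_1\le\sqrt{\delta_2}\ll r$ gives $\alpha^2+2|\alpha|\delta_1\ge(1-o(1))\,r^2$, hence $|\alpha|\ge(1-o(1))\,r$.

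Finally, I construct the improved $\nu'$ via one-dimensional robust mean estimation along $v$. Let $m$ be the trimmed mean of $\{\langle X_i,v\rangle\}_{i=1}^N$ (discarding the top and bottom $\eps$-fractions). Because the projection of $D$ onto $v$ has variance at most $1+\tau=O(1)$ and exponentially decaying tails (Definition~\ref{def:exp-decay}), the standard trimmed-mean analysis yields $|m-\langle\mu^\star,v\rangle|=O(\eps\log(1/\eps))=O(\delta_1)$. Define $\nu'=\nu+(m-\langle\nu,v\rangle)\,v$. Using the orthogonal decomposition $\nu'-\mu^\star=(I-vv^\top)(\nu-\mu^\star)+(m-\langle\mu^\star,v\rangle)v$,
\[
\|\nu'-\mu^\star\|^2=r^2-\alpha^2+(m-\langle\mu^\star,v\rangle)^2\le o(r^2)+O(\delta_1^2)\le o(r^2)+O(\delta_2)\le \tfrac{9}{16}\,r^2,
\]
where the last inequality uses $r^2\ge C\delta_2$ for a sufficiently large constant $C$. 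The principal technical friction is bookkeeping the constants so that the condition ``$r\ge C\sqrt{\delta_2}$'' simultaneously gives the alignment bound $|\alpha|\ge(1-o(1))r$ and the tail bound $O(\delta_2)\le\tfrac{1}{100}r^2$; the rank-one reduction of $M$ is the only other subtlety, and it is resolved cleanly by the concavity argument above.
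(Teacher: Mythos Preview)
Your proof has a genuine gap in the rank-one reduction. You correctly note that $\Phi(M)=\min_{|S|=(1-\eps)N}\tfrac{1}{|S|}\sum_{i\in S}(X_i-\nu)^\top M(X_i-\nu)$ is concave in $M$ (as a pointwise minimum of linear functions), but concavity gives the \emph{opposite} inequality: for the convex combination $M=\sum_i\lambda_i v_iv_i^\top$ one has $\Phi(M)\ge\sum_i\lambda_i\,\Phi(v_iv_i^\top)$, not $\le$. Indeed, since every $v_iv_i^\top$ is feasible for the dual and $M$ is optimal, $\Phi(v_iv_i^\top)\le\Phi(M)$ for each $i$, so testing the eigenvectors cannot give you a rank-one witness with value $\ge\mathrm{OPT}$. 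This is not a mere sign slip you can reverse: maxima of concave functions over convex compact sets need not lie at extreme points (e.g.\ $\min(x,1-x)$ on $[0,1]$), so the optimal $M$ may genuinely fail to be rank-one. Your alignment bound $|\alpha|\ge(1-o(1))r$ then collapses, because it compares the upper bound $1+\delta_2+2|\alpha|\delta_1+\alpha^2$ against $1+0.9r^2$, and you have only established $\Phi(vv^\top)\le\Phi(M)$, not $\ge$.

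The repair is exactly what the paper does: skip the rank-one reduction and run your good-sample upper bound on $M$ itself. Choosing $S\subset G$ with $|S|=(1-\eps)N$ and using Condition~\eqref{eqn:good-sample-moments} together with $\tr(M)=1$ and $\|M\|_2\le 1$ gives
\[
\mathrm{OPT}=\Phi(M)\le 1+\delta_2+2\delta_1 r+\bigl\langle M,(\mu^\star-\nu)(\mu^\star-\nu)^\top\bigr\rangle,
\]
whence $\langle M,(\mu^\star-\nu)(\mu^\star-\nu)^\top\rangle\ge\tfrac{4}{5}r^2$. Writing $M=\sum\lambda_iv_iv_i^\top$ and $\mu^\star-\nu=\sum\alpha_iv_i$, this reads $\sum_i\lambda_i\alpha_i^2\ge\tfrac{4}{5}r^2$; since $\sum\lambda_i=1$ one first gets $\lambda_1\ge\tfrac{4}{5}$, and then $\sum_{i\ge2}\lambda_i\alpha_i^2\le(1-\lambda_1)(r^2-\alpha_1^2)$ forces $\alpha_1^2\ge\tfrac{3}{4}r^2$. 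With the top eigenvector $v_1$ in hand, your one-dimensional trimmed-mean construction of $\nu'$ goes through unchanged; the paper instead estimates $r$ from $\mathrm{OPT}_{\nu,\eps}$ via Lemma~\ref{lem:ub-lb-opt} and moves $\nu$ by roughly $r$ along $v_1$, but your alternative for this last step is perfectly valid.
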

\begin{proof}
Because $M$ is a feasible solution to the dual SDP~\eqref{eqn:dual-sdp} with parameters $\nu$ and $\eps$, we know that $\OPT_{\nu,\eps} \ge 1 + \Omega(\delta_2)$.
When $\OPT_{\nu, \eps} \ge 1 + \Omega(\delta_2)$, Lemma~\ref{lem:ub-lb-opt} implies that $\normtwo{\mus - \nu} \ge \Omega(\sqrt{\delta_2})$ and $\OPT_{\nu, \eps} \ge 1 + 0.9 \normtwo{\mus - \nu}^2$.

We know $M \succeq 0$ and $\tr(M) = 1$.
Without loss of generality, we can assume $M$ is symmetric.

Since the objective value is the average of the smallest $(1-\eps)N$ entries of $(X_i - \nu)^\top M (X_i - \nu)$ and one way to choose $(1-\eps)N$ entries is to focus on the good samples, using Condition~\eqref{eqn:good-sample-moments},
\begin{align*}
1 + 0.9 & \normtwo{\mus - \nu}^2 \le \OPT_{\nu, \eps} \\
 & \le \frac{1}{|G|}\sum_{i\in G} (X_i - \nu)^\top M (X_i - \nu) \\
 & = \frac{1}{|G|}\sum_{i\in G} \inner{M, (X_i - \mus)(X_i - \mus)^\top + 2(X_i - \mus)(\mus-\nu)^\top + (\mus-\nu)(\mus-\nu)^\top} \\
 & \le 1 + \delta_2 + 2\delta_1 \normtwo{\mus-\nu} + \inner{M, (\mus-\nu)(\mus-\nu)^\top} \\
 & \le 1 + 0.1 \normtwo{\mus-\nu}^2 + \inner{M, (\mus-\nu)(\mus-\nu)^\top} \; .
\end{align*}
Therefore, we have $\inner{M, (\mus - \nu)(\mus - \nu)^\top} \ge \frac{4}{5} \normtwo{\mus-\nu}^2$.

We will continue to show that the top eigenvector of $M$ aligns with $(\nu - \mus)$.
Let $\lambda_1 \ge \lambda_2 \ge \ldots \ge \lambda_d \ge 0$ denote the eigenvalues of $M$, 
and let $v_1, \ldots, v_d$ denote the corresponding eigenvectors.
The conditions on $M$ implies that $\sum_{i=1}^d \lambda_d = 1$.
We decompose $(\mus - \nu)$ and write it as $\mus - \nu = \sum_{i=1}^d \alpha_i v_i$ where $\sum_{i=1}^d \alpha_i^2 = \normtwo{\mus - \nu}^2$.
Using these decompositions, we can rewrite $\inner{M, (\mus - \nu)(\mus - \nu)^\top} = \sum_{i=1}^d \lambda_i \alpha_i^2$.

First observe that $\lambda_1 \ge \frac{4}{5}$, because $\lambda_1 \sum_{i} \alpha_i^2 \ge \sum_{i} \lambda_i \alpha_i^2 \ge \frac{4}{5} \normtwo{\mus - \nu}^2 = \frac{4}{5}\sum_{i} \alpha_i^2$.
Moreover, because $\frac{4}{5} \sum_{i} \alpha_i^2 \le \sum_{i} \lambda_i \alpha_i^2 \le \lambda_1 \alpha_1^2 + (1-\lambda_1)(1-\alpha_1^2) \le \frac{4}{5} \alpha_1^2 + \frac{1}{5} \sum_i \alpha_i^2$, we know that $\inner{v_1 v_1^\top, (\mus - \nu)(\mus - \nu)^\top} = \alpha_1^2 \ge \frac{3}{4} \sum_{i} \alpha_i^2$.
Thus, we have a unit vector $v_1 \in \R^d$ with $\inner{v_1, \mus - \nu} = \alpha_1 \ge \frac{\sqrt{3}}{2} \normtwo{\mus - \nu}$, so the angle between $v_1$ and $\mus - \nu$ is at most $\theta \le \cos^{-1}(\frac{\sqrt{3}}{2})$.

Finally, we can estimate $r$ from the value of $\OPT_{\nu,\eps}$ using Lemma~\ref{lem:ub-lb-opt}, and move $\nu$ in a direction almost aligned with $\mus - \nu$, to obtain a new point $\nu'$ that is on a circle of radius $r' \approx r$ centered at $\nu$.
A basic geometric analysis shows that $\normtwo{\nu' - \mus} \le \frac{3}{4}\normtwo{\nu - \mus}$.
\end{proof}

\end{document}